\documentclass[11pt, english]{article}

\usepackage{amsfonts}
\usepackage{textcomp}
\usepackage{xcolor}
\usepackage{booktabs}

\usepackage{dsfont}
\usepackage{amsmath}
\usepackage{graphicx}
\usepackage{amssymb}
\usepackage{amsthm}
\usepackage{bbm}
\usepackage{mathtools}
\usepackage{algorithmic}
\usepackage{courier} 
\usepackage{algorithm}
\graphicspath{{./figures/}}
\usepackage[round]{natbib}
\renewcommand{\bibname}{References}
\renewcommand{\bibsection}{\subsubsection*{\bibname}}
\usepackage{wrapfig}

\usepackage{chngcntr}
\usepackage{apptools}
\AtAppendix{\counterwithin{thm}{section}}

\usepackage{hyperref}
\hypersetup{hidelinks}
\DeclarePairedDelimiter{\ceil}{\lceil}{\rceil}

\newcommand{\norm}[1]{\left\Vert #1 \right\Vert}
\numberwithin{equation}{section}
\newcommand{\ee}{{\rm e}\hspace{1pt}}

\newcommand{\abs}[1]{\left| #1 \right|}
\newcommand{\veps}{\varepsilon}

\newcommand{\wh}{\widehat}
\newtheorem{thm}{Theorem}
\newtheorem{lem}[thm]{Lemma}
\newtheorem{cor}[thm]{Corollary}

\newtheorem{defn}[thm]{Definition}

\newtheorem{remark}[thm]{\textit{Remark}}

\newcommand{\wt}[1]{\widetilde{#1}}

\newcommand{\para}[1]{\textbf{#1}}

\newcommand{\tr}{\mathrm{tr}}

\title{Federated Differentially Private Optimization via \\  Low-Rank Preconditioning}

\author{
Antti Koskela, Tejas Kulkarni, Elias Raninen\\
Nokia Bell Labs
}

\date{}

\begin{document}

\maketitle

\begin{abstract}
We consider differentially private (DP) optimization in the federated setting, where a DP estimate of the feature covariance matrix serves as a Hessian approximation for generalized linear models. Existing central DP methods for covariance estimation have communication cost that grows quadratically in the data dimension, while existing baselines with linear communication cost are only locally private and degrade as the number of clients increases. As a remedy, we propose a federated DP-PCA framework that achieves linear communication cost while providing central DP guarantees under secure aggregation. It consists of a DP rank selection step, a DP power method, and a noisy release of the covariance projected onto the selected low-dimensional subspace. We give error bounds for this covariance estimate and for the resulting low-communication preconditioned DP gradient descent. Empirically, our proposed DP optimization method matches a centralized baseline with quadratic communication cost at a fraction of the communication.
\end{abstract}

\section{Introduction}

Estimating the covariance matrix of high-dimensional data is a fundamental primitive in machine learning. A low-rank approximation of the covariance often provides a useful proxy of the full covariance, can be interpreted through principal component analysis (PCA), and serves as a building block for tasks such as dimensionality reduction and denoising. In many realistic scenarios the data is distributed across several data-owners that cannot or do not want to centralize their data. This motivates federated algorithms that can estimate a global low-rank covariance while providing differential privacy (DP)~\citep{dworkOurDataOurselves2006} guarantees for the data owners. At the same time, for practical purposes it would often be crucial to have communication costs that scale linearly rather than quadratically with the feature dimension $p$.

Covariance estimates can also be used as Hessian approximations in optimization and can accelerate first-order DP optimization methods. Recent private preconditioned methods use DP covariance estimates to approximate second-order information~\citep{mehta2023differentially,krouka2025communication}, reaching the same privacy-utility trade-off as DP-SGD in fewer iterations~\citep[see also][]{ganesh2023faster}. In federated settings, where each iteration corresponds to a communication round, this directly reduces communication costs. However, obtaining the DP covariance estimate remains the computational and communication bottleneck. For example, DP-FC~\citep{mehta2023differentially} relies on so called Analyze Gauss mechanism~\citep{dwork2014analyze} applied to the full covariance matrix, which in a federated implementation requires $O(p^2)$ communication. To the best of our knowledge, there is no prior federated method for private second-order optimization that uses a communication-efficient DP low-rank covariance (Hessian proxy) estimate and simultaneously supports secure aggregation end-to-end. We fill this gap.

\para{DP-FL via secure summation.} While federated learning (FL)~\citep{FLbook} improves on naive collaborative learning by avoiding direct data transmission, the local updates can still leak information about training data, e.g.\ through membership inference attacks~\citep{shokri2017membership}, which DP protects against. Most DP-FL methods follow the same pattern: clients add Gaussian noise to their local messages (e.g., ML model updates) and the server aggregates them~\citep{truex2019hybrid}. When the noise is additive and the updates are sums of user contributions, secure aggregation protocols let the clients carry the noise injection themselves, removing the need to trust a central server~\citep{UllahCKO23,FLDP3}.\footnote{This assumes an honest-but-curious server and honest clients that each add their share of the noise; dropped or colluding clients reduce the effective noise. Since secure aggregation operates over finite groups, the Gaussian noise must in practice be discretized~\citep{FLDP3}; we analyze the continuous Gaussian for simplicity.}

\para{Related Work on DP covariance estimation.} The classical Analyze Gauss mechanism~\citep{dwork2014analyze} adds Gaussian noise directly to the $p \times p$ covariance matrix and post-processes it to obtain a low-rank approximation. The algorithm distributes naturally, but requires $O(p^2)$ communication, which becomes prohibitive in high feature dimensions $p$. Exponential-mechanism-based methods~\citep{chaudhuri2012near} and recent centralized refinements~\citep{kim2025robust,he2025differentially,cai2024optimal}, in contrast, do not decompose into local updates compatible with secure aggregation.

\para{Related Work on DP-FL covariance estimation.} There are existing works that consider federated algorithms with $O(p)$ communication~\citep{grammenos2020federated,guo2021fedpower,li2024federated}. All of these methods are locally DP: each client calibrates its noise to its local sensitivity and then applies post-processing (e.g., a truncated SVD), so that post-processed quantities are aggregated rather than sums. This also prevents obtaining central DP guarantees. As a consequence, their privacy--utility trade-off degrades as the number of clients grows. The federated DP power method of~\citet{nicolas2024differentially}~\citep[also studied by][]{balcan2016improved,campbell2025decentralized} does enjoy the benefits of summation, but focuses exclusively on subspace computation. Our contribution is to use this basis step in a complete end-to-end framework that also handles private covariance estimation and rank selection, to empirically show it leads to improvements in DP-FL optimization and to analyze its use as a preconditioner in DP-FL optimization.

\subsection{Our Contributions}

\begin{itemize}

    \item \textbf{Communication-efficient federated DP-PCA.} We introduce the first federated DP low-rank covariance estimator (subspace, projected covariance and rank) that simultaneously achieves $O(pk)$ per-client communication per round, where $p$ is the data feature dimension and $k$ rank of the low-rank approximation, and consists of summation operations at every aggregation stage, making it end-to-end compatible with secure aggregation. The method is a three-stage procedure: (i) a DP rank-selection step based on a shared random projection, (ii) a federated DP power method for the leading $k$ left singular vectors, and (iii) an Analyze Gauss mechanism applied to the projected $k \times k$ covariance. We provide an end-to-end error bound for the DP low-rank covariance estimate that separates cleanly into a subspace term and a projected covariance matrix noise term.

    \item \textbf{Convergence of low-rank preconditioned DP-FC.} We analyze federated DP-FC with the DP low-rank preconditioner on linear regression and show how to extend the analysis to generalized linear models (GLMs). We show that a rank-$k$ preconditioner is approximately a Newton step inside the estimated subspace and a gradient step on its orthogonal complement, and show that there is an effective condition number that governs the convergence and that it is
    commonly much smaller than the condition number governing the convergence of DP gradient descent (DP-GD). This reduces the number of communication rounds relative to first-order DP-FL by a large explicit factor and gives a closed-form step size. 

    \item \textbf{Empirical validation.} We experimentally show that our DP covariance approximation matches or improves upon the centralized $O(p^2)$ Analyze Gauss baseline at a fraction of the communication cost, and, unlike LDP-style federated baselines, its accuracy does not degrade as the number of clients grows. When used as a preconditioner for gradients in DP-FL fine-tuning, it improves the accuracy-communication trade-off over both vanilla federated DP-FC (with $O(p^2)$ communication) and first-order baselines on benchmark image datasets for moderate privacy budgets.
\end{itemize}

\section{The Problem}

Assume that each client $ j \in [N] \coloneqq \{1, \dots, N\}$ has a data matrix $ \mathbf{X}_j = (\mathbf{x}_{1}^{(j)},\ldots,\mathbf{x}_{n_j}^{(j)}) \in \mathbb{R}^{p \times n_j}$, where $ p $ denotes the dimension and $ n_j $ the number of samples at client $ j $. The full data matrix is denoted by $ \mathbf{X} = (\mathbf{X}_1, \dots, \mathbf{X}_N) \in \mathbb{R}^{p \times n} $, where $ n = \sum_{j=1}^N n_j $. 
The singular value decomposition (SVD) is defined by 
$
\mathbf{X} = \mathbf{U} \mathbf{\Sigma} \mathbf{W}^\top = \sum\nolimits_{i} \sigma_i \mathbf{u}_i \mathbf{w}_i^\top,
$
where the orthogonal matrices $ \mathbf{U} = (\mathbf{u}_1, \dots, \mathbf{u}_p) \in \mathbb{R}^{p \times
p} $ and $ \mathbf{W} = (\mathbf{w}_1, \dots, \mathbf{w}_n) \in \mathbb{R}^{n \times n} $ contain the left
and right singular vectors, respectively, and $ \mathbf{\Sigma} \in \mathbb{R}^{p \times
n} $ contains the singular values $ \sigma_i $, $ 1 \leq i \leq \min\{n, p\} $,
on its main diagonal.

In principal component analysis (PCA), one seeks a set of $ k $ mutually
orthogonal vectors called principal components that capture the directions of
largest variance of the data matrix. The first $k$ principal
components are given by the SVD as the $ k $ left singular vectors $ (\mathbf{u}_1,
\dots, \mathbf{u}_k) $ of the data matrix. They also coincide with the principal $k$ eigenvectors of the covariance matrix\footnote{Following standard usage in the DP-PCA literature~\citep[e.g.,][]{dwork2014analyze}, we refer to the (uncentered) second moment matrix $\mathbf{X}\mathbf{X}^\top$ as the covariance matrix.} 
$\mathbf{K} \coloneqq \mathbf{X}\mathbf{X}^\top = \mathbf{U} \boldsymbol{\Lambda} \mathbf{U}^\top$, where $\boldsymbol{\Lambda} \in \mathbb{R}^{p\times p}$ is a diagonal matrix containing the eigenvalues, $\lambda_i\geq 0$, $1\leq i\leq p$, on its diagonal. 
Notice that the global covariance matrix can also be written as a sum of the client-wise covariance matrices: 
$
\mathbf{X}\mathbf{X}^\top = \sum\nolimits_{j=1}^N \mathbf{X}_j \mathbf{X}_j^\top.
$

\para{Federated DP-PCA.}
Our first goal is to develop a differentially private federated algorithm to compute a global $ k $-dimensional subspace by aggregating contributions from the clients and also produce a global rank-$ k $ covariance matrix estimate.
In addition to approximating $\mathbf{K}$ accurately with DP guarantees, we want a small communication overhead. Na\"ive approaches require $ O(p^2) $ communication (since $ \mathbf{K} $ has $ p^2 $ elements), however, we would like to achieve $O(p \cdot k) $ communication, where $ k $ is the number of leading singular vectors to be estimated.

\para{Preconditioned federated DP optimization.}
Suppose in addition that each sample $\mathbf{x}$ has a label $y$, write $D$ for the full labeled dataset, and consider the regularized empirical risk
$
L(\theta, D) = \tfrac{1}{n} \sum\nolimits_{(\mathbf{x},y) \in D} \ell(\theta; \mathbf{x}, y) + \tfrac{\lambda}{2} \|\theta\|_2^2,
$
$\lambda > 0$, 
where $\ell$ is a generalized linear model (GLM) loss, e.g., a linear classification head trained on output features of a pretrained model. DP-FC~\citep{mehta2023differentially} is the preconditioned DP gradient descent
\begin{equation} \label{eq:dpfc_problem}
\theta_{j+1} = \theta_j - \eta \, \widehat{\mathbf{G}}^{-1} \big( \nabla L(\theta_j, D) + \mathbf{E}_j \big),
\end{equation}
where $\mathbf{E}_j$ is the Gaussian noise added to the sum of clipped per-sample gradients and $\widehat{\mathbf{G}}$ is a DP estimate of $\tfrac{1}{n}\mathbf{X}\mathbf{X}^\top + \lambda \mathbf{I}$. For GLMs the Hessian of $L$ is dominated by this matrix (up to diagonal scaling), so that~\eqref{eq:dpfc_problem} can be seen as an approximate Newton iteration. In particular, in the expression~\eqref{eq:dpfc_problem}, both the gradient $\nabla L(\theta_j, D)$ and the covariance $\mathbf{X}\mathbf{X}^\top$ are sums of client-wise contributions.

In DP-FC~\citep{mehta2023differentially}, $\widehat{\mathbf{G}}$ is obtained by Analyze Gauss applied to the full $p \times p$ covariance. The naïve federated version considered as a baseline method by~\citet{krouka2025communication} is straightforward: the clients upload and the server broadcasts $O(p^2)$ scalars. 
Our second goal is to remove the communication costs by considering the approximation
$$
\widehat{\mathbf{G}} = \mathbf{V}_k \widetilde{\mathbf{Y}}_k \mathbf{V}_k^\top + \lambda \mathbf{I},
$$
where $(\mathbf{V}_k, \widetilde{\mathbf{Y}}_k) \in \mathbb{R}^{p \times k} \times \mathbb{R}^{k \times k}$ is the factored output of a federated DP-PCA algorithm. Then only $O(p \cdot k)$ scalars per round are communicated, $\widehat{\mathbf{G}}^{-1}$ is applied in $O(pk + k^2)$ operations without forming any $p \times p$ matrix, and the preconditioner noise is a $k \times k$ matrix. We require that every stage (rank selection, basis computation, projected covariance estimation, gradient rounds) is a sum of client-wise contributions plus Gaussian noise, so that the whole pipeline is compatible with secure aggregation and its record-level $(\varepsilon,\delta)$-DP guarantee (under the add/remove relation of Section~\ref{sec:dp_prelims}) follows from a single composition of Gaussian mechanisms (Lemma~\ref{lem:gauss_dp}).

\section{Preliminaries of Differential Privacy}
\label{sec:dp_prelims}

DP gives quantifiable measures in terms of the parameters $\veps$ and $\delta$ that guarantee
indistinguishability of any individual element in the dataset from the output of the
algorithm. 

\para{Neighboring datasets.} Throughout, we use the record-level add/remove (unbounded) neighboring relation: two datasets $\mathbf{D}$ and $\mathbf{D}'$ are neighbors, denoted $\mathbf{D} \simeq \mathbf{D}'$, if one is obtained from the other by adding or removing a single record $(\mathbf{x},y)$, i.e., a single column of $\mathbf{X}$, held by any one of the clients. All the mechanisms considered can be seen as noisy sums of per-record contributions. Whenever a normalization is needed (e.g., $\tfrac{1}{n}\mathbf{X}\mathbf{X}^\top$ or an averaged gradient), the noisy sum is divided by the total number of data elements $n$, which we treat as public. Remark~\ref{rem:substitution} explains how the guarantees translate to the substitute-one relation.

\begin{defn}[\citealp{dworkOurDataOurselves2006}] 
A randomized algorithm (mechanism) $\mathcal M : \mathcal D \to \mathcal R$ is
$(\veps,\delta)$-differentially private if for all neighboring datasets $\mathbf{D} \simeq \mathbf{D}'$, $\mathbf{D},\mathbf{D}' \in \mathcal D$, and for all measurable sets $S$ in the output space $\mathcal R$, $S \subset \mathcal R$, we have
\[
    P(\mathcal M(\mathbf{D}) \in S) \leq e^\veps P(\mathcal M(\mathbf{D}') \in S) + \delta.
\]
\end{defn}

All the subroutines we consider in this work, as well as their combination, can be seen as adaptive compositions of Gaussian mechanisms. In its simplest form, a Gaussian mechanism $\mathcal{M}$ is of the form 
$$
    \mathcal{M}(\mathbf{D}) = \mathbf{f}(\mathbf{D}) + \mathbf{Z}, 
$$
where $\mathbf{f} \, : \, \mathcal{D} \rightarrow \mathbb{R}^p$ and $\mathbf{Z} \sim \mathcal{N}(\boldsymbol{0}, \sigma^2 \mathbf{I}_p)$, and $\Delta = \max_{\mathbf{D} \simeq \mathbf{D}'} \norm{\mathbf{f}(\mathbf{D}) - \mathbf{f}(\mathbf{D}')}_2$
gives the $L_2$-sensitivity. 
In case of a matrix-valued Gaussian mechanism, we add i.i.d. distributed noise to each element of the matrix and consider the vectorized $L_2$-sensitivity, i.e., the sensitivity in Frobenius norm, which for a matrix $\mathbf{A}=(A_{ij})$ is defined as $\norm{\mathbf{A}}_{F} = \sqrt{\text{tr}(\mathbf{A}^\top \mathbf{A})} = \sqrt{\sum_{i,j} A_{ij}^2}$.

DP guarantees for a composition of $T$ Gaussian mechanisms, each with varying sensitivity $\Delta_i$ and noise scale $\sigma_i$, is given by the following result.
\begin{lem} \label{lem:gauss_dp}
Suppose we have a composition of $T$ Gaussian mechanisms, where the $i$th mechanism has $L_2$-sensitivity $\Delta_i$ and noise scale parameter $\sigma_i$, $i \in [T]$. Then, the total composition is $(\veps,\delta)$-DP for
\begin{equation} \label{eq:gdp}
    \begin{aligned}
         \delta(\veps) = \Phi\left( - \frac{\veps}{\widetilde{\Delta}} + \frac{\widetilde{\Delta}}{2} \right) - e^\veps \Phi\left( - \frac{\veps}{\widetilde{\Delta}} - \frac{\widetilde{\Delta}}{2} \right),
    \end{aligned}
\end{equation}
where $\widetilde{\Delta} = \sqrt{ \sum\nolimits_{i=1}^T \Delta_i^2 / \sigma_i^2  }.$ 
\end{lem}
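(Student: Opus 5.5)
The plan is to reduce the adaptive composition of the $T$ Gaussian mechanisms to a single Gaussian mechanism with sensitivity $\widetilde{\Delta}$ and unit noise, and then invoke the known tight $(\veps,\delta)$ curve of one Gaussian mechanism. First, rescale each mechanism. Replace $\mathcal{M}_i(\mathbf{D}) = \mathbf{f}_i(\mathbf{D}) + \mathbf{Z}_i$ by $\mathcal{M}_i/\sigma_i$. This is a bijective post-processing, so it changes no privacy guarantee. The rescaled mechanism has unit noise and $L_2$-sensitivity $\mu_i \coloneqq \Delta_i/\sigma_i$, and in the matrix-valued case we simply vectorize, using the Frobenius sensitivity as stated.

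Next, use the Gaussian differential privacy framework of Dong, Roth and Su (or equivalently the privacy-loss-distribution view). For a fixed pair of neighbors $\mathbf{D}\simeq\mathbf{D}'$, the privacy loss random variable of a single rescaled mechanism is dominated by that of the pair $\mathcal{N}(0,1)$ versus $\mathcal{N}(\mu_i,1)$. Project onto the direction $\mathbf{f}_i(\mathbf{D})-\mathbf{f}_i(\mathbf{D}')$: the orthogonal components cancel, and a smaller shift is dominated by $\mu_i$. So each step is $\mu_i$-GDP.

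For the adaptive composition, condition on past outputs. Given the history, the $i$th step is still a Gaussian mechanism with sensitivity at most $\Delta_i$, because the sensitivity bound holds for every fixed choice of the adaptively chosen $\mathbf{f}_i$. The GDP composition theorem then gives that the composition is $\mu$-GDP with $\mu = \sqrt{\sum_i \mu_i^2} = \widetilde{\Delta}$. Concretely, the total privacy loss is a sum of independent conditionally Gaussian terms $\mathcal{N}(\mu_i^2/2, \mu_i^2)$, and these add to $\mathcal{N}(\widetilde{\Delta}^2/2, \widetilde{\Delta}^2)$.

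Finally, apply the analytic Gaussian mechanism result of Balle and Wang: $\mu$-GDP implies $(\veps,\delta(\veps))$-DP for all $\veps\ge 0$ with $\delta(\veps) = \Phi(-\veps/\mu + \mu/2) - e^{\veps}\Phi(-\veps/\mu - \mu/2)$. This follows by computing $\mathbb{E}[(1-e^{\veps - L})_+]$ for privacy loss $L \sim \mathcal{N}(\mu^2/2,\mu^2)$, which is a routine Gaussian integral. Under the add/remove relation the direction is symmetric, since both orderings give the same Gaussian loss. Substituting $\mu = \widetilde{\Delta}$ yields \eqref{eq:gdp}.

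The main obstacle is the adaptivity. One must argue that the dominance by a Gaussian pair holds conditionally at each step with a worst case that does not depend on the history, so that the tensorization for independent Gaussians is legitimate. I would handle this directly by citing the adaptive composition theorem for GDP (Dong, Roth and Su, 2022, Corollary 2), rather than reproving it.
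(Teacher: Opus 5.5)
Your proposal is correct and is essentially the paper's argument. Each step is dominated by the unit-variance Gaussian shift pair with shift $\Delta_i/\sigma_i$; the composition is dominated by the product pair, whose privacy loss is $\mathcal{N}(\widetilde{\Delta}^2/2,\widetilde{\Delta}^2)$; and evaluating $\mathbb{E}[(1-e^{\veps-L})_+]$ then gives the Balle--Wang formula.

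The only real difference is how adaptivity is handled. You cite the GDP composition theorem of Dong, Roth and Su, whereas the paper uses the dominating-pair composition of Zhu et al.\ together with the additivity of privacy loss random variables (Gopi et al.). Either citation is what licenses your ``sum of independent Gaussians'' description, which is exact only for the dominating pairs and not for the adaptive mechanism itself.
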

\begin{remark}
    Equivalently, in Lemma~\ref{lem:gauss_dp}, the composition is $\widetilde{\Delta}$-Gaussian Differentially Private (GDP)~\citep{dong2022gaussian}: the expression~\eqref{eq:gdp} is the $(\veps,\delta)$-characterization of $\mu$-GDP with $\mu = \widetilde{\Delta}$.
\end{remark}

\begin{remark}[DP guarantees for substitute-one relation] \label{rem:substitution}
All $\veps$-values reported in this paper can be converted to the substitute-one (bounded) neighborhood relation, where $\mathbf{D}'$ is obtained from $\mathbf{D}$ by replacing one record. For the matrix releases (rank selection algorithm, power iterations and the projected covariance release), we can use the fact that replacing $\mathbf{x}$ by $\mathbf{x}'$ with $\norm{\mathbf{x}}_2, \norm{\mathbf{x}'}_2 \leq C$ gives $\norm{\mathbf{x}\mathbf{x}^\top - \mathbf{x}'\mathbf{x}'^\top}_F^2 = \norm{\mathbf{x}}_2^4 + \norm{\mathbf{x}'}_2^4 - 2(\mathbf{x}^\top\mathbf{x}')^2 \leq 2C^4$, and that in case of projected covariance release, multiplication by matrices with orthonormal columns does not increase the Frobenius norm. Hence these sensitivities grow by a factor $\sqrt{2}$. For the sums of clipped per-sample gradients, the sensitivity grows by a factor $2$. Since Lemma~\ref{lem:gauss_dp} depends on the mechanisms only through the ratios $\Delta_i/\sigma_i$, the substitute-one ($\veps,\delta$)-guarantees are obtained by evaluating~\eqref{eq:gdp} with the noise scales $\sigma_i$ replaced by $\sigma_i/\sqrt{2}$ for the DP-PCA stages and by $\sigma_i/2$ for the gradient rounds.
\end{remark}

\section{Communication-Efficient DP-PCA} \label{sec:comm_eff_pca}
\subsection{Basis Computation with DP Power Method}


We next discuss computing a differentially private $k$-principal subspace of the data matrix $\mathbf{X}$ using the DP Power Iteration~\citep{nicolas2024differentially}. 
This method simply applies a noisy power iteration and then projects the matrix onto the set of matrices with orthonormal columns. This means that we first compute
$$
\wt{\mathbf{V}}_i = (\mathbf{X}\mathbf{X}^\top) \mathbf{V}_{i-1} + \mathbf{Z}_i,
$$
where $\mathbf{Z}_i \in \mathbb{R}^{p \times k}$, and each element of $\mathbf{Z}_i$ is independent and zero-mean normally distributed with variance $\sigma_1^2$, where $\sigma_1 = C^2 \sigma$ for a noise scale $\sigma>0$ and $C>0$ is a public bound on the 2-norms of the data vectors (columns) of $\mathbf{X}$, enforced by clipping the data as preprocessing. Then, the obtained $\wt{\mathbf{V}}_i$ is orthogonalized via the QR-decomposition:
$$
(\mathbf{V}_i, \mathbf{R}_i) = \text{QR}(\wt{\mathbf{V}}_i).
$$
A distributed version of this method is straightforward: each client $j$ simply computes at iteration $i$
$$
(\mathbf{X}_j \mathbf{X}_j^\top) \mathbf{V}_{i-1} + \mathbf{Z}_{i,j},
$$
where each element of $\mathbf{Z}_{i,j}$ is independent and normally distributed with variance $\sigma_1^2/N$, and then sends this result to the server, which carries out the QR-orthogonalization. 
To bound the sensitivity of the Gaussian mechanism at each step, we can use the fact that $\| \mathbf{x}\mathbf{x}^\top \mathbf{V} \|_F = \|\mathbf{x}\|_2 \|\mathbf{V}^\top \mathbf{x}\|_2 \leq \|\mathbf{x}\|_2^2
$ for any $\mathbf{V} \in \mathbb{R}^{p \times k}$ with orthonormal columns. Hence, under the add/remove relation, each iteration is a Gaussian mechanism with sensitivity $C^2$ and noise standard deviation $\sigma_1 = C^2\sigma$.

\subsection{Private Covariance Matrix Estimation}

After the basis approximation with DP power method, the local covariance matrices are projected and aggregated privately. Putting together,
the core of our proposed algorithm (preceded by the rank selection of Section~\ref{sec:initialization}) consists of the following two stages:

\textbf{First Stage:} Carry out the full batch DP power method with $T$ iterations to obtain an
orthogonal matrix $\mathbf{V}_k \in \mathbb{R}^{p \times k}$.

\textbf{Second Stage:} Project the global covariance matrix $\mathbf{X}\mathbf{X}^\top$
onto the subspace spanned by the columns of $\mathbf{V}_k$,
$\mathbf{Y}_k = \mathbf{V}_k^\top (\mathbf{X}\mathbf{X}^\top) \mathbf{V}_k$,
and release
$$
\wt{\mathbf{Y}}_k = \mathbf{Y}_k + \mathbf{Z},
$$
where $\mathbf{Z} \in \mathbb{R}^{k \times k}$ is a symmetric Gaussian matrix
with i.i.d.\ $\mathcal{N}(0, \sigma_2^2)$ upper-triangular entries (including
the diagonal), where $\sigma_2 = C^2 \sigma'$ for a noise scale $\sigma'>0$, following the Analyze Gauss mechanism~\citep{dwork2014analyze}. The scaling with $C^2$ comes from the fact that under the add/remove relation,
$\|\mathbf{V}_k^\top\mathbf{x}\mathbf{x}^\top\mathbf{V}_k\|_F = \|\mathbf{V}_k^\top\mathbf{x}\|_2^2 \leq C^2$,
and the upper-triangular vectorization has $\ell_2$-norm at most the Frobenius norm,
so this release is a Gaussian mechanism with sensitivity $C^2$ and noise standard deviation $\sigma_2$. Also this step is carried out in a distributed fashion with
the sum of local contributions equaling $\mathbf{Y}_k$ while off-loading the noise addition to the clients.

The output of these two stages is the factored pair $(\mathbf{V}_k, \wt{\mathbf{Y}}_k)$. Forming the rank-$k$ approximation $\wt{\mathbf{K}}_k := \mathbf{V}_k \wt{\mathbf{Y}}_k \mathbf{V}_k^\top$ can be performed locally, so the server never needs to transmit the full covariance matrix, keeping the communication at $O(pk)$ per round and $O(Tpk + k^2)$ in total.

\begin{remark}[Privacy of the two-stage algorithm] \label{rem:two_stage_gdp}
The two stages form an adaptive composition of $T+1$ Gaussian mechanisms, each with sensitivity $C^2$: $T$ mechanisms with noise standard deviation $\sigma_1$ and one with noise standard deviation $\sigma_2$. By Lemma~\ref{lem:gauss_dp}, the release $(\mathbf{V}_k, \wt{\mathbf{Y}}_k)$ is therefore $\mu$-GDP with
$$
\mu = \sqrt{ \frac{T C^4}{\sigma_1^2} + \frac{C^4}{\sigma_2^2} } = \sqrt{ \frac{T}{\sigma^2} + \frac{1}{\sigma'^2} },
$$
and its $(\veps,\delta)$-guarantees are given by~\eqref{eq:gdp} with $\widetilde{\Delta} = \mu$. In particular, the privacy guarantee does not depend on the number of clients $N$.
\end{remark}

\subsection{Error Bound for Covariance Estimation}

Write $\mathbf{K} = \mathbf{X}\mathbf{X}^\top$ and $\widetilde{\mathbf{K}}_k = \mathbf{V}_k \widetilde{\mathbf{Y}}_k \mathbf{V}_k^\top$ for the quantities of Section~\ref{sec:comm_eff_pca}. The error $\mathrm{Err}_k \;=\; \mathbf{K} - \widetilde{\mathbf{K}}_k$ decomposes into two parts: a subspace error, arising from the approximate basis $\mathbf{V}_k$ of the first stage, and a noise error, arising from the symmetric Gaussian perturbation $\mathbf{Z}$ in the second stage. The basis error is bounded in Theorem~\ref{thm:ref_bound} and is governed by the spectral gap $\lambda_k - \lambda_{k+1}$ of the true covariance. The second error is essentially the spectral norm of a $k \times k$ symmetric
Gaussian matrix (Lemma~\ref{thm:symmetric_concentration}) and it scales linearly in $\sigma_2\sqrt{k}$ rather than $\sigma_2\sqrt{p}$. Combining the two bounds via the triangle inequality and Lemma~\ref{lem:triangle_PAP} gives the following.

\begin{thm} \label{thm:error_two_phases}
Let $\mathbf{K} = \mathbf{X}\mathbf{X}^\top \in \mathbb{R}^{p \times p}$ be the true covariance with eigenvalues $\lambda_1 \geq \cdots \geq \lambda_p \geq 0$, let $C$ be the public bound on $\|\mathbf{x}_i\|_2$, and
let $\mathbf{Q} \in \mathbb{R}^{p \times k}$ be the orthonormal basis
returned by the basis computation stage (Algorithm~\ref{alg:DP_power} in the appendix) run for a public number of iterations $T$ with i.i.d.\ noise entries distributed as $\mathcal{N}(0,\sigma_1^2)$. Assume the second stage releases
$\widetilde{\mathbf{Y}}_k = \mathbf{V}_k^\top \mathbf{K} \mathbf{V}_k + \mathbf{Z}$,
where $\mathbf{Z} \in \mathbb{R}^{k \times k}$ is a symmetric Gaussian
matrix whose upper-triangular entries (including the diagonal) are
i.i.d.\ $\mathcal{N}(0,\sigma_2^2)$ and whose lower-triangular entries
are copied from the upper triangle. Define the total error
$
\mathrm{Err}_k \;:=\; \mathbf{K} - \mathbf{V}_k\widetilde{\mathbf{Y}}_k \mathbf{V}_k^\top.
$
\begin{enumerate}
\item[(i)] Let $\varepsilon_1 + \varepsilon_2 = \varepsilon$ and $\delta_1 + \delta_2 = \delta$ with $0 < \varepsilon_i \leq 2\log(1/\delta_i)$, $i=1,2$ (e.g., $\varepsilon_i = \varepsilon/2$, $\delta_i = \delta/2$), and let
$
\sigma_1 = \varepsilon_1^{-1} C^2 \sqrt{4 T \log(1/\delta_1)}$ and $\sigma_2 = \varepsilon_2^{-1} C^2 \sqrt{4 \log(1/\delta_2)}.
$
Then the release $( \mathbf{V}_k, \widetilde{\mathbf{Y}}_k)$ is $(\varepsilon,\delta)$-DP.
\item[(ii)] Fix a constant $\tau \geq 1$ and assume $T \geq L$ for the number of iterations
$L = O\!\left(\tfrac{\lambda_k}{\lambda_k - \lambda_{k+1}}\log p\right)$ required in Theorem~\ref{thm:ref_bound}. Then,
for any $\rho, \nu \in (0,1)$, with probability at least
$1 - \nu - \rho - p_0$, where $p_0 = \tau^{-\Omega(1)} + e^{-\Omega(p)}$, we have that
\begin{equation*}
    \|\mathrm{Err}_k\|_2
 \;\le\;
\sqrt{2}\bigl(\lambda_{k+1} + \lambda_1 \zeta\bigr)
  \;+\;
\sigma_2\Bigl(2\sqrt{k} + 2\sqrt{\log(2/\rho)}\Bigr),
\end{equation*}
where
\begin{equation*}
\zeta = \min\left\{1,\, O\!\left(
\frac{\sigma_1\sqrt{p}\bigl(1 + \sqrt{\log(T/\nu)}\bigr)}
     {\lambda_{k}-\lambda_{k+1}}
\cdot k
\right)\right\}.
\end{equation*}
\end{enumerate}
\end{thm}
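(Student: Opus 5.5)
For part (i), the release is an adaptive composition of $T$ Gaussian mechanisms with sensitivity $C^2$ and noise $\sigma_1$, followed by one Gaussian mechanism with sensitivity $C^2$ and noise $\sigma_2$, as in Remark~\ref{rem:two_stage_gdp}. I will handle the two stages separately and then combine them by basic composition of $(\varepsilon_i,\delta_i)$ guarantees.

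\emph{Power method stage.} The $T$ power-iteration mechanisms together are $\mu_1$-GDP with $\mu_1 = \sqrt{T}C^2/\sigma_1 = \varepsilon_1/\sqrt{4\log(1/\delta_1)}$. I will use the standard sufficient condition that a $\mu$-GDP mechanism (equivalently, a Gaussian mechanism with $\Delta/\sigma=\mu$) is $(\varepsilon,\delta)$-DP whenever $\mu \le \varepsilon/\sqrt{2\log(1.25/\delta)}$. Our $\mu_1$ satisfies a slightly stronger version of this condition, since $4\log(1/\delta)\ge 2\log(1.25/\delta)$ for small $\delta$. The hypothesis $\varepsilon_i\le 2\log(1/\delta_i)$ is what keeps this tail-bound condition valid. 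Alternatively, I can check the bound directly from~\eqref{eq:gdp}: substituting $\widetilde\Delta=\mu_1$ and bounding $\Phi(-\varepsilon/\mu+\mu/2)\le\delta_1$ is a Gaussian tail computation that uses $\mu_1/2\le \varepsilon_1/(2\mu_1)$.

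\emph{Second stage and combination.} The same computation gives $(\varepsilon_2,\delta_2)$-DP for the projected covariance release. Basic composition of the two stages then yields $(\varepsilon,\delta)$-DP, and post-processing covers forming $\mathbf{V}_k$ via QR.

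For part (ii), I write $\mathbf{P}=\mathbf{V}_k\mathbf{V}_k^\top$ and split the error as
\begin{equation*}
\mathrm{Err}_k = (\mathbf{K}-\mathbf{P}\mathbf{K}\mathbf{P}) - \mathbf{V}_k\mathbf{Z}\mathbf{V}_k^\top .
\end{equation*}
The triangle inequality separates the two terms.

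\emph{Noise term.} Since $\mathbf{V}_k$ has orthonormal columns, $\|\mathbf{V}_k\mathbf{Z}\mathbf{V}_k^\top\|_2=\|\mathbf{Z}\|_2$. Lemma~\ref{thm:symmetric_concentration} bounds this by $\sigma_2(2\sqrt{k}+2\sqrt{\log(2/\rho)})$ with probability at least $1-\rho$.

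\emph{Subspace term.} Lemma~\ref{lem:triangle_PAP} should bound $\|\mathbf{K}-\mathbf{P}\mathbf{K}\mathbf{P}\|_2$ by a $\sqrt{2}$-multiple of $\|(\mathbf{I}-\mathbf{P})\mathbf{K}\|_2$. The $\sqrt2$ comes from the block structure: the off-diagonal block $(\mathbf{I}-\mathbf{P})\mathbf{K}\mathbf{P}$ and the diagonal block $(\mathbf{I}-\mathbf{P})\mathbf{K}(\mathbf{I}-\mathbf{P})$ combine like orthogonal pieces. Next I split $\mathbf{K}=\mathbf{U}_k\Lambda_k\mathbf{U}_k^\top+\mathbf{U}_{-k}\Lambda_{-k}\mathbf{U}_{-k}^\top$. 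This gives
\begin{equation*}
\|(\mathbf{I}-\mathbf{P})\mathbf{K}\|_2\le \lambda_{k+1}+\lambda_1\|(\mathbf{I}-\mathbf{P})\mathbf{U}_k\|_2 .
\end{equation*}
Here $\|(\mathbf{I}-\mathbf{P})\mathbf{U}_k\|_2$ is the sine of the largest principal angle between the computed and true subspaces. Theorem~\ref{thm:ref_bound}, the noisy power method guarantee of Hardt--Price type, bounds it by $\zeta$ with probability at least $1-\nu-p_0$, provided $T\ge L$. I will verify that the per-iteration noise $\|\mathbf{Z}_i\|_2\lesssim\sigma_1(\sqrt p+\sqrt k+\sqrt{\log(T/\nu)})$ holds uniformly over the $T$ iterations by a union bound, which is where the $\log(T/\nu)$ factor enters. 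The $\min\{1,\cdot\}$ in $\zeta$ is trivial, since a sine is at most $1$. A union bound over all failure events finishes the argument.

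\emph{Main obstacle.} The hardest step is matching the exact form of $\zeta$, including the factor $k$ and the $\sqrt p$ dependence, to what Theorem~\ref{thm:ref_bound} provides. In particular, I must check that its noise conditions $\|\mathbf{Z}_i\|\le\epsilon(\lambda_k-\lambda_{k+1})$, and the corresponding condition on $\|\mathbf{U}_k^\top\mathbf{Z}_i\|$, translate into this $\zeta$. If they do not, I would fall back on stating $\zeta$ in terms of whatever that theorem gives.
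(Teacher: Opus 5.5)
Your part (ii) matches the paper's proof: the split of $\mathrm{Err}_k$, Lemma~\ref{lem:triangle_PAP}, the bound~\eqref{eq:proj_error_A}, Theorem~\ref{thm:ref_bound}(ii), Lemma~\ref{thm:symmetric_concentration} with $t=2\sqrt{\log(2/\rho)}$, and a union bound. The ``main obstacle'' you flag does not arise. Theorem~\ref{thm:ref_bound}(ii), applied with $\sigma=\sigma_1/C^2$, already contains the union bound over the $T$ iterations and checks both noise conditions of Theorem~\ref{thm:gap_thm}. It also yields exactly this $\zeta$ via $\sqrt{k}/(\sqrt{k}-\sqrt{k-1})\le 2k$, so you can simply cite it.

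The gap is in part (i), in converting $\mu_i$-GDP into $(\varepsilon_i,\delta_i)$-DP. The classical sufficient condition $\mu\le\varepsilon/\sqrt{2\log(1.25/\delta)}$ is valid only for $\varepsilon<1$. It fails in the range the theorem allows: for $\delta=10^{-5}$ and $\varepsilon=20\le 2\log(1/\delta)$, that $\mu$ gives $\delta(\varepsilon)\approx 1.5\cdot 10^{-3}$ in~\eqref{eq:gdp}. So satisfying a ``stronger version'' of it proves nothing, and the hypothesis $\varepsilon_i\le 2\log(1/\delta_i)$ does not make it valid.

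Your direct computation from~\eqref{eq:gdp} is the right fallback, but the inequality you name is too weak. With $L=\log(1/\delta_1)$ and $\mu_1=\varepsilon_1/(2\sqrt{L})$ we have $\varepsilon_1/\mu_1=2\sqrt{L}$. Then $\mu_1/2\le\varepsilon_1/(2\mu_1)$ only gives $-\varepsilon_1/\mu_1+\mu_1/2\le-\sqrt{L}$, hence $\Phi(-\sqrt{L})\le\tfrac12 e^{-L/2}=\tfrac12\sqrt{\delta_1}$, which is not $\le\delta_1$. Use $\varepsilon_1\le 2L$ in full instead. Then $\mu_1/2=\varepsilon_1/(4\sqrt{L})\le\sqrt{L}/2$, so the argument of $\Phi$ is at most $-\tfrac32\sqrt{L}$, and $\delta(\varepsilon_1)\le\Phi(-\tfrac32\sqrt{L})\le\tfrac12\delta_1^{9/8}\le\delta_1$. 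The second stage is identical with $T=1$.

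With this repair your route is a valid alternative to the paper's. The paper converts via zCDP instead (Corollary~\ref{cor:calibration}: $\rho=\varepsilon^2/(8L)$ and $\varepsilon'=\rho+2\sqrt{\rho L}\le\varepsilon$). Both arguments need the same restriction, roughly $\varepsilon\le 2.34\log(1/\delta)$.
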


This result separates into contributions of the two phases, which may guide the privacy budget allocation: one chooses $(\varepsilon_1, \varepsilon_2)$ subject to $\varepsilon_1 + \varepsilon_2 = \varepsilon$ so as to balance the two error sources. A natural approach to minimize the total error is to equate the two terms. This puts a larger share of the privacy budget into $\varepsilon_1$ when the spectral gap $\lambda_k - \lambda_{k+1}$ is small. 

The two terms also show the dimensional cost of privacy. The first term has the $\sqrt{p}$-factor from the norm bound on the $p \times k$ Gaussian noise injected at each iteration of noisy power method. The Analyze Gauss baseline applied directly to the $p \times p$ covariance incurs an error of order $O(\sigma \sqrt{p})$~\citep{dwork2014analyze}. Our bound replaces this by a gap-dependent subspace term and a noise term scaling with $\sqrt{k}$. Its main benefit is that it is achieved with $O(Tpk+k^2)$ rather than $O(p^2)$ communication cost in a federated setting.

\section{Private Rank Estimation} \label{sec:initialization}
We also propose a lightweight federated initialization procedure that produces a DP estimate of an effective rank of the global covariance matrix $\mathbf{X}\mathbf{X}^\top = \sum_{i} \mathbf{X}_i\mathbf{X}_i^\top$. First, we compress $\mathbf{X}\mathbf{X}^\top$ via a shared Gaussian random projection to an $r\times r$ matrix that clients can communicate cheaply, clip the 2-norm of the projected samples, and release a noisy version of this projected matrix. The rank is then chosen by thresholding its eigenvalues.

We assume each client holds local data $\mathbf{X}_i\in\mathbb{R}^{p\times n_i}$ whose columns have bounded $\ell_2$-norm $\|\mathbf{x}\|_2\le C$, and that all clients share a common public seed.
For a clipping bound $\bar C>0$ (e.g.\ $\bar C = C$), let $\mathrm{clip}_{\bar C}(\mathbf{y}) = \mathbf{y}\cdot\min\{1,\bar C/\|\mathbf{y}\|_2\}$ for a vector $\mathbf{y}$. For a matrix argument, $\mathrm{clip}_{\bar C}$ is applied column-wise, i.e., $\mathrm{clip}_{\bar C}(\mathbf{Y}) = \big(\mathrm{clip}_{\bar C}(\mathbf{y}_1), \ldots, \mathrm{clip}_{\bar C}(\mathbf{y}_m)\big)$ for $\mathbf{Y} = (\mathbf{y}_1, \ldots, \mathbf{y}_m)$, so that every column of the result has $\ell_2$-norm at most $\bar C$. The procedure is given as follows:

\begin{enumerate}
    \item The server distributes a public seed to all clients.
    \item Each client uses the seed to generate the projection matrix
          $\mathbf{G}\in\mathbb{R}^{p\times r}$ with
          $G_{ij}\sim\mathcal{N}(0,1/r)$ i.i.d.
    \item Each client projects and clips (column-wise) its local data,
          $\wt{\mathbf{X}}_i = \mathrm{clip}_{\bar C}(\mathbf{G}^\top \mathbf{X}_i)\in\mathbb{R}^{r\times n_i}$,
          and sends 
          $\wt{\mathbf{X}}_i\wt{\mathbf{X}}_i^\top\in\mathbb{R}^{r\times r}$
          to the server.
    \item Via secure aggregation of the clients' noisy shares (see below), the server obtains
          $\sum_i\wt{\mathbf{X}}_i\wt{\mathbf{X}}_i^\top + \sigma\,\mathbf{E}
           =\wt{\mathbf{X}}\wt{\mathbf{X}}^\top + \sigma\,\mathbf{E}$, $\sigma>0$, where $\mathbf{E}$ is symmetric with i.i.d.\ $\mathcal{N}(0,1)$ entries on and above the diagonal, and
          estimates the rank from the eigenvalues of
          $\wt{\mathbf{C}}=\wt{\mathbf{X}}\wt{\mathbf{X}}^\top+\sigma\, \mathbf{E}$.
\end{enumerate}

In case of an honest-but-curious server and secure aggregation with distributed Gaussian noise we obtain the centralized DP guarantees. Each of the $N$ clients independently samples $\mathbf{E}_i$ with the same distribution and adds the share $\tfrac{\sigma}{\sqrt{N}}\mathbf{E}_i$ to its
local upload. The per-client communication of this initialization stage is only $O(r^2)$
scalars per round.
We next establish the DP
guarantee of the released matrix $\wt{\mathbf{C}}$.

\begin{lem}\label{lem:dp_initialization}
The DP guarantees of the mechanism
$$
\mathcal{M}(\mathbf{X})\;=\;\wt{\mathbf{X}}\wt{\mathbf{X}}^\top+\sigma\,\mathbf{E},
\qquad \wt{\mathbf{X}} = \mathrm{clip}_{\bar C}(\mathbf{G}^\top\mathbf{X})
$$
are those of the Gaussian mechanism with sensitivity $\bar C^2$ and noise level $\sigma$. In particular, $\mathcal{M}$ is $\mu$-GDP with $\mu = \bar C^2/\sigma$.
\end{lem}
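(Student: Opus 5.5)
The plan is to condition on the projection $\mathbf{G}$ and reduce the release to a single Gaussian mechanism. Since $\mathbf{G}$ is generated from a public seed, it is independent of the data. Its distribution is therefore identical under neighboring datasets, and it suffices to prove the guarantee for every fixed realization of $\mathbf{G}$. Formally, for any measurable $S$, $P(\mathcal{M}(\mathbf{D})\in S) = \mathbb{E}_{\mathbf{G}}[P(\mathcal{M}(\mathbf{D})\in S\mid \mathbf{G})]$. A pointwise trade-off or $(\veps,\delta)$ bound, uniform in $\mathbf{G}$, then passes through the expectation. In the GDP language, mixing over a data-independent parameter preserves $\mu$-GDP, because the trade-off functions are uniform in $\mathbf{G}$ and trade-off functions are convex under such mixtures. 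To avoid that subtlety one can simply treat $\mathbf{G}$ as public side information included in the output.

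With $\mathbf{G}$ fixed, write $f(\mathbf{X}) = \wt{\mathbf{X}}\wt{\mathbf{X}}^\top = \sum_{\text{columns } \mathbf{x}} \mathbf{y}\mathbf{y}^\top$, where $\mathbf{y} = \mathrm{clip}_{\bar C}(\mathbf{G}^\top\mathbf{x})$. Clipping acts column-wise, so each column's clipped vector depends only on that column. Adding or removing a record $\mathbf{x}$ therefore changes $f$ by exactly $\pm\mathbf{y}\mathbf{y}^\top$, with no effect on the other terms; this is where column-wise clipping, rather than a global normalization, is essential. The change has $\|\mathbf{y}\mathbf{y}^\top\|_F = \|\mathbf{y}\|_2^2 \le \bar C^2$. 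Notably, this bound does not use $\|\mathbf{x}\|_2\le C$ or any norm bound on $\mathbf{G}$; clipping alone controls it.

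The remaining step, which is the only mildly delicate one, is the symmetric noise. The released matrix is symmetric, so it is determined by its upper triangle including the diagonal. Consider the map $\mathrm{vech}$ taking a symmetric matrix to its $r(r+1)/2$ upper-triangular entries. By post-processing it suffices to analyze $\mathrm{vech}(f(\mathbf{X})) + \sigma\,\mathrm{vech}(\mathbf{E})$, where $\mathrm{vech}(\mathbf{E})\sim\mathcal{N}(0,\mathbf{I})$. This is a standard Gaussian mechanism with noise level $\sigma$. Its $\ell_2$-sensitivity is at most $\|\mathrm{vech}(\mathbf{y}\mathbf{y}^\top)\|_2 \le \|\mathbf{y}\mathbf{y}^\top\|_F \le \bar C^2$, since dropping the duplicated off-diagonal entries only decreases the norm. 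The mechanism is therefore dominated by a Gaussian mechanism with sensitivity $\bar C^2$ and noise $\sigma$, i.e.\ it is $\bar C^2/\sigma$-GDP.

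Lemma~\ref{lem:gauss_dp} with $T=1$ then gives the $(\veps,\delta)$ curve with $\widetilde\Delta = \bar C^2/\sigma$. Finally, the distributed implementation does not change the noise distribution: $\sum_i \tfrac{\sigma}{\sqrt N}\mathbf{E}_i$ has the same law as $\sigma\mathbf{E}$, and secure aggregation reveals only the sum.
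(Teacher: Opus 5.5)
Your proof is correct and follows the paper's argument: adding or removing a record changes $\widetilde{\mathbf{X}}\widetilde{\mathbf{X}}^\top$ by $\pm\mathbf{y}\mathbf{y}^\top$, with $\|\mathbf{y}\mathbf{y}^\top\|_F=\|\mathbf{y}\|_2^2\le\bar C^2$, and the upper-triangular vectorization does not increase this norm. Conditioning on the public projection $\mathbf{G}$ (most cleanly by treating it as part of the output) and noting that the distributed noise shares sum to the law of $\sigma\mathbf{E}$ are correct additions that the paper leaves implicit.
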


Compared to releasing the unclipped projection $\mathbf{G}^\top\mathbf{X}\mathbf{X}^\top\mathbf{G}$, whose sensitivity can only be bounded with high probability over $\mathbf{G}$, clipping gives a smaller, deterministic sensitivity and a clean GDP privacy guarantee. Since $\|\mathbf{G}^\top\mathbf{x}\|_2^2$ concentrates around $\|\mathbf{x}\|_2^2$, the clipping perturbs the projected samples only mildly, and it can only decrease the eigenvalues, as $\wt{\mathbf{X}}\wt{\mathbf{X}}^\top\preceq\mathbf{G}^\top\mathbf{X}\mathbf{X}^\top\mathbf{G}$. The following result shows that in case we select the rank as the number of eigenvalues of $\wt{\mathbf{C}}$ exceeding the noise level, then with h.p. the selected rank is sandwiched between the numbers of eigenvalues of the clipped projected covariance $\wt{\mathbf{X}}\wt{\mathbf{X}}^\top$ above $2\sigma b_r$ and above zero; in particular, we do not overestimate the rank.

\begin{lem} \label{lem:rank_no_fp}
Let $\rho\in(0,1)$ and
$b_r\;=\;2\sqrt{r}+2\sqrt{\log(2/\rho)}$, and let
$\wh k = \big|\{\, i : \lambda_i(\wt{\mathbf{C}}) > \sigma\,b_r \,\}\big|$.
Then, with probability at least $1-\rho$,
$$
\big|\{ i : \lambda_i(\wt{\mathbf{X}}\wt{\mathbf{X}}^\top) > 2\sigma b_r\}\big|
\;\le\; \wh k \;\le\;
\big|\{ i : \lambda_i(\wt{\mathbf{X}}\wt{\mathbf{X}}^\top) > 0\}\big| \;\le\; \operatorname{rank}(\mathbf{X}\mathbf{X}^\top).
$$
\end{lem}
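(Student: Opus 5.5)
The argument is a deterministic eigenvalue perturbation step combined with a single high-probability bound on the noise. Write $\mathbf{A} = \wt{\mathbf{X}}\wt{\mathbf{X}}^\top$, so that $\wt{\mathbf{C}} = \mathbf{A} + \sigma\mathbf{E}$.

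\textbf{Step 1 (noise bound).} I will use Lemma~\ref{thm:symmetric_concentration}, the spectral-norm concentration for a $k\times k$ symmetric Gaussian matrix, applied here with dimension $r$. It should give, with probability at least $1-\rho$,
$$
\|\mathbf{E}\|_2 \le 2\sqrt{r} + 2\sqrt{\log(2/\rho)} = b_r .
$$
This is the same constant $2\sqrt{k}+2\sqrt{\log(2/\rho)}$ that appears in Theorem~\ref{thm:error_two_phases}, with $k$ replaced by $r$. If that lemma is only stated with different constants, I would instead bound the upper and lower triangles separately via Gaussian comparison (Gordon/Davidson--Szarek). Either way, the rest of the argument runs on this event. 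This is the only probabilistic step, and matching the constant to the cited lemma is the main point to check.

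\textbf{Step 2 (Weyl).} On the event of Step 1, Weyl's inequality gives $|\lambda_i(\wt{\mathbf{C}}) - \lambda_i(\mathbf{A})| \le \sigma\|\mathbf{E}\|_2 \le \sigma b_r$ for every $i$.

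\textbf{Step 3 (lower bound on $\wh k$).} If $\lambda_i(\mathbf{A}) > 2\sigma b_r$, then
$$
\lambda_i(\wt{\mathbf{C}}) \ge \lambda_i(\mathbf{A}) - \sigma b_r > \sigma b_r ,
$$
so index $i$ is counted in $\wh k$. Eigenvalues are sorted in decreasing order, so both sets are initial segments of the indices. Hence the count of eigenvalues of $\mathbf{A}$ above $2\sigma b_r$ is at most $\wh k$.

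\textbf{Step 4 (upper bound on $\wh k$).} Take any $i$ with $\lambda_i(\wt{\mathbf{C}}) > \sigma b_r$. Then $\lambda_i(\mathbf{A}) \ge \lambda_i(\wt{\mathbf{C}}) - \sigma b_r > 0$. Since $\mathbf{A}\succeq 0$, this shows $\wh k$ is at most the number of positive eigenvalues of $\mathbf{A}$, which equals $\operatorname{rank}(\mathbf{A})$.

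\textbf{Step 5 (rank comparison).} The number of positive eigenvalues of $\mathbf{A}$ is
$$
\operatorname{rank}(\wt{\mathbf{X}}) = \operatorname{rank}\bigl(\mathbf{G}^\top\mathbf{X}\mathbf{D}\bigr),
$$
where column-wise clipping is right-multiplication by a diagonal matrix $\mathbf{D}$ with positive entries (zero columns stay zero). This gives
$$
\operatorname{rank}(\wt{\mathbf{X}}) \le \operatorname{rank}(\mathbf{X}) = \operatorname{rank}(\mathbf{X}\mathbf{X}^\top).
$$
This last step is deterministic.
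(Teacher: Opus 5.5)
Your proposal is correct and follows the paper's proof essentially step for step. Both use the symmetric Gaussian concentration lemma with $t=2\sqrt{\log(2/\rho)}$, Weyl's inequality, and the same two one-line implications for the lower and upper bounds; your diagonal-scaling argument only spells out the step $\operatorname{rank}(\wt{\mathbf{X}})\le\operatorname{rank}(\mathbf{X})$, which the paper asserts without justification.
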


The accuracy of the proposed algorithm is illustrated empirically in Section~\ref{sec:experiments}.

\section{Convergence Analysis}

\label{sec:convergence}

We next analyze when the low-rank estimate $(\mathbf{V},\wt{\mathbf{Y}})$ makes
federated DP-FC preferable to federated DP-GD. To this end, we analyze linear regression which reveals the benefits of preconditioning most clearly, and comment on GLMs at the end. In short: for the same privacy budget, rank-$k$ preconditioning reaches the utility of DP-GD in a fraction of the rounds. Thereby, we show that the benefits of private second-order optimization shown by~\citet{ganesh2023faster} can be retained with DP low-rank approximation of the covariance. We consider a fixed-point analysis with a weighted norm, similarly to the alternating direction method of multipliers analysis by~\citet{cyffers2023noisy}.  As in~\citet{brown2024}, we write the model state parameter obtained via DP iterations as a closed-form expression with a deterministic optimization term and a weighted sum of injected DP noise.

\para{Setting.}
Throughout, $\lambda>0$ is the regularization parameter.
The values $\lambda_1\ge\dots\ge\lambda_p\ge0$ are the eigenvalues of
$\mathbf{C}:=\tfrac1n\mathbf{K}=\tfrac1n\mathbf{X}\mathbf{X}^\top=\mathbf{U}\boldsymbol{\Lambda}\mathbf{U}^\top$
(in this section, $\boldsymbol{\Lambda}$ and $\lambda_i$ thus refer to $\mathbf{C}$ rather than $\mathbf{K}$),
$\mathbf{U}_k$ holds the leading $k<p$ eigenvectors, and
$\mathbf{C}_\lambda:=\mathbf{C}+\lambda\mathbf{I}$. Consider
$$
\mathcal{L}(\theta)=\frac{1}{2n}\sum\nolimits_{(x,y)\in D}(\theta^\top x-y)^2+\tfrac{\lambda}{2}\norm{\theta}_2^2
$$
with minimizer $\theta_*$. Since $\mathcal{L}$ is quadratic with constant Hessian $\nabla^2\mathcal{L} = \mathbf{C}_\lambda$ and $\nabla\mathcal{L}(\theta_*) = 0$, we have $\nabla\mathcal{L}(\theta) = \mathbf{C}_\lambda(\theta-\theta_*)$ and
$$
\mathcal{L}(\theta)-\mathcal{L}(\theta_*)=\tfrac12\norm{\theta-\theta_*}_{\mathbf{C}_\lambda}^2,
$$
where for a symmetric positive definite (SPD) $\mathbf{M}$ we denote $\norm{\mathbf{v}}_{\mathbf{M}}^2 := \mathbf{v}^\top \mathbf{M}\mathbf{v}$.
For an SPD $\mathbf{P}$ we consider
\begin{equation}\label{eq:pdpgd}
    \theta_{j+1}=\theta_j-\eta\,\mathbf{P}^{-1}\bigl(\nabla\mathcal{L}(\theta_j)+\mathcal{E}_j\bigr),
    \quad \mathcal{E}_j\sim\mathcal{N}(0,\sigma_g^2\mathbf{I}).
\end{equation}
DP-GD iteration is given by $\mathbf{P}=\mathbf{I}$. DP-FC with our estimate is given by
\begin{equation}\label{eq:Ghat}
    \mathbf{P}=\wh{\mathbf{G}}:=\mathbf{V}\wt{\mathbf{Y}}\mathbf{V}^\top+\lambda\mathbf{I},
    \qquad \wt{\mathbf{Y}}=\mathbf{V}^\top\mathbf{C}\mathbf{V}+\mathbf{Z},
\end{equation}
where $\mathbf{Z}$ is symmetric with i.i.d.\ $\mathcal{N}(0,\sigma_2^2)$
upper-triangular entries ($\sigma_2 = C^2\sigma'/n$ with $C$ the record-norm bound; the factor $1/n$ is due to the normalization of $\mathbf{C}$, cf.\ Algorithm~\ref{alg:fed-analyze-gauss}). 
If
$\Delta_g$ is the $L_2$-sensitivity of the clipped averaged gradient (under the add/remove relation, $\Delta_g = c/n$ for per-sample gradients clipped to norm $c$ and normalization by the public $n$), then $T$
rounds are $\mu_g$-GDP with $\mu_g=\sqrt{T}\Delta_g/\sigma_g$ (Lemma~\ref{lem:gauss_dp}).
The use of the preconditioner in the iteration is post-processing.
For simplicity, we assume the gradients stay bounded in 2-norm, i.e., we ignore the clipping bias. We use the two
condition numbers
\begin{equation}\label{eq:kappa_k}
    \kappa_0:=\frac{\lambda_1+\lambda}{\lambda_p+\lambda},
    \qquad
    \kappa_k:=1+\frac{\lambda_{k+1}}{\lambda}.
\end{equation}

For symmetric $\mathbf{C}_\lambda$ and $\mathbf{G}\succ0$, the generalized eigenvalues of
$(\mathbf{C}_\lambda,\mathbf{G})$ are the scalars $\mu$ with $\mathbf{C}_\lambda\mathbf{x}=\mu\mathbf{G}\mathbf{x}$
for some $\mathbf{x}\neq\mathbf{0}$, i.e., the eigenvalues of $\mathbf{G}^{-1/2}\mathbf{C}_\lambda\mathbf{G}^{-1/2}$.

For the oracle preconditioner
$\mathbf{G}_\star:=\mathbf{U}_k\boldsymbol{\Lambda}_k\mathbf{U}_k^\top+\lambda\mathbf{I}$
(i.e., $\mathbf{V}=\mathbf{U}_k$, $\mathbf{Z}=\mathbf{0}$), the generalized eigenvalues of $(\mathbf{C}_\lambda,\mathbf{G}_\star)$ are $1$ for $i\le k$ and $1+\lambda_i/\lambda$ for $i>k$, and hence lie in $[1,\kappa_k]$ (Lemma~\ref{lem:woodbury}, Appendix~\ref{app:convergence_proofs}). Moreover, whenever $\wt{\mathbf{Y}}+\lambda\mathbf{I}_k\succ0$,
\begin{equation}\label{eq:Ghatinv}
    \wh{\mathbf{G}}^{-1}
    =\mathbf{V}\bigl(\wt{\mathbf{Y}}+\lambda\mathbf{I}_k\bigr)^{-1}\mathbf{V}^\top
    +\tfrac1\lambda\bigl(\mathbf{I}-\mathbf{V}\mathbf{V}^\top\bigr),
\end{equation}
so a preconditioned step is an approximate Newton step within the span of $\mathbf{V}$ and a gradient step with step size $\eta/\lambda$ on its orthogonal complement.

We state the utility bounds in terms of the factor $\gamma$, as defined and bounded in the following Lemma.

\begin{lem} \label{lem:perturb}
Let $$
\gamma:=\norm{\mathbf{G}_\star^{-1/2}\wh{\mathbf{G}}\mathbf{G}_\star^{-1/2}-\mathbf{I}}_2.
$$
Then 
$$
(1-\gamma)\mathbf{G}_\star\preccurlyeq\wh{\mathbf{G}}\preccurlyeq(1+\gamma)\mathbf{G}_\star.
$$
If $\gamma<1$, the generalized eigenvalues of $(\mathbf{C}_\lambda,\wh{\mathbf{G}})$ lie in
$\bigl[\tfrac{1}{1+\gamma},\tfrac{\kappa_k}{1-\gamma}\bigr]$, and
$(1-\gamma)\tr(\wh{\mathbf{G}}^{-1})\le\tr(\mathbf{G}_\star^{-1})$. Moreover,
with $\zeta:=\norm{(\mathbf{I}-\mathbf{V}\mathbf{V}^\top)\mathbf{U}_k}_2<1$
(bounded by Theorem~\ref{thm:ref_bound}, with the eigenvalues $n\lambda_i$ of $\mathbf{K}$),
\begin{equation*}
    \gamma\le\frac{2\lambda_1\zeta+\norm{\mathbf{Z}}_2}{\lambda},
\end{equation*}
and with probability at least $1-\rho$,
$\norm{\mathbf{Z}}_2\le\sigma_2\bigl(2\sqrt{k}+2\sqrt{\log(2/\rho)}\bigr)$.
\end{lem}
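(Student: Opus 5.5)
The plan is to handle the four claims in order, moving everything to congruences and Rayleigh quotients relative to $\mathbf{G}_\star$.

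\textbf{Sandwich.} Set $\mathbf{M}:=\mathbf{G}_\star^{-1/2}\wh{\mathbf{G}}\mathbf{G}_\star^{-1/2}$. This matrix is symmetric, and by the definition of $\gamma$ its eigenvalues lie in $[1-\gamma,1+\gamma]$. Hence $(1-\gamma)\mathbf{I}\preccurlyeq\mathbf{M}\preccurlyeq(1+\gamma)\mathbf{I}$. Conjugating by $\mathbf{G}_\star^{1/2}$ preserves the Loewner order, and this gives the sandwich.

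\textbf{Generalized eigenvalues.} These are the extreme values of the Rayleigh quotient $\mathbf{x}^\top\mathbf{C}_\lambda\mathbf{x}/\mathbf{x}^\top\wh{\mathbf{G}}\mathbf{x}$, because by Courant--Fischer the spectrum of $\wh{\mathbf{G}}^{-1/2}\mathbf{C}_\lambda\wh{\mathbf{G}}^{-1/2}$ lies between the min and max of this quotient. I factor the quotient as
\[
\frac{\mathbf{x}^\top\mathbf{C}_\lambda\mathbf{x}}{\mathbf{x}^\top\mathbf{G}_\star\mathbf{x}}\cdot\frac{\mathbf{x}^\top\mathbf{G}_\star\mathbf{x}}{\mathbf{x}^\top\wh{\mathbf{G}}\mathbf{x}} .
\]
By Lemma~\ref{lem:woodbury}, the first factor lies in $[1,\kappa_k]$. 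By the sandwich with $\gamma<1$, the second factor lies in $[1/(1+\gamma),1/(1-\gamma)]$. Multiplying the ranges gives the interval $\bigl[\tfrac{1}{1+\gamma},\tfrac{\kappa_k}{1-\gamma}\bigr]$.

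\textbf{Trace bound.} From $\wh{\mathbf{G}}\succcurlyeq(1-\gamma)\mathbf{G}_\star\succ0$ and operator monotonicity of inversion, $\wh{\mathbf{G}}^{-1}\preccurlyeq(1-\gamma)^{-1}\mathbf{G}_\star^{-1}$. Taking traces gives the claim.

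\textbf{Bound on $\gamma$.} Since $\mathbf{G}_\star\succcurlyeq\lambda\mathbf{I}$, we have $\gamma\le\norm{\mathbf{G}_\star^{-1/2}}_2^2\norm{\wh{\mathbf{G}}-\mathbf{G}_\star}_2\le\lambda^{-1}\norm{\wh{\mathbf{G}}-\mathbf{G}_\star}_2$. Write $\mathbf{P}=\mathbf{V}\mathbf{V}^\top$, $\mathbf{C}_k=\mathbf{U}_k\boldsymbol{\Lambda}_k\mathbf{U}_k^\top$ and $\mathbf{C}_\perp=\mathbf{C}-\mathbf{C}_k$. Then
\[
\wh{\mathbf{G}}-\mathbf{G}_\star=(\mathbf{P}\mathbf{C}\mathbf{P}-\mathbf{C}_k)+\mathbf{V}\mathbf{Z}\mathbf{V}^\top .
\]
The second term has norm $\norm{\mathbf{Z}}_2$ because $\mathbf{V}$ has orthonormal columns. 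The first term is the main obstacle. A naive triangle inequality gives roughly $3\lambda_1\zeta$, so I will instead decompose it in blocks with respect to $\mathrm{range}(\mathbf{P})\oplus\mathrm{range}(\mathbf{I}-\mathbf{P})$:
\[
\mathbf{P}\mathbf{C}\mathbf{P}-\mathbf{C}_k=\begin{pmatrix}\mathbf{P}\mathbf{C}_\perp\mathbf{P} & -\mathbf{P}\mathbf{C}_k(\mathbf{I}-\mathbf{P})\\ -(\mathbf{I}-\mathbf{P})\mathbf{C}_k\mathbf{P} & -(\mathbf{I}-\mathbf{P})\mathbf{C}_k(\mathbf{I}-\mathbf{P})\end{pmatrix},
\]
using $\mathbf{P}\mathbf{C}_k\mathbf{P}+\mathbf{P}\mathbf{C}_\perp\mathbf{P}=\mathbf{P}\mathbf{C}\mathbf{P}$.

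I bound each block separately.
\begin{itemize}
\item Off-diagonal block: $\norm{\mathbf{P}\mathbf{C}_k(\mathbf{I}-\mathbf{P})}_2\le\lambda_1\norm{(\mathbf{I}-\mathbf{P})\mathbf{U}_k}_2=\lambda_1\zeta$.
\item Lower diagonal block: its norm is at most $\lambda_1\zeta^2$.
\item Upper diagonal block: its norm is at most $\lambda_{k+1}\norm{\mathbf{V}^\top\mathbf{U}_\perp}_2^2$. Since $\mathbf{V}$ and $\mathbf{U}_k$ are both $k$-dimensional, $\norm{\mathbf{V}^\top\mathbf{U}_\perp}_2=\norm{(\mathbf{I}-\mathbf{P})\mathbf{U}_k}_2=\zeta$ by equality of the sines of principal angles. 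So this block is at most $\lambda_{k+1}\zeta^2$.
\end{itemize}
The norm of a $2\times2$ block matrix is at most the norm of its block-diagonal part plus the norm of its off-diagonal part. The block-diagonal part has norm at most $\max(\lambda_{k+1},\lambda_1)\zeta^2=\lambda_1\zeta^2$, so the first term is at most $\lambda_1\zeta^2+\lambda_1\zeta\le2\lambda_1\zeta$, using $\zeta<1$. This yields $\gamma\le(2\lambda_1\zeta+\norm{\mathbf{Z}}_2)/\lambda$.

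\textbf{Noise bound.} The high-probability bound on $\norm{\mathbf{Z}}_2$ is exactly Lemma~\ref{thm:symmetric_concentration} applied to the $k\times k$ symmetric Gaussian matrix with entry scale $\sigma_2$.
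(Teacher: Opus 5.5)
Your proof is correct. The sandwich, the Rayleigh-quotient factorization for the generalized eigenvalues, the trace bound via operator monotonicity of inversion, the reduction $\gamma\le\norm{\wh{\mathbf{G}}-\mathbf{G}_\star}_2/\lambda$, and the tail bound for $\norm{\mathbf{Z}}_2$ (with $t=2\sqrt{\log(2/\rho)}$) all match the paper.

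You diverge only in bounding $\norm{\mathbf{P}\mathbf{C}\mathbf{P}-\mathbf{C}_k}_2$. The paper uses the telescoping identity
\[
\boldsymbol{\Pi}\mathbf{C}\boldsymbol{\Pi}-\boldsymbol{\Pi}_\star\mathbf{C}\boldsymbol{\Pi}_\star=(\boldsymbol{\Pi}-\boldsymbol{\Pi}_\star)\mathbf{C}\boldsymbol{\Pi}+\boldsymbol{\Pi}_\star\mathbf{C}(\boldsymbol{\Pi}-\boldsymbol{\Pi}_\star),
\]
together with $\norm{\boldsymbol{\Pi}-\boldsymbol{\Pi}_\star}_2=\zeta$ for equal-rank projectors. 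This gives $2\lambda_1\zeta$ in two lines. So your remark that a naive triangle inequality only yields about $3\lambda_1\zeta$ is not accurate, and the block decomposition is not needed to reach the stated constant.

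Your block argument rests on the same equal-rank principal-angle fact, in the form $\norm{\mathbf{V}^\top\mathbf{U}_\perp}_2=\zeta$. It is longer, but it gives the sharper bound $\lambda_1\zeta(1+\zeta)$, so the leading constant in front of $\lambda_1\zeta$ is $1$ rather than $2$. It also shows that only the off-diagonal coupling $\mathbf{P}\mathbf{C}_k(\mathbf{I}-\mathbf{P})$ is first order in $\zeta$, while the diagonal blocks are $O(\zeta^2)$.

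One small presentational point: state $\wh{\mathbf{G}}\succ0$, which follows from the sandwich with $\gamma<1$, before you write $\wh{\mathbf{G}}^{-1/2}$ in the generalized-eigenvalue step. At present you only make it explicit later, in the trace step.
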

We write $\wh\kappa_k:=\kappa_k\frac{1+\gamma}{1-\gamma}$ for the effective
condition number of the private preconditioner.
\begin{thm}[Utility of preconditioned DP-GD]\label{thm:total}
Let $\mathbf{P}\succ0$ be independent of $(\mathcal{E}_j)$, let the generalized
eigenvalues of $(\mathbf{C}_\lambda,\mathbf{P})$ lie in $[a,b]$, and let $0<\eta\le1/b$.
Then
\begin{equation}\label{eq:total}
\begin{aligned}
     \mathbb{E}\,\mathcal{L}(\theta_T)-\mathcal{L}(\theta_*) 
\;\le\; (1-\eta a)^{2T}\bigl(\mathcal{L}(\theta_0)-\mathcal{L}(\theta_*)\bigr)
    +\frac{\eta\,\sigma_g^2}{2}\,\tr(\mathbf{P}^{-1}).
\end{aligned}
\end{equation}
\end{thm}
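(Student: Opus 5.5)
We work with the error $\mathbf{e}_j := \theta_j - \theta_*$ and write the iteration as a linear recursion. Since $\nabla\mathcal{L}(\theta_j) = \mathbf{C}_\lambda \mathbf{e}_j$, the update~\eqref{eq:pdpgd} gives
\[
\mathbf{e}_{j+1} = (\mathbf{I} - \eta\mathbf{P}^{-1}\mathbf{C}_\lambda)\mathbf{e}_j - \eta\mathbf{P}^{-1}\mathcal{E}_j .
\]
Unrolling this, $\mathbf{e}_T = \mathbf{A}^T\mathbf{e}_0 - \eta\sum_{j=0}^{T-1}\mathbf{A}^{T-1-j}\mathbf{P}^{-1}\mathcal{E}_j$ with $\mathbf{A} := \mathbf{I}-\eta\mathbf{P}^{-1}\mathbf{C}_\lambda$. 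This is the closed form described in the paper: a deterministic optimization term plus a weighted sum of the injected noise.

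To symmetrize, change variables to $\mathbf{f}_j := \mathbf{C}_\lambda^{1/2}\mathbf{e}_j$, so that $\mathcal{L}(\theta_j)-\mathcal{L}(\theta_*) = \tfrac12\norm{\mathbf{f}_j}_2^2$. Conjugation gives $\mathbf{C}_\lambda^{1/2}\mathbf{A}\mathbf{C}_\lambda^{-1/2} = \mathbf{I} - \eta\mathbf{M}$, where $\mathbf{M} := \mathbf{C}_\lambda^{1/2}\mathbf{P}^{-1}\mathbf{C}_\lambda^{1/2}$ is symmetric. The matrix $\mathbf{M}$ is similar to $\mathbf{P}^{-1/2}\mathbf{C}_\lambda\mathbf{P}^{-1/2}$, so its eigenvalues are exactly the generalized eigenvalues of $(\mathbf{C}_\lambda,\mathbf{P})$ and lie in $[a,b]$. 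Because $0<\eta\le 1/b$, the eigenvalues of $\mathbf{B}:=\mathbf{I}-\eta\mathbf{M}$ lie in $[0,1-\eta a]$, hence $\norm{\mathbf{B}}_2\le 1-\eta a$ and $\mathbf{B}\preceq\mathbf{I}$.

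Next take expectations. The noise terms are independent with zero mean and $\mathbf{P}$ is independent of them, so the cross terms vanish and the sum of squared norms splits:
\[
\mathbb{E}\norm{\mathbf{f}_T}_2^2 = \norm{\mathbf{B}^T\mathbf{f}_0}_2^2 + \eta^2\sigma_g^2\sum_{j=0}^{T-1}\tr\bigl(\mathbf{B}^{j}\mathbf{N}\mathbf{B}^{j}\bigr),
\qquad \mathbf{N} := \mathbf{C}_\lambda^{1/2}\mathbf{P}^{-2}\mathbf{C}_\lambda^{1/2}.
\]
The first term is at most $(1-\eta a)^{2T}\norm{\mathbf{f}_0}_2^2$, which after halving gives the bias term in~\eqref{eq:total}.

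The main step is bounding the noise sum by $\eta\,\tr(\mathbf{P}^{-1})$ rather than a cruder quantity. Write $\mathbf{N} = \mathbf{S}\mathbf{S}^\top$ with $\mathbf{S} = \mathbf{C}_\lambda^{1/2}\mathbf{P}^{-1}$. Then
\[
\sum_j \tr(\mathbf{B}^j\mathbf{N}\mathbf{B}^j) = \tr\Bigl(\mathbf{S}^\top\bigl(\textstyle\sum_j \mathbf{B}^{2j}\bigr)\mathbf{S}\Bigr).
\]
The matrices $\mathbf{B}$ and $\mathbf{M}$ commute, and $\mathbf{B}$ has eigenvalues in $[0,1)$ wherever $\mathbf{M}\succ0$. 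Hence
\[
\sum_{j\ge0}\mathbf{B}^{2j} = (\mathbf{I}-\mathbf{B}^2)^{-1} = \bigl(\eta\mathbf{M}(2\mathbf{I}-\eta\mathbf{M})\bigr)^{-1} \preceq (\eta\mathbf{M})^{-1},
\]
using $2-\eta\mu\ge1$ for every eigenvalue $\mu$ of $\mathbf{M}$. It follows that the noise sum is at most $\eta^{-1}\tr(\mathbf{S}^\top\mathbf{M}^{-1}\mathbf{S})$. Since $\mathbf{M}^{-1}=\mathbf{C}_\lambda^{-1/2}\mathbf{P}\mathbf{C}_\lambda^{-1/2}$, we get $\mathbf{S}^\top\mathbf{M}^{-1}\mathbf{S} = \mathbf{P}^{-1}\mathbf{C}_\lambda^{1/2}\mathbf{C}_\lambda^{-1/2}\mathbf{P}\mathbf{C}_\lambda^{-1/2}\mathbf{C}_\lambda^{1/2}\mathbf{P}^{-1} = \mathbf{P}^{-1}$. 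The noise contribution to $\mathbb{E}\norm{\mathbf{f}_T}^2$ is therefore at most $\eta^2\sigma_g^2\cdot\eta^{-1}\tr(\mathbf{P}^{-1}) = \eta\sigma_g^2\tr(\mathbf{P}^{-1})$.

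Halving both contributions gives~\eqref{eq:total}. The step that needs care is this telescoping: $\mathbf{S}$ is not symmetric, so the cyclic-trace manipulation and the PSD comparison $\tr(\mathbf{S}^\top\mathbf{X}\mathbf{S})\le\tr(\mathbf{S}^\top\mathbf{Y}\mathbf{S})$ for $\mathbf{X}\preceq\mathbf{Y}$ have to be checked explicitly. If $\mathbf{P}$ is random, the whole argument is carried out conditionally on $\mathbf{P}$, which is valid by the assumed independence.
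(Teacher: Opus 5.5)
Your proof is correct and follows essentially the same route as the paper: the change of variables $\mathbf{f}_j=\mathbf{C}_\lambda^{1/2}\mathbf{e}_j$, the symmetric iteration matrix $\mathbf{I}-\eta\mathbf{M}$ with spectrum in $[0,1-\eta a]$, the bound $\sum_{j\ge0}(\mathbf{I}-\eta\mathbf{M})^{2j}=\bigl(\eta\mathbf{M}(2\mathbf{I}-\eta\mathbf{M})\bigr)^{-1}\preceq(\eta\mathbf{M})^{-1}$, and trace monotonicity, which reduce the noise term to $\eta^{-1}\tr(\mathbf{P}^{-1})$. The only cosmetic difference is that you factor $\mathbf{N}=\mathbf{S}\mathbf{S}^\top$ and compute $\mathbf{S}^\top\mathbf{M}^{-1}\mathbf{S}=\mathbf{P}^{-1}$, whereas the paper evaluates $\tr(\mathbf{M}^{-1}\mathbf{N})$ directly by cyclicity of the trace.
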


Take $\eta=1/b$. DP-GD has $a=\lambda_p+\lambda$, $b=\lambda_1+\lambda$, and
$\tr(\mathbf{I})=p$. This gives rate $1-1/\kappa_0$ and per-round noise
coefficient $p/(\lambda_1+\lambda)$. By Lemma~\ref{lem:perturb}, our
preconditioner gives rate $1-1/\wh\kappa_k$ and noise coefficient at most
$\tr(\mathbf{G}_\star^{-1})/\kappa_k\le p/(\lambda+\lambda_{k+1})$. The
preconditioned method takes larger steps and so injects more noise per round,
but it needs proportionally fewer rounds. 
The next corollary studies these trade-offs and gives an explicit comparison of the low-rank preconditioned DP-GD and DP-GD.

\begin{cor}\label{cor:compare}
Let $\gamma<1$ and let the $T$ gradient rounds be $\mu_g$-GDP in total, i.e.\
$\sigma_g^2=T\Delta_g^2/\mu_g^2$, and assume $R_0 \geq \Delta_g^2 N/(4\mu_g^2)$ for $N \in \{N_{\mathrm{GD}}, N_k\}$. Write
$R_0:=\mathcal{L}(\theta_0)-\mathcal{L}(\theta_*)$,
\begin{equation*}
\begin{aligned}
     N_{\mathrm{GD}}:=\frac{p}{\lambda_p+\lambda},\quad
    N_k:=\frac{1+\gamma}{1-\gamma}\Bigl[\sum_{i\le k}\frac{1}{\lambda_i+\lambda}+\frac{p-k}{\lambda}\Bigr],     \quad \Phi(N):=\frac{\Delta_g^2N}{4\mu_g^2}\Bigl(3+\log\frac{4\mu_g^2R_0}{\Delta_g^2N}\Bigr).
\end{aligned} 
\end{equation*}
\begin{enumerate}
\item[(i)] With $\eta = 1/b$, i.e., $\eta = 1/(\lambda_1+\lambda)$ for DP-GD and $\eta = (1-\gamma)/\kappa_k$ for DP-FC, DP-GD after
$T_{\mathrm{GD}}=\bigl\lceil\tfrac{\kappa_0}{2}\log\tfrac{4\mu_g^2R_0}{\Delta_g^2N_{\mathrm{GD}}}\bigr\rceil$
rounds, and DP-FC with $\wh{\mathbf{G}}$ after
$T_k=\bigl\lceil\tfrac{\wh\kappa_k}{2}\log\tfrac{4\mu_g^2R_0}{\Delta_g^2N_k}\bigr\rceil$
rounds, satisfy $\mathbb{E}\,\mathcal{L}(\theta_T)-\mathcal{L}(\theta_*)\le\Phi(N_{\mathrm{GD}})$
and $\le\Phi(N_k)$, respectively.
\item[(ii)] $N_k\le\frac{1+\gamma}{1-\gamma}\bigl(1+\frac{\lambda_p}{\lambda}\bigr)N_{\mathrm{GD}}$.
For $\gamma=\lambda_p=0$, $N_k=N_{\mathrm{GD}}-\sum_{i\le k}\frac{\lambda_i}{\lambda(\lambda_i+\lambda)}$.
\end{enumerate}
\end{cor}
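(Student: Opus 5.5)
Both parts reduce to Theorem~\ref{thm:total} combined with Lemma~\ref{lem:perturb}, and then to optimizing a one-dimensional bound of the form $e^{-2T/\kappa}R_0 + c\,T$.

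\textbf{Step 1: put the bound into a generic form.} For each method, take $\eta = 1/b$ in Theorem~\ref{thm:total}.
\begin{itemize}
\item For DP-GD, $a = \lambda_p + \lambda$ and $b = \lambda_1 + \lambda$, so $1 - \eta a = 1 - 1/\kappa_0$ and $\eta\,\tr(\mathbf{P}^{-1}) = p/(\lambda_1+\lambda)$.
\item For DP-FC, Lemma~\ref{lem:perturb} gives $a = 1/(1+\gamma)$ and $b = \kappa_k/(1-\gamma)$, so $\eta a = 1/\wh\kappa_k$.
\end{itemize}
For the noise term of DP-FC, use $(1-\gamma)\tr(\wh{\mathbf{G}}^{-1}) \le \tr(\mathbf{G}_\star^{-1})$ together with the explicit trace
\[
\tr(\mathbf{G}_\star^{-1}) = \sum_{i\le k} \frac{1}{\lambda_i+\lambda} + \frac{p-k}{\lambda}.
\]
This gives $\eta\,\tr(\wh{\mathbf{G}}^{-1}) \le \tr(\mathbf{G}_\star^{-1})/\kappa_k$. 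To keep the two cases uniform, I will bound the noise coefficient by $\kappa^{-1}N$ in both: for DP-GD, $p/(\lambda_1+\lambda) = N_{\mathrm{GD}}/\kappa_0$; for DP-FC, $\tr(\mathbf{G}_\star^{-1})/\kappa_k \le N_k/\wh\kappa_k$, since $\frac{1+\gamma}{1-\gamma}\cdot\frac{1}{\wh\kappa_k} = \frac{1}{\kappa_k}$.

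Substituting $\sigma_g^2 = T\Delta_g^2/\mu_g^2$ and $(1-1/\kappa)^{2T} \le e^{-2T/\kappa}$, the generic bound becomes
\[
e^{-2T/\kappa}R_0 + \frac{T\Delta_g^2 N}{2\mu_g^2\kappa}.
\]

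\textbf{Step 2: choose $T$.} Set $T = \lceil (\kappa/2)\log(4\mu_g^2R_0/(\Delta_g^2N)) \rceil$. The assumption $R_0 \ge \Delta_g^2N/(4\mu_g^2)$ makes the logarithm nonnegative.
\begin{itemize}
\item The first term is at most $\Delta_g^2N/(4\mu_g^2)$.
\item Using $T \le (\kappa/2)\log(\cdot) + 1$, the second term is at most $\frac{\Delta_g^2N}{4\mu_g^2}\log(\cdot) + \frac{\Delta_g^2N}{2\mu_g^2\kappa}$.
\item Since $\kappa \ge 1$, the leftover term is at most $2\cdot\frac{\Delta_g^2N}{4\mu_g^2}$.
\end{itemize}
Summing gives $\frac{\Delta_g^2N}{4\mu_g^2}(3 + \log(\cdot)) = \Phi(N)$, which proves (i). The main obstacle is the constant bookkeeping here: the ceiling slack has to produce exactly the "$3$", and the earlier reduction of the noise coefficient to $N/\kappa$ is precisely what makes $\kappa$ cancel.

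\textbf{Step 3: prove (ii).}
\begin{itemize}
\item Each term satisfies $1/(\lambda_i+\lambda) \le 1/\lambda$. Hence $\tr(\mathbf{G}_\star^{-1}) \le p/\lambda = (1 + \lambda_p/\lambda)\,p/(\lambda_p+\lambda)$, and multiplying by $\frac{1+\gamma}{1-\gamma}$ gives the inequality.
\item When $\gamma = \lambda_p = 0$, we have $N_{\mathrm{GD}} = p/\lambda$. Then
\[
N_{\mathrm{GD}} - N_k = \sum_{i\le k}\Bigl(\frac{1}{\lambda} - \frac{1}{\lambda_i+\lambda}\Bigr) = \sum_{i\le k}\frac{\lambda_i}{\lambda(\lambda_i+\lambda)},
\]
which is the stated identity.
\end{itemize}
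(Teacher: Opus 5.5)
Your proposal is correct and follows the paper's proof essentially step for step. You reduce both methods via Theorem~\ref{thm:total} and Lemma~\ref{lem:perturb} to the common bound $e^{-2T/\kappa}R_0 + T\Delta_g^2N/(2\mu_g^2\kappa)$ with noise coefficient $N/\kappa$, take $T=\lceil(\kappa/2)\log(4\mu_g^2R_0/(\Delta_g^2N))\rceil$, absorb the ceiling slack using $\kappa\ge1$ to get the constant $3$, and read off (ii) from the explicit trace of $\mathbf{G}_\star^{-1}$ in Lemma~\ref{lem:woodbury}. Incidentally, your step $\tr(\mathbf{G}_\star^{-1})/\kappa_k \le N_k/\wh\kappa_k$ actually holds with equality.
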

When neglecting the log terms, we can read Corollary~\ref{cor:compare} such that at equal privacy, the low-rank preconditioned method matches the privacy--utility bound of DP-GD up to the factor $\frac{1+\gamma}{1-\gamma}(1+\lambda_p/\lambda)$, and slightly improves on it when $\gamma$ and $\lambda_p/\lambda$ are small. It does so in a fraction $\approx\wh\kappa_k/\kappa_0$ of the
rounds, where
\begin{equation*}
    \frac{\kappa_0}{\kappa_k}=\frac{\lambda_1+\lambda}{\lambda_p+\lambda}\cdot\frac{\lambda}{\lambda+\lambda_{k+1}}.
\end{equation*}
The gain is therefore essentially in rounds, which is the
expensive unit in cross-silo FL (Section~\ref{subsec:dpfc}). We remark that this simplified analysis makes three shortcuts. First, "equal privacy" refers to an equal GDP-budget of the gradient descent phase $\mu_g$: our method and DP-FC also spend the budget $\mu_{\mathrm{PCA}}$ on the preconditioner, so under a fixed total budget $\mu$ they have $\mu_g^2 = \mu^2 - \mu_{\mathrm{PCA}}^2$, whereas DP-GD uses $\mu_g = \mu$. Second, $T_k$ excludes the $T+1$ rounds used to construct $\wh{\mathbf{G}}$. Third, $\gamma$ and $\lambda_{k+1}$ are unknown in practice, so $\eta$ is tuned. The conclusion of gaining similar privacy-utility trade-off as DP-GD in fewer training rounds also matches with the conclusion made by~\citet{ganesh2023faster} on the benefits of private second-order optimization.

%
\para{Extension to GLMs.} 
For a convex GLM loss, the mean value theorem gives
$\nabla\mathcal{L}(\theta_j)=(\bar{\mathbf{H}}_j+\lambda\mathbf{I})(\theta_j-\theta_*)$
with $\bar{\mathbf{H}}_j=\tfrac1n\sum\ell''(\xi_x)\,xx^\top$. Suppose that, after
rescaling, $\ell''\in[c,1]$ along the iterate path, as holds for the logistic loss
with bounded logits. Then
$c\mathbf{C}+\lambda\mathbf{I}\preccurlyeq\bar{\mathbf{H}}_j+\lambda\mathbf{I}\preccurlyeq\mathbf{C}_\lambda$,
so the generalized eigenvalues of $(\bar{\mathbf{H}}_j+\lambda\mathbf{I},\wh{\mathbf{G}})$
lie in $[c/(1+\gamma),\kappa_k/(1-\gamma)]$. The per-step contraction in the
$\wh{\mathbf{G}}$-norm is then $1-c/\wh\kappa_k$; the analogous rate for DP-GD is
$1-c/\kappa_0$. Since $\bar{\mathbf{H}}_j$ varies with $j$, the noise term is
bounded with the fixed-point argument of \citet{cyffers2023noisy} in the
$\wh{\mathbf{G}}$-norm (Appendix~\ref{app:glm}). The round comparison of
Corollary~\ref{cor:compare} carries over with $\kappa_0,\wh\kappa_k$ replaced by
$\kappa_0/c,\wh\kappa_k/c$. The noise term of the GLM bound (Appendix~\ref{app:glm}) is, however, looser than that of Theorem~\ref{thm:total} by a factor of order $\wh\kappa_k/c$, so Corollary~\ref{cor:compare}(ii) does not carry over directly. Finally, we remark that the analysis is for scalar-output GLMs and directly generalizable for the multiclass softmax loss.

\section{Experiments}
\label{sec:experiments}

We evaluate our method on standard benchmark datasets and compare it against the relevant baselines.
In all figures, results are means over 5 independent runs, and error bars denote $\pm 1$ empirical standard deviation.

\subsection{Illustration of Rank Estimation}
\label{subsec:exp_dwork}

Figure~\ref{fig:init000} illustrates our DP rank estimation procedure on
MNIST~\citep{lecun1998gradient} ($p=784$) at the privacy level
$(\varepsilon,\delta)=(2.0,\,10^{-5})$, with the noise scale $\sigma$
calibrated according to Lemma~\ref{lem:dp_initialization}. We plot the
square roots of the eigenvalues of the noisy projected $r\times r$ matrix $\widetilde{\mathbf{C}}$ for three projection dimensions $r\in\{20,50,80\}$, together with the true singular values of $\mathbf{X}$ for reference. At all three values of $r$, the dominant part of the spectrum is recovered accurately.

\begin{figure}[h!]
\centering
\includegraphics[width=0.5\columnwidth]{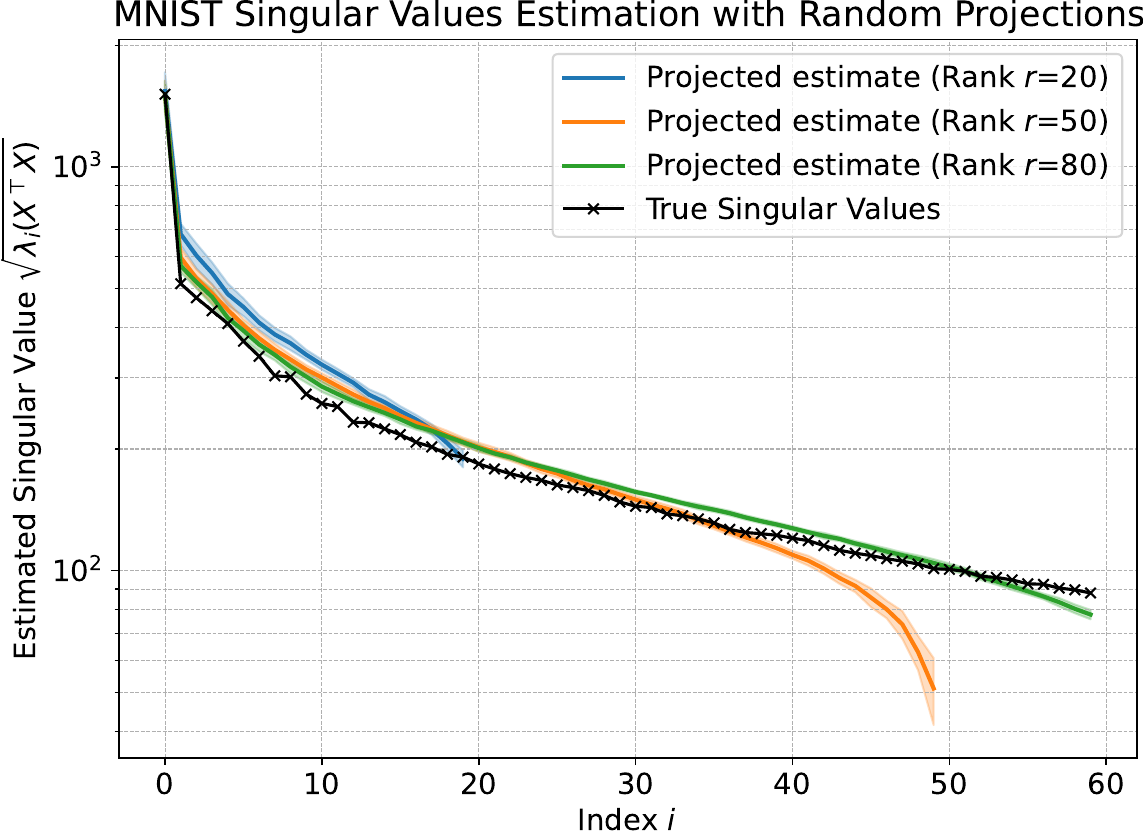}
\caption{Singular values of the MNIST feature matrix $\mathbf{X}\in\mathbb{R}^{p\times n}$ versus the square roots of the eigenvalues of the DP estimate $\widetilde{\mathbf{C}}$ for projection dimensions $r\in\{20,50,80\}$.}
\label{fig:init000}
\end{figure}
\subsection{DP-PCA Comparison}
\label{subsec:exp_dwork2}
We compare the two-stage algorithm of Section~\ref{sec:comm_eff_pca} (basis computation followed by the projected covariance release, with $k$ fixed) to the central DP-PCA mechanism Analyze Gauss~\citep{dwork2014analyze}.
Figure~\ref{fig:uci_har} reports the relative Frobenius error
$\norm{\mathbf{K}_k - \widetilde{\mathbf{K}}_k}_F\,/\,\norm{\mathbf{K}_k}_F$, where $\mathbf{K}_k$ is the best rank-$k$ approximation of $K$ in 2-norm, as a function of the privacy budget $\varepsilon$, for $k \in \{10, 20, 40\}$, on preprocessed UCI HAR features ($p=561$). We see that our method improves upon the $O(p^2)$ baseline at every $\varepsilon$ and $k$, with the largest gains at small $\varepsilon$, while requiring only $O(T p k + k^2)$ communication.
We use $T=3$ power iterations.
Appendix~\ref{sec:experiment_details} shows the same behavior on SVHN ($p=3072$), and also compares against the locally private federated method of~\citet{grammenos2020federated}, whose error grows with the number of clients while ours does not depend on it.
\begin{figure}[h!]
\centering
\includegraphics[width=0.5\columnwidth]{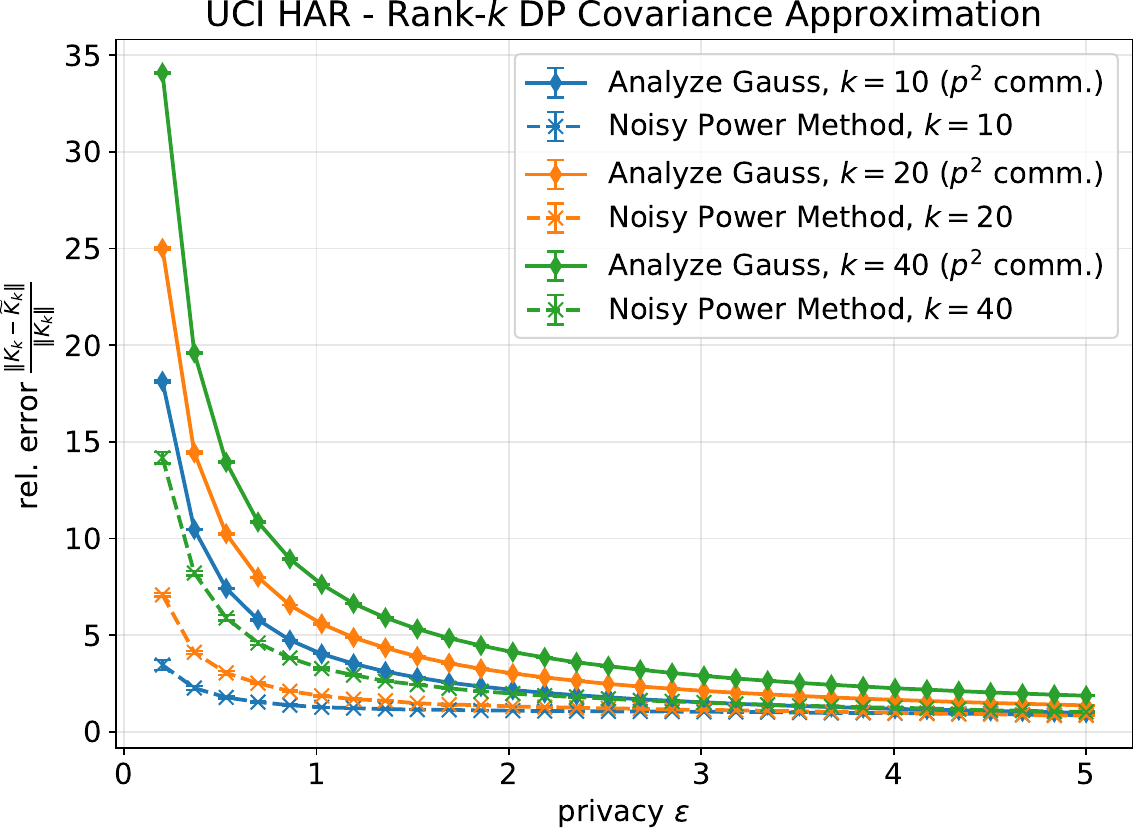}
\caption{Comparison of our method against the central DP-PCA method of~\citet{dwork2014analyze} on UCI HAR ($p=561$) with target subspace dimension $k \in \{10, 20, 40\}$. The $y$-axis shows the relative Frobenius error of the rank-$k$ covariance approximation, the $x$-axis $\varepsilon$ for $\delta=10^{-5}$. The noisy power method communicates $Tpk+k(k+1)/2$ scalars in total (from 16885 to 68140 for $T=3$), whereas Analyze Gauss communicates the upper triangle of the covariance, $p(p+1)/2 = 157641$.}
\label{fig:uci_har}
\end{figure}

\subsection{Preconditioned DP Fine-Tuning}
\label{subsec:dpfc}

We train a linear classification head under DP on frozen features: ResNet-34 features ($p=512$) for FashionMNIST and ViT-B/16 features ($p=768$) for CIFAR-10 and CIFAR-100. For the low-rank approximation of the covariance, we use a fixed rank $k = \ceil{1.2\,m}$, where $m$ is the number of classes, and $T = 2$ DP power iterations. We try to keep both $k$ and $T$ small so that the cost of constructing the preconditioner stays low relative to the gradient rounds. 
We fix $k$ and do not run the rank-selection step of Section~\ref{sec:initialization}; running it would add the term $\bar C^4/\sigma^2$ to $\mu^2$ in the composition. Features are extracted with ImageNet 1k-pretrained models obtained from Torchvision~\citep{marcel2010torchvision}. All methods satisfy the same total $(\varepsilon,\delta)$-DP guarantee; for the preconditioned methods, part of the budget is spent on the covariance estimate.
We compare federated DP-FC equipped with our covariance estimate (Our Method) against vanilla federated DP-FC, which differs only in applying Analyze Gauss to the full covariance, and against the DP first-order baselines DP-GD, DP-GD with momentum, and DP-Adam.
We omit DP-FedNew~\citep{krouka2025communication}: as noted by its authors, its record-level sensitivity bound is loose and the method is most competitive under user-level DP, and in preliminary record-level experiments it was not competitive with the second-order baselines above.
The main text reports $\varepsilon=2$; results for $\varepsilon\in\{1,2,4,8\}$ (CIFAR) and $\varepsilon\in\{0.5,1,2,4\}$ (FashionMNIST), and for CIFAR-100, are given in Appendix~\ref{sec:app_exp}.

\para{Communication accounting. }
We report accuracy with respect to two different measures. The number of rounds is the relevant measure whenever a synchronization round is the expensive unit: in cross-silo deployments, each round incurs orchestration and, under secure aggregation, protocol set-up costs. The number of total scalars communicated is a relevant measure when bandwidth is the constraint.

For a linear model on $p$-dimensional features and with $m$ classes, the model has $pm$ parameters, so each gradient descent round costs $pm$ scalars in each direction for all methods, and thus $T_g$ rounds cost $T_g\,pm$. The methods differ only in the cost of constructing the preconditioner: the naïve federated DP-FC uploads and broadcasts the full $p\times p$ covariance, with $p^2$ scalars in each direction, whereas our proposed method costs $T\,pk$ scalars in each direction for the $T$ power iterations and $k^2$ for the projected covariance $\widetilde{\mathbf Y}$. Sending only the upper triangles of the symmetric matrices would reduce these to $p(p+1)/2$ and $k(k+1)/2$; our plots count the full matrices.

Left sides of Figures~\ref{fig:cifar10_eps2_1} and~\ref{fig:fmnist_eps2_1} report accuracy as a function of communication rounds. At $\varepsilon=2$, both preconditioned methods converge faster, and to higher accuracy, than all first-order baselines, showing the benefit of second-order information. Moreover, the low-rank preconditioner obtained by our communication-efficient DP-PCA closely matches the convergence of vanilla DP-FC while avoiding the transmission of the full $p\times p$ covariance.

Right sides of Figures~\ref{fig:cifar10_eps2_1} and~\ref{fig:fmnist_eps2_1} measure performance against total communication. Here the benefit of the proposed covariance estimate becomes clear: because the one-off cost scales as $O(Tpk+k^2)$ rather than $O(p^2)$, our method reaches the final accuracy of vanilla DP-FC at a fraction of its total communication, and achieves the best accuracy--communication trade-off among all methods at this privacy level.

At the smallest budgets (FashionMNIST, $\varepsilon=0.5$; CIFAR-100, $\varepsilon=1$; Appendix~\ref{sec:app_exp}), first-order methods are competitive or better, as noise in the estimated subspace outweighs the curvature gain. In this regime, vanilla DP-FC also degrades after about 20 rounds on CIFAR-100, consistent with its $p\times p$ noise making $\widetilde{\mathbf{K}}+\lambda\mathbf{I}$ ill-conditioned; our $k\times k$ noise does not show this.

\begin{figure}[h!]
\centering
\includegraphics[width=0.49\columnwidth]{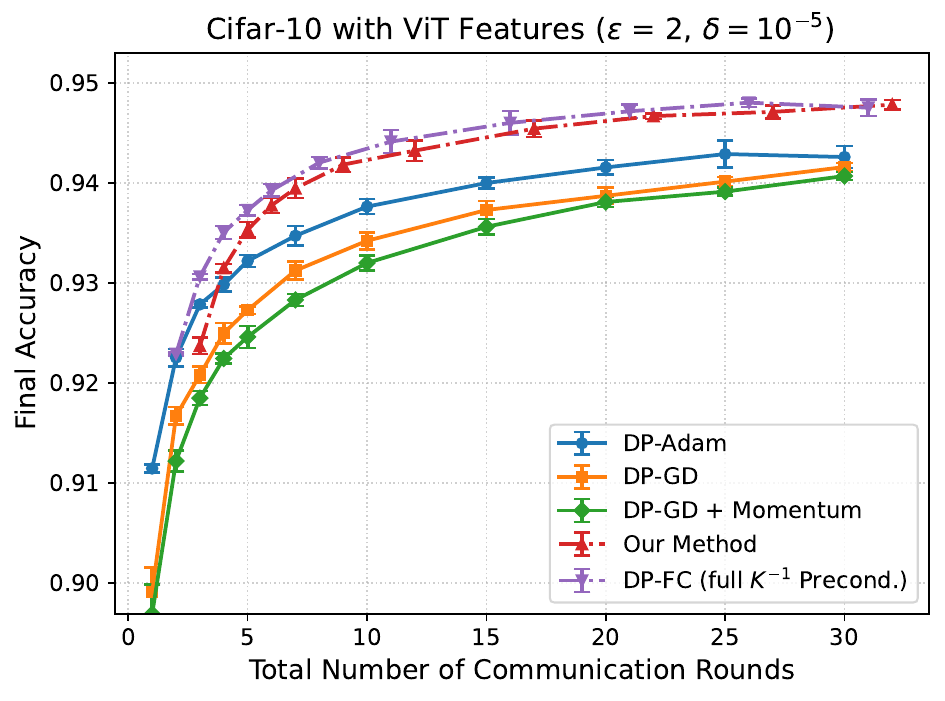}
\includegraphics[width=0.49\columnwidth]{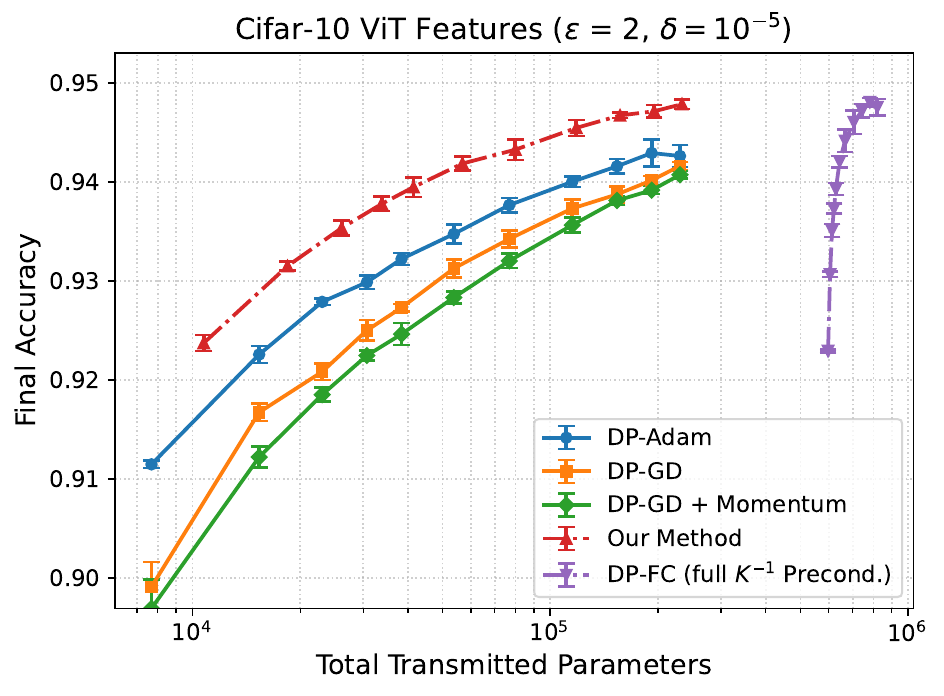}
\caption{CIFAR-10 classification with frozen ViT-B/16 features for $\varepsilon=2$ ($\delta=10^{-5}$). Left: classification accuracy as a function of the total number of communication rounds; each point corresponds to independent runs with that number of rounds. Our DP low-rank preconditioner achieves accuracy comparable to that of vanilla DP-FC. Right: classification accuracy as a function of total communication (number of transmitted scalars) for $\varepsilon=2$ ($\delta=10^{-5}$). Our method achieves the best accuracy--communication trade-off across the entire communication range.}
\label{fig:cifar10_eps2_1}
\end{figure}

\begin{figure}[h!]
\centering
\includegraphics[width=0.49\columnwidth]{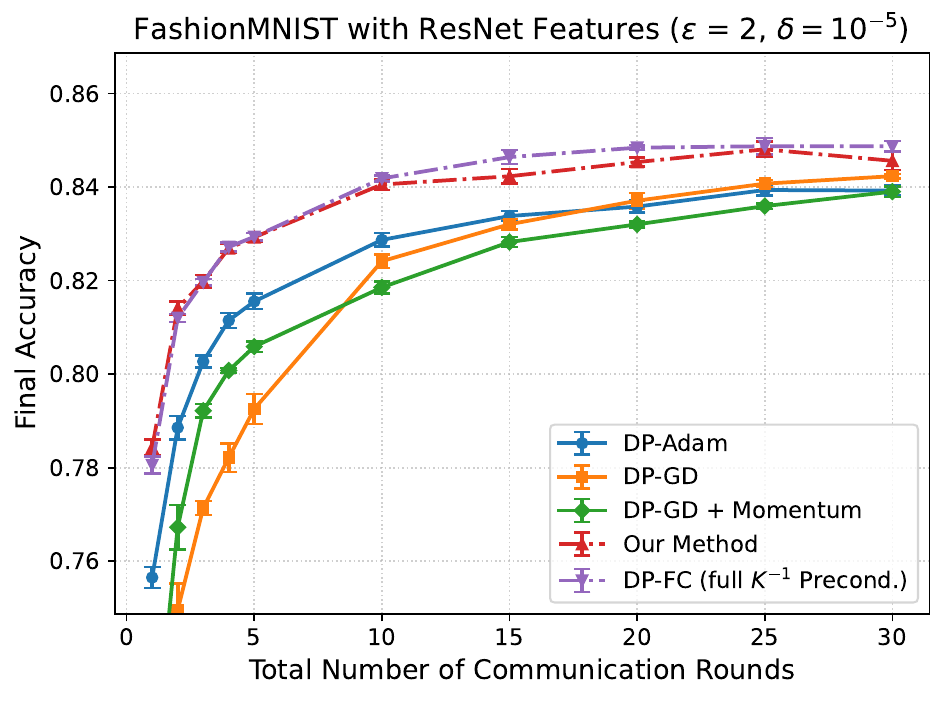}
\includegraphics[width=0.49\columnwidth]{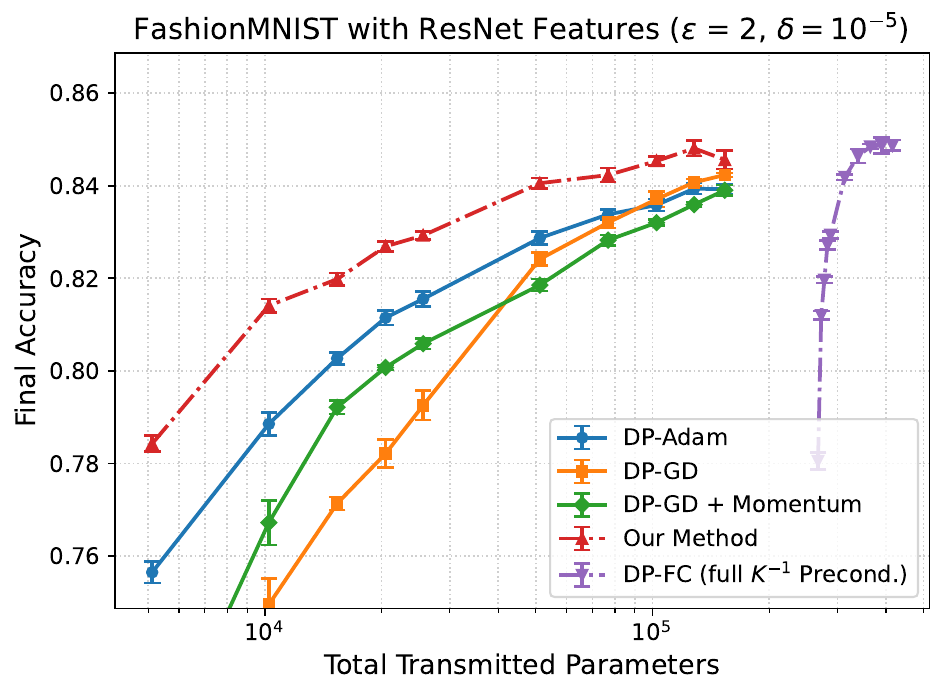}
\caption{FashionMNIST with ResNet-34 features ($p=512$), $\varepsilon=2$ ($\delta=10^{-5}$). Left: accuracy vs.\ rounds. Right: accuracy vs.\ total communication. Setup as in Figure~\ref{fig:cifar10_eps2_1}.}
\label{fig:fmnist_eps2_1}
\end{figure}

\section{Conclusions}

We presented the first federated DP low-rank covariance estimation framework that simultaneously achieves $O(pk)$ per-round communication and benefits from summation protocols, closing a gap left open by prior work.
Our three-stage procedure, DP rank selection via a shared random projection, a federated DP power method for the leading $k$ left singular vectors, and an Analyze Gauss release on the projected covariance, decomposes into per-client sums and is therefore end-to-end compatible with secure aggregation.
As a preconditioner, the resulting estimate gives an approximate Newton step, and as a consequence our proposed optimizer needs only a fraction of the rounds of DP-GD with similar communication.
Empirically, our DP-PCA method matches the centralized $O(p^2)$ Analyze Gauss baseline at a fraction of the communication cost. In federated DP optimization of GLMs, our optimization method matches vanilla DP-FC in rounds and gives the best accuracy-communication trade-off among all compared methods, except at the smallest privacy budgets.

Extending the framework from the full-batch setting to streaming or random mini-batch sampling is a natural next step, as is integrating covariance preconditioning into more general private training pipelines, e.g., for multilayer neural networks.

\bibliography{dpgradpca}
\bibliographystyle{plainnat}

\clearpage
\appendix

\section{Proof of Lemma~\ref{lem:gauss_dp}}
\label{app:proof_lemma2}

We give a proof of Lemma~\ref{lem:gauss_dp} based on the hockey-stick divergence and
on the privacy loss random variable (PRV) framework. 
\subsection{Hockey-Stick Divergence, Dominating Pairs, and PRVs}
\label{app:hsd}

For $\alpha > 0$, the hockey-stick divergence from a distribution $P$ to a distribution
$Q$ is
\begin{equation*}
    H_\alpha(P \,\|\, Q)
    \;=\; \int \big[\, P(t) - \alpha \cdot Q(t) \,\big]_+ \, \mathrm{d}t,
\end{equation*}
where $[t]_+ = \max\{0,t\}$. Tight $(\veps,\delta)$-DP guarantees can be characterized
through this divergence: for $\veps \geq 0$, the smallest $\delta$ such that
$\mathcal{M}$ is $(\veps,\delta)$-DP equals
\begin{equation} \label{eq:tight_delta}
    \delta(\veps) \;=\; \max_{D \simeq D'} \,
    H_{\ee^{\veps}}\!\big( \mathcal{M}(D) \,\|\, \mathcal{M}(D') \big),
\end{equation}
where the maximum is over neighboring datasets~\citep{zhu2022optimal}.

For some mechanisms, such as the Gaussian mechanism or the Poisson subsampled Gaussian mechanism, we can bound $H_{\ee^\veps}$ uniformly by the divergence
of a single, dataset-independent pair of distributions, a so-called dominating pair of distributions.

\begin{defn}[\citealt{zhu2022optimal}] \label{def:dominating}
A pair of distributions $(P,Q)$ dominates $\mathcal{M}$ if, for all neighboring
datasets $D \simeq D'$ and all $\alpha > 0$,
\begin{equation*}
    H_\alpha\!\big( \mathcal{M}(D) \,\|\, \mathcal{M}(D') \big)
    \;\leq\; H_\alpha(P \,\|\, Q).
\end{equation*}
\end{defn}

Dominating pairs compose multiplicatively under adaptive composition.

\begin{thm}[\citealt{zhu2022optimal}] \label{thm:composition}
If $(P_i, Q_i)$ dominates $\mathcal{M}_i$ for $i = 1, \ldots, T$, then
$\big( \prod_i P_i,\, \prod_i Q_i \big)$ dominates the adaptive composition
$\mathcal{M}_T \circ \cdots \circ \mathcal{M}_1$.
\end{thm}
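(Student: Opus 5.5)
The plan is to reduce to $T=2$ and then induct. Adaptive composition of $T$ mechanisms is a two-fold adaptive composition of $\mathcal{M}_1$ with the adaptive composition of $\mathcal{M}_2,\dots,\mathcal{M}_T$, viewed as a mechanism taking $y_1$ as auxiliary input. So it suffices to prove the following. If $(P_1,Q_1)$ dominates $\mathcal{M}_1$, and $(P_2,Q_2)$ dominates $\mathcal{M}_2(\cdot\,;y_1)$ for every fixed $y_1$, then $(P_1\times P_2, Q_1\times Q_2)$ dominates $D\mapsto (y_1,y_2)$ with $y_1\sim\mathcal{M}_1(D)$ and $y_2\sim\mathcal{M}_2(D;y_1)$. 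The domination hypothesis for $\mathcal{M}_2$ must hold uniformly over $y_1$, which is exactly what "dominates $\mathcal{M}_i$" means in the adaptive setting.

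\textbf{Step 1: disintegrate.} Fix $D\simeq D'$ and write $p_1,q_1$ for the densities of $\mathcal{M}_1(D)$ and $\mathcal{M}_1(D')$ with respect to a common dominating measure, and $p_2(\cdot\mid y_1), q_2(\cdot\mid y_1)$ for the conditionals. Define
\[
g(\beta):=H_\beta(P_2\,\|\,Q_2), \qquad \beta\ge 0,
\]
with $g(0)=1$. For $y_1$ with $p_1(y_1)>0$, integrate first over $y_2$, setting $\beta=\alpha q_1(y_1)/p_1(y_1)$, and apply domination of the second mechanism pointwise:
\[
\int\!\big[p_1p_2-\alpha q_1q_2\big]_+ \,\mathrm{d}y_2 \;=\; p_1(y_1)\,H_\beta\big(\mathcal{M}_2(D;y_1)\,\|\,\mathcal{M}_2(D';y_1)\big)\;\le\; p_1(y_1)\,g(\beta).
\]
Points with $p_1=0$ contribute zero to both sides. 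Integrating over $y_1$ gives $H_\alpha(\text{joint})\le \int p_1\, g(\alpha q_1/p_1)$. The same computation, run on the product pair, shows that this functional evaluated at $(P_1,Q_1)$ equals exactly $H_\alpha(P_1\times P_2\,\|\,Q_1\times Q_2)$.

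\textbf{Step 2: compare the $g$-functionals.} It remains to show
\[
I_\phi(p_1,q_1):=\int p_1\,\phi(q_1/p_1)\;\le\; I_\phi(P_1,Q_1), \qquad \phi(t):=g(\alpha t).
\]
The key structural fact is that $\phi$ is convex and nonincreasing on $[0,\infty)$, since $\beta\mapsto[a-\beta b]_+$ is convex and nonincreasing. It is also bounded between $0$ and $1$. Such a $\phi$ admits the representation
\[
\phi(t)=c_0+\int_{(0,\infty)}[\,s-t\,]_+\,\nu(\mathrm{d}s),
\]
with $\nu\ge0$ the second-derivative measure, $c_0=\lim_{t\to\infty}\phi(t)\ge 0$, and no linear term because $\phi$ is bounded and nonincreasing. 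Plugging in gives
\[
\int p_1\,\phi(q_1/p_1)=c_0\,\mathbb{P}_{p_1}\big[p_1>0\big]+\int H_{1/s}\big(\cdot\big)\,s\,\nu(\mathrm{d}s),
\]
because $p_1[s-q_1/p_1]_+=s\,[p_1-q_1/s]_+$. The hockey-stick terms are controlled by domination of $\mathcal{M}_1$ for every $\alpha'=1/s>0$. The constant term needs care, since $p_1$ is a probability density and $c_0\int p_1=c_0$ on both sides; the boundary subtlety is mass of $q_1$ where $p_1=0$.

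\textbf{Main obstacle.} The delicate part is making the Step 2 representation rigorous, including singular parts and the convention at $p_1=0$. I would handle it by approximating $\phi$ from below by finite combinations $c_0+\sum_k w_k[s_k-t]_+$ and using monotone convergence. Alternatively, I would rewrite everything through the trade-off function: domination for all $\alpha$ is equivalent to $T(P_1,Q_1)\le T(\mathcal{M}_1(D),\mathcal{M}_1(D'))$, and $I_\phi$ is a monotone functional of the trade-off curve by Neyman--Pearson. Once the $T=2$ case is established, induction on $T$ completes the proof, using that a product of dominating pairs is again a valid (dataset-independent) pair.
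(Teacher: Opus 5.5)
Your proof is correct, and it takes a genuinely different route from the one the paper relies on. The paper gives no proof of its own. It imports the statement from Zhu et al., where it is tied to the $f$-DP framework of Dong, Roth and Su. There, domination of $H_\alpha$ for every $\alpha>0$ is equivalent to pointwise domination of trade-off functions, $T(\mathcal{M}(D),\mathcal{M}(D'))\ge T(P,Q)$, and composition of trade-off functions is established through Blackwell-type comparison-of-experiments arguments.

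You stay entirely inside hockey-stick divergences. Conditioning on $y_1$ turns the joint divergence into $\int p_1\,\phi(q_1/p_1)$ with $\phi(t)=H_{\alpha t}(P_2\|Q_2)$. The hinge representation of the convex, nonincreasing, bounded $\phi$ (the usual representation of an $f$-divergence as a mixture of hockey-stick divergences) then rewrites this as $c_0+\int s\,H_{1/s}(\mathcal{M}_1(D)\|\mathcal{M}_1(D'))\,\nu(\mathrm{d}s)$. First-stage domination can then be applied pointwise in $s$.

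\emph{What each route buys.} Yours is short and self-contained, and it handles adaptivity purely by conditioning. It also makes visible that first-stage domination is needed at every $\alpha'=1/s>0$, including $\alpha'<1$, even to conclude for $\alpha\ge 1$. The trade-off route instead yields the structural identity $T(P_1\times P_2,Q_1\times Q_2)=T(P_1,Q_1)\otimes T(P_2,Q_2)$ and the direct link to the GDP interpretation used in the paper.

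The step you flag as the main obstacle needs no approximation argument and no Neyman--Pearson detour. Write $g(\beta)=1-\int\min(P_2,\beta Q_2)$. Dominated and monotone convergence then give three facts:
\begin{itemize}
\item $g$ is right-continuous at $0$ with $g(0)=1$;
\item $g'_+(0)=-Q_2(P_2>0)\in[-1,0]$;
\item $c_0=\lim_{\beta\to\infty}g(\beta)=P_2(Q_2=0)$.
\end{itemize}
Continuity at $0$ is the one property you should state explicitly. A bounded convex nonincreasing function on $[0,\infty)$ may jump upward at $0$, and then the hinge formula fails at $t=0$. That is exactly the value attained wherever $q_1=0<p_1$.

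Given continuity at $0$, the formula $\phi(t)=c_0+\int_{(0,\infty)}[s-t]_+\,\nu(\mathrm{d}s)$ holds on all of $[0,\infty)$, with $\nu$ the Lebesgue--Stieltjes measure of $\phi'_+$. Tonelli then applies directly. Points with $p_1=0$ are harmless. Reading $p_1[s-q_1/p_1]_+$ as $[sp_1-q_1]_+$, they contribute nothing to any hinge term on either side. The constant term equals $c_0$ on both sides because $p_1$ and $P_1$ are probability densities.
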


The divergence-based analysis of compositions can be made  tractable via privacy loss random variables.

\begin{defn} \label{def:prv}
Let $P$ and $Q$ be probability density functions with $P \ll Q$. The
{privacy loss random variable} (PRV) $\omega_{P/Q}$ associated with $(P,Q)$ is
\begin{equation*}
    \omega_{P/Q} \;=\; \log \frac{P(t)}{Q(t)}, \qquad t \sim P.
\end{equation*}
\end{defn}

\begin{thm}[\citealt{gopi2021numerical}] \label{thm:gopi}
For any $\veps \geq 0$,
\begin{equation} \label{eq:hsd_prv}
    H_{\ee^\veps}(P \,\|\, Q)
    \;=\; \mathop{\mathbb{E}}_{s \sim \omega_{P/Q}}
          \!\left[\, 1 - \ee^{\veps - s} \,\right]_+.
\end{equation}
Moreover, the PRV of a product pair $(P_1 \times \cdots \times P_T,\,
Q_1 \times \cdots \times Q_T)$ equals the sum of the per-coordinate PRVs:
\begin{equation*}
    \omega_{\prod_i P_i \,/\, \prod_i Q_i}
    \;\stackrel{d}{=}\;
    \sum_{i=1}^T \omega_{P_i / Q_i},
\end{equation*}
where the summands on the right-hand side are independent.
\end{thm}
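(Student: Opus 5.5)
The plan is to prove both claims by direct computation from the definitions: the divergence identity by rewriting the integrand relative to $P$, and the product claim by the multiplicativity of product densities. I do not expect any step to require more than care with measure-zero or degenerate sets.

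\textbf{The divergence identity.} Set $\alpha = \ee^{\veps}$. First, split the integral defining $H_\alpha(P\,\|\,Q)$ according to the support of $P$. On $\{P(t)=0\}$ the integrand $[P(t)-\alpha Q(t)]_+ = [-\alpha Q(t)]_+$ vanishes, so only $\{P(t)>0\}$ contributes. On that set, factor out $P(t)$:
\[
\bigl[P(t)-\alpha Q(t)\bigr]_+ = P(t)\,\Bigl[1-\alpha\,\tfrac{Q(t)}{P(t)}\Bigr]_+ .
\]
Substituting $\tfrac{Q(t)}{P(t)} = \ee^{-\omega(t)}$ with $\omega(t)=\log\tfrac{P(t)}{Q(t)}$ gives
\[
H_{\ee^\veps}(P\,\|\,Q)=\int P(t)\,\bigl[1-\ee^{\veps-\omega(t)}\bigr]_+\,\mathrm{d}t .
\]
This is exactly $\mathbb{E}_{t\sim P}\bigl[1-\ee^{\veps-\omega(t)}\bigr]_+$. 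Since $\omega(t)$ with $t\sim P$ is by definition distributed as $\omega_{P/Q}$, the first claim follows.

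\textbf{Degenerate points.} I will check the points where $Q(t)=0<P(t)$. Under the standing assumption $P\ll Q$ these have $P$-measure zero. Even without that assumption, setting $s=+\infty$ there makes both sides equal to $P(t)$ pointwise, so the identity is unaffected.

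\textbf{The product claim.} For product densities the likelihood ratio factorizes:
\[
\log\frac{\prod_i P_i(t_i)}{\prod_i Q_i(t_i)}=\sum_{i=1}^T \log\frac{P_i(t_i)}{Q_i(t_i)} .
\]
Under $t=(t_1,\dots,t_T)\sim\prod_i P_i$, the coordinates $t_i$ are independent with $t_i\sim P_i$. Hence the $i$th summand is distributed as $\omega_{P_i/Q_i}$, and the summands are independent because each depends only on its own coordinate. The distributional equality follows. Absolute continuity of the product also holds, since $\prod_i P_i \ll \prod_i Q_i$ whenever $P_i\ll Q_i$ for each $i$.

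The main, if mild, obstacle is the bookkeeping around the sets where $P$ or $Q$ vanishes. I will handle it by restricting all integrals to $\{P>0\}$ and invoking $P\ll Q$, so that the log-ratio is finite $P$-almost surely.
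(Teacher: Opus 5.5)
Your proposal is correct: factoring $P(t)$ out of the hockey-stick integrand on $\{P>0\}$ and writing $Q/P=e^{-\omega}$ gives the first identity, and factorizing the product likelihood ratio under the product measure gives the representation as a sum of independent per-coordinate PRVs. The paper gives no proof of its own and simply cites Gopi et al.\ (2021), and your argument is essentially the standard direct computation behind that result.
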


Combining \eqref{eq:tight_delta}, Theorem~\ref{thm:composition} and
Theorem~\ref{thm:gopi}, in order to obtain $(\veps,\delta)$-DP bounds for an adaptive
composition it suffices to (i) exhibit a dominating pair for each step, (ii) sum the
corresponding PRVs, and (iii) evaluate the expectation in~\eqref{eq:hsd_prv}.

\subsection{The PRV of a Gaussian mechanism}
\label{app:gaussian_prv}

Consider a Gaussian mechanism $\mathcal{M}(D) = \mathbf{f}(D) + \mathbf{Z}$ with
$\mathbf{Z} \sim \mathcal{N}(0, \sigma^2 I_d)$ and $L_2$-sensitivity $\Delta$. A
tightly dominating pair is given by the one-dimensional Gaussians
\begin{equation} \label{eq:gauss_dom_pair}
    P \;=\; \mathcal{N}(0, \sigma^2), \qquad
    Q \;=\; \mathcal{N}(\Delta, \sigma^2),
\end{equation}
and a direct computation~\citep[see, e.g.,][Lemma~11]{sommer2019privacy} shows that
the corresponding PRV is itself Gaussian:
\begin{equation} \label{eq:prv_gauss}
    \omega_{P/Q}
    \;\sim\; \mathcal{N}\!\left( \frac{\Delta^2}{2\sigma^2},\,
                                  \frac{\Delta^2}{\sigma^2} \right).
\end{equation}

Plugging~\eqref{eq:prv_gauss} into~\eqref{eq:hsd_prv} yields the analytical Gaussian
bound of~\citet{balle2018improving}: for every $\veps \geq 0$,
\begin{equation} \label{eq:balle_wang}
    H_{\ee^\veps}(P \,\|\, Q)
    \;=\; \Phi\!\left( -\frac{\veps \sigma}{\Delta} + \frac{\Delta}{2\sigma} \right)
        - \ee^{\veps}\,
          \Phi\!\left( -\frac{\veps \sigma}{\Delta} - \frac{\Delta}{2\sigma} \right),
\end{equation}
where $\Phi$ denotes the cumulative distribution function of the standard univariate
Gaussian. 

\subsection{Proof of Lemma~\ref{lem:gauss_dp}}
\label{app:proof}

We are now ready to prove the lemma.

\begin{proof}[Proof of Lemma~\ref{lem:gauss_dp}]
For each $i \in [T]$, let $\mathcal{M}_i(D) = \mathbf{f}_i(D) + \mathbf{Z}_i$ with
$\mathbf{Z}_i \sim \mathcal{N}(0, \sigma_i^2 I_{d_i})$ and $L_2$-sensitivity
$\Delta_i$. By~\eqref{eq:gauss_dom_pair}, the one-dimensional Gaussians
\begin{equation*}
    P_i \;=\; \mathcal{N}(0, \sigma_i^2), \qquad
    Q_i \;=\; \mathcal{N}(\Delta_i, \sigma_i^2)
\end{equation*}
form a (tightly) dominating pair for $\mathcal{M}_i$, and~\eqref{eq:prv_gauss} gives
\begin{equation*}
    \omega_{P_i / Q_i}
    \;\sim\; \mathcal{N}\!\left( \frac{\Delta_i^2}{2 \sigma_i^2},\,
                                  \frac{\Delta_i^2}{\sigma_i^2} \right).
\end{equation*}

By Theorem~\ref{thm:composition}, $\big( \prod_i P_i,\, \prod_i Q_i \big)$ dominates
the adaptive composition $\mathcal{M}^{(T)} = \mathcal{M}_T \circ \cdots \circ
\mathcal{M}_1$. By Theorem~\ref{thm:gopi}, the PRV of this product pair is the sum of
independent Gaussians, hence Gaussian:
\begin{equation*}
    \omega \;:=\; \sum_{i=1}^{T} \omega_{P_i / Q_i}
    \;\sim\; \mathcal{N}\!\left(
        \sum_{i=1}^{T} \frac{\Delta_i^2}{2 \sigma_i^2},\;
        \sum_{i=1}^{T} \frac{\Delta_i^2}{\sigma_i^2}
    \right)
    \;=\; \mathcal{N}\!\left( \frac{\widetilde{\Delta}^2}{2},\,
                              \widetilde{\Delta}^2 \right),
\end{equation*}
where we used the definition $\widetilde{\Delta}^2 = \sum_{i=1}^{T} \Delta_i^2 /
\sigma_i^2$. The analytical expression for the privacy profile determined by the PRV $\mathcal{N}\!\left( \frac{\widetilde{\Delta}^2}{2},\, \widetilde{\Delta}^2 \right)$ is given in~\citep{balle2018improving} from which the claim follows.

\end{proof}

The following closed-form calibration, used in Theorems~\ref{thm:error_two_phases} and~\ref{thm:ref_bound}, is a (conservative) consequence of Lemma~\ref{lem:gauss_dp}.

\begin{cor} \label{cor:calibration}
Consider an adaptive composition of $T$ Gaussian mechanisms, each with $L_2$-sensitivity $\Delta$ and noise standard deviation $\Delta \sigma$. Let $\delta \in (0,1)$, $0 < \veps \leq 2\log(1/\delta)$, and $\sigma = \veps^{-1}\sqrt{4T\log(1/\delta)}$. Then the composition is $(\veps,\delta)$-DP.
\end{cor}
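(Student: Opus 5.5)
The statement to prove is Corollary~\ref{cor:calibration}. I would reduce it to the $\mu$-GDP characterization of Lemma~\ref{lem:gauss_dp}. With sensitivity $\Delta$ and noise standard deviation $\Delta\sigma$, each mechanism contributes $\Delta^2/(\Delta\sigma)^2 = 1/\sigma^2$. Hence $\widetilde{\Delta} = \mu = \sqrt{T}/\sigma = \veps/\sqrt{4\log(1/\delta)}$. Since \eqref{eq:gdp} is the exact privacy profile, it suffices to show that the right-hand side of \eqref{eq:gdp} is at most $\delta$ for this $\mu$.

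Rather than bound the $\Phi$-expression directly, I would work with the privacy loss random variable from the proof of Lemma~\ref{lem:gauss_dp}, $\omega \sim \mathcal{N}(\mu^2/2,\mu^2)$. By \eqref{eq:hsd_prv}, $\delta(\veps) = \mathbb{E}[1-\ee^{\veps-\omega}]_+ \le \Pr(\omega > \veps)$. This tail equals $\Pr\bigl(\mathcal{N}(0,1) > \veps/\mu - \mu/2\bigr)$.

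The key computation is to show that the threshold $t := \veps/\mu - \mu/2$ satisfies $t \ge \sqrt{2\log(1/\delta)}$. The Gaussian tail bound $\Pr(\mathcal{N}(0,1)>t)\le \ee^{-t^2/2}$ then gives at most $\delta$. Write $L=\log(1/\delta)$. Then $\mu = \veps/(2\sqrt{L})$, so $\veps/\mu = 2\sqrt{L}$ and $\mu/2 = \veps/(4\sqrt{L})$. The hypothesis $\veps \le 2L$ gives $\mu/2 \le \sqrt{L}/2$, hence $t \ge \tfrac{3}{2}\sqrt{L}$.

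The main obstacle is that $\tfrac32 > \sqrt2$ is needed. This holds, since $1.5 > 1.414$, so $t^2/2 \ge \tfrac98 L \ge L$ and the tail is at most $\delta$. The condition $\veps\le 2\log(1/\delta)$ is used exactly in this step, and it also guarantees $t>0$, which the sub-Gaussian tail bound requires. Finally, $(\veps,\delta)$-DP follows because $\delta(\veps)$ from \eqref{eq:gdp} is the tight $\delta$, by \eqref{eq:tight_delta} and the dominating-pair argument.
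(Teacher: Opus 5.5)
Your proof is correct, but you reach the conclusion by a different route than the paper. The paper goes through zero-concentrated DP. Each mechanism is $1/(2\sigma^2)$-zCDP, so the composition is $\rho$-zCDP with $\rho = T/(2\sigma^2) = \varepsilon^2/(8L)$. The Bun--Steinke conversion then gives $(\varepsilon',\delta)$-DP with $\varepsilon' = \rho + 2\sqrt{\rho L} = \varepsilon^2/(8L) + \varepsilon/\sqrt{2}$, which is at most $\varepsilon$ whenever $\varepsilon \le 8(1-1/\sqrt{2})L$.

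You instead stay inside the paper's own PRV machinery (Lemma~\ref{lem:gauss_dp} and \eqref{eq:hsd_prv}). You bound the hockey-stick divergence by $\Pr(\omega>\varepsilon)$ and apply the Gaussian tail bound directly. Underneath, the two arguments are the same calculation. The zCDP-to-DP conversion is itself a Chernoff bound on the privacy loss, and here $\omega \sim \mathcal{N}(\rho, 2\rho)$ with $\rho = \mu^2/2$. Accordingly, your requirement $t = 2\sqrt{L} - \varepsilon/(4\sqrt{L}) \ge \sqrt{2L}$ is equivalent to $\varepsilon \le 4(2-\sqrt{2})L = 8(1-1/\sqrt{2})L$, exactly the paper's threshold. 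The hypothesis $\varepsilon \le 2L$ sits inside this range, which is where your slack $t \ge \tfrac{3}{2}\sqrt{L}$ comes from.

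Your version is self-contained given Lemma~\ref{lem:gauss_dp} and needs no external results. The paper's version is shorter, and its conversion step applies to any $\rho$-zCDP mechanism, not only Gaussian ones.

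One small remark: you do not need \eqref{eq:gdp} to be the tight or exact privacy profile. Lemma~\ref{lem:gauss_dp} already states that the composition is $(\varepsilon,\delta(\varepsilon))$-DP, so showing $\delta(\varepsilon)\le\delta$ is all that is required.
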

\begin{proof}
By~\citep[Prop.~1.6 and Lemma~1.7]{Bun2016}, the composition is $\rho$-zCDP with $\rho = T/(2\sigma^2) = \veps^2/(8L)$, where $L = \log(1/\delta)$. By~\citep[Prop.~1.3]{Bun2016}, it is then $(\veps',\delta)$-DP with
$$
\veps' = \rho + 2\sqrt{\rho L} = \frac{\veps^2}{8L} + \frac{\veps}{\sqrt{2}} \leq \veps,
$$
where the inequality holds for $\veps \leq 8(1 - 1/\sqrt{2})L$, and in particular for $\veps \leq 2L$.
\end{proof}

\section{Proof for Section~\ref{sec:initialization}}

\subsection{Proof of Lemma~\ref{lem:dp_initialization}}
\begin{proof}
Under the add/remove relation, let $\mathbf{y}\in\mathbb{R}^r$ denote the clipped projection of the added or removed column, so that $\|\mathbf{y}\|_2\le\bar C$. The released matrix $\wt{\mathbf{X}}\wt{\mathbf{X}}^\top$ changes by $\mathbf{y}\mathbf{y}^\top$, and
$\|\mathbf{y}\mathbf{y}^\top\|_F=\|\mathbf{y}\|_2^2\le \bar C^2$.
The upper-triangular vectorisation has $\ell_2$-norm at most the Frobenius norm.
\end{proof}

\subsection{Proof of Lemma~\ref{lem:rank_no_fp}}
\begin{proof}
By Lemma~\ref{thm:symmetric_concentration} with $t = 2\sqrt{\log(2/\rho)}$, $\|\mathbf{E}\|_2\le b_r$ with probability at least $1-\rho$. On this event, Weyl's inequality gives
$|\lambda_i(\wt{\mathbf{C}})-\lambda_i(\wt{\mathbf{X}}\wt{\mathbf{X}}^\top)|\le\sigma\,b_r$ for all $i$. Hence $\lambda_i(\wt{\mathbf{X}}\wt{\mathbf{X}}^\top)\ge\lambda_i(\wt{\mathbf{C}})-\sigma\,b_r>0$ for all $i\le\wh k$, which gives the upper bound, as $\operatorname{rank}(\wt{\mathbf{X}})\le\operatorname{rank}(\mathbf{X})$; and $\lambda_i(\wt{\mathbf{X}}\wt{\mathbf{X}}^\top) > 2\sigma b_r$ implies $\lambda_i(\wt{\mathbf{C}}) > \sigma b_r$, which gives the lower bound.
\end{proof}

\section{Proof of Theorem~\ref{thm:error_two_phases}}

\subsection{Auxiliary Results} 

We use the following convergence result for the noisy power method by~\citet{hardt2014noisy}~\citep[see also][]{balcan2016improved}, which implies the convergence bound of Thm.~\ref{thm:ref_bound}. Throughout this section, $\lambda_i := \lambda_i(\mathbf{X}\mathbf{X}^\top)$.

\begin{thm}[\citealt{hardt2014noisy}, Cor.~1.1]~\label{thm:gap_thm}
Let $k \leq b \leq p$. Let $\mathbf{U}_k \in \mathbb{R}^{p \times k}$ be the top-$k$ eigenvectors of a positive semi-definite matrix $\mathbf{K} \in \mathbb{R}^{p \times p}$ with eigenvalues $\lambda_1 \geq \cdots \geq \lambda_p \geq 0$. Suppose the noisy power method is run with block size $b$ (iterates $\mathbf{V}_\ell \in \mathbb{R}^{p \times b}$), starting from an orthonormal basis of a uniformly random $b$-dimensional subspace, and that at every iteration the noise matrix $\mathbf{G}_\ell \in \mathbb{R}^{p \times b}$ satisfies
\begin{equation*}
        5\|\mathbf{G}_\ell\|_2 \leq \zeta(\lambda_k - \lambda_{k+1}), \qquad
5\|\mathbf{U}_k^\top \mathbf{G}_\ell\|_2 \leq (\lambda_k - \lambda_{k+1}) \frac{\sqrt{b} - \sqrt{k-1}}{\tau \sqrt{p}}
\end{equation*}
for some $\tau > 0$ and $\zeta < 1/2$.
Then, with probability at least $1 - \tau^{-\Omega(b+1-k)} - e^{-\Omega(p)}$, there exists
$
L = O\left( \frac{\lambda_k}{\lambda_k - \lambda_{k+1}} \log\left( \frac{\tau p}{\zeta} \right) \right)
$
such that
$
\norm{(\mathbf{I} - \mathbf{V}_T \mathbf{V}_T^\top)\mathbf{U}_k}_2 \leq \zeta
$
for all $T \geq L$.
\end{thm}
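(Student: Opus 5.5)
The statement is Corollary~1.1 of \citet{hardt2014noisy}. I would reproduce its proof by tracking a principal-angle potential between $\mathbf{U}_k$ and the iterates. The argument has three steps: a bound on the starting angle, a one-step contraction of that angle under noise, and an unrolling of the contraction.

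\textbf{Setup.} Write $\mathbf{U}_{-k}$ for an orthonormal basis of the orthogonal complement of $\operatorname{span}(\mathbf{U}_k)$. For a $p\times b$ orthonormal $\mathbf{V}$, define the largest principal angle through
$$\tan\theta_k(\mathbf{U}_k,\mathbf{V})=\min_{\mathbf{\Pi}}\norm{\mathbf{U}_{-k}^\top\mathbf{V}\mathbf{\Pi}\,(\mathbf{U}_k^\top\mathbf{V}\mathbf{\Pi})^{-1}}_2 ,$$
where $\mathbf{\Pi}$ ranges over $b\times k$ matrices with orthonormal columns. Since $\norm{(\mathbf{I}-\mathbf{V}\mathbf{V}^\top)\mathbf{U}_k}_2=\sin\theta_k\le\tan\theta_k$, it suffices to drive $\tan\theta_k$ below $\zeta$.

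\textbf{Step 1: initialization.} For a uniformly random $b$-dimensional subspace, $\mathbf{U}_k^\top\mathbf{V}_0$ is distributed like a $k\times b$ Gaussian matrix scaled by roughly $1/\sqrt{p}$. I would invoke the Rudelson--Vershynin bound on the smallest singular value of a rectangular Gaussian matrix: $\sigma_{\min}\ge(\sqrt b-\sqrt{k-1})/\tau$ with probability $1-\tau^{-\Omega(b+1-k)}$. Together with $\norm{\cdot}_2=O(\sqrt p)$ for the full Gaussian, which holds with probability $1-e^{-\Omega(p)}$, this gives
$$\tan\theta_k(\mathbf{U}_k,\mathbf{V}_0)\le \frac{\tau\sqrt p}{\sqrt b-\sqrt{k-1}}.$$
These two events account exactly for the failure probability in the statement.

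\textbf{Step 2: one-step contraction (main obstacle).} Here $\mathbf{V}_{\ell+1}$ spans $\mathbf{K}\mathbf{V}_\ell+\mathbf{G}_\ell$. Fix $\mathbf{\Pi}$ attaining the tangent at step $\ell$. Then
$$\mathbf{U}_k^\top(\mathbf{K}\mathbf{V}_\ell+\mathbf{G}_\ell)\mathbf{\Pi}=\boldsymbol{\Lambda}_k\mathbf{U}_k^\top\mathbf{V}_\ell\mathbf{\Pi}+\mathbf{U}_k^\top\mathbf{G}_\ell\mathbf{\Pi}.$$
Its smallest singular value is at least $\lambda_k\cos\theta_\ell-\norm{\mathbf{U}_k^\top\mathbf{G}_\ell}_2$. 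The complementary block has norm at most $\lambda_{k+1}\sin\theta_\ell+\norm{\mathbf{G}_\ell}_2$.

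The target inequality is
$$\tan\theta_{\ell+1}\le\max\Bigl\{\zeta,\ \max\bigl\{\zeta,(\lambda_{k+1}/\lambda_k)^{1/4}\bigr\}\tan\theta_\ell\Bigr\}.$$
To get it I would split into two regimes. When $\theta_\ell$ is large, $\cos\theta_\ell\gtrsim 1/\tan\theta_\ell\ge(\sqrt b-\sqrt{k-1})/(\tau\sqrt p)$. In this regime the second noise condition keeps the $\mathbf{U}_k^\top\mathbf{G}_\ell$ term below a constant fraction of $(\lambda_k-\lambda_{k+1})\cos\theta_\ell$, and the first condition keeps $\norm{\mathbf{G}_\ell}_2$ below a fraction of the gap. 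When $\theta_\ell$ is small, $\cos\theta_\ell\ge1/2$ and only the first condition is needed, which yields the floor $\zeta$.

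The delicate part is the bookkeeping of constants so that $\lambda_{k+1}+\text{noise}$ over $\lambda_k-\text{noise}$ stays below $(\lambda_{k+1}/\lambda_k)^{1/4}$. I would first try writing the noise as at most $(\lambda_k-\lambda_{k+1})/5$ times the relevant cosine factor, and then use $\frac{\lambda_{k+1}+g/5}{\lambda_k-g/5}\le(\lambda_{k+1}/\lambda_k)^{1/4}$ with $g=\lambda_k-\lambda_{k+1}$. A further subtlety is that the tangent bound must be preserved along the iterations. This holds because the contraction keeps $\tan\theta_\ell$ below its initial value, so the Step~1 lower bound on $\cos\theta_\ell$ continues to apply.

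\textbf{Step 3: unrolling.} Starting from $\tan\theta_0\le\tau\sqrt p/(\sqrt b-\sqrt{k-1})$, the factor per step is
$$(\lambda_{k+1}/\lambda_k)^{1/4}\le\exp\Bigl(-\frac{\lambda_k-\lambda_{k+1}}{4\lambda_k}\Bigr).$$
So after $L=O\bigl(\tfrac{\lambda_k}{\lambda_k-\lambda_{k+1}}\log(\tau p/\zeta)\bigr)$ steps, $\tan\theta_\ell\le\zeta$. The $\max\{\zeta,\cdot\}$ form of the recursion keeps it there for all $T\ge L$. Since $\sin\theta_T\le\tan\theta_T\le\zeta$, this gives $\norm{(\mathbf{I}-\mathbf{V}_T\mathbf{V}_T^\top)\mathbf{U}_k}_2\le\zeta$.
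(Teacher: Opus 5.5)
The paper does not prove this statement: it imports it verbatim from Hardt and Price (2014, Cor.~1.1) and only applies it, with $b=k$, inside Theorem~\ref{thm:ref_bound}. Your outline follows the structure of their argument:
\begin{enumerate}
\item an initial tangent bound from the smallest singular value of the Gaussian block $\mathbf{U}_k^\top\mathbf{G}_0$;
\item a one-step recursion for $\tan\theta_k$ with floor $\zeta$;
\item unrolling that recursion.
\end{enumerate}
The one organizational difference is your split into large-angle and small-angle regimes. Hardt and Price avoid the split with the single estimate $\|\mathbf{G}_\ell\|_2/\cos\theta_\ell\le\|\mathbf{G}_\ell\|_2(1+\tan\theta_\ell)$. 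With $g=\lambda_k-\lambda_{k+1}$ and $\Delta=g/4$, this gives the affine recursion $\tan\theta_{\ell+1}\le\frac{\lambda_{k+1}+\zeta\Delta}{\lambda_{k+1}+3\Delta}\tan\theta_\ell+\frac{\zeta\Delta}{\lambda_{k+1}+3\Delta}$. Your split also works. In the small-angle regime ($\cos\theta_\ell\ge 1/2$) your ratio is a convex combination of $\tan\theta_\ell$ and $\zeta/2$, and that yields both the floor $\zeta$ and a contraction.

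There is one concrete error, in the large-angle bookkeeping. The inequality $\frac{\lambda_{k+1}+g/5}{\lambda_k-g/5}\le(\lambda_{k+1}/\lambda_k)^{1/4}$ is false when $\lambda_{k+1}/\lambda_k$ is small. For an exactly rank-$k$ $\mathbf{K}$ ($\lambda_{k+1}=0$) the left side is $1/4$ and the right side is $0$. The cause is that writing $\|\mathbf{G}_\ell\|_2\le (g/5)\sin\theta_\ell$ throws away the factor $\zeta$ from the first noise condition. As a result, your target recursion with $\max\{\zeta,(\lambda_{k+1}/\lambda_k)^{1/4}\}$ is not reached this way when $\zeta<1/4$.

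The repair is local:
\begin{itemize}
\item Either keep $\zeta$: on $\tan\theta_\ell\ge1$ you have $\|\mathbf{G}_\ell\|_2\le\zeta g/5\le\sqrt{2}\,\zeta\,(g/5)\sin\theta_\ell$.
\item Or drop the $1/4$-power target. Step~3 only needs a per-step factor $1-\Omega(g/\lambda_k)$ above the floor, and indeed $\frac{\lambda_{k+1}+g/5}{\lambda_{k+1}+4g/5}\le 1-\frac{3g}{5\lambda_k}\le e^{-3g/(5\lambda_k)}$.
\end{itemize}

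Watch one further constant in the same regime. Invariance $\tan\theta_\ell\le\max\{\zeta,\tan\theta_0\}$ only gives $\cos\theta_\ell\ge 1/(1+\tan\theta_0)\ge(\sqrt b-\sqrt{k-1})/(2\tau\sqrt p)$ for $\tau\ge1$. Hence the second condition yields only $\|\mathbf{U}_k^\top\mathbf{G}_\ell\|_2\le (2g/5)\cos\theta_\ell$, not $(g/5)\cos\theta_\ell$. The denominator is therefore $\lambda_{k+1}+3g/5$, and the factor is $\frac{\lambda_{k+1}+g/5}{\lambda_{k+1}+3g/5}\le 1-\frac{2g}{5\lambda_k}$. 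This is still fine; alternatively, absorb the $2$ into $\tau$.

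With these corrections your unrolling goes through. Using $\tan\theta_0\le\tau\sqrt p\,(\sqrt k+\sqrt{k-1})\le 2\tau p$, it gives the stated $L=O\bigl(\frac{\lambda_k}{g}\log(\tau p/\zeta)\bigr)$.
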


To analyze the effect of the additive noise of Analyze Gauss, we will need a concentration bound for the spectral norm of symmetric Gaussian matrices. 

\begin{lem}[Thm.\;II.7, \citealp{davidson2001local}] \label{thm:symmetric_concentration}
Let $\mathbf{E} \in \mathbb{R}^{r \times r}$ be a symmetric Gaussian matrix where entries $E_{ij}$ ($i \leq j$) are i.i.d. $\mathcal{N}(0,1)$ and let $t \geq 0$. Then, the spectral norm satisfies
$$
\mathbb{P}\left( \|\mathbf{E}\|_2 \geq 2\sqrt{r} + t \right) \leq 2 \exp\left( - \frac{t^2}{4} \right).
$$
\end{lem}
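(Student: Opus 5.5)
The plan is to prove the bound in two steps: first bound the expectation of the largest and smallest eigenvalues by $2\sqrt{r}$ with a Gaussian comparison inequality, then add Gaussian concentration for Lipschitz functions. Since $\|\mathbf{E}\|_2 = \max\{\lambda_{\max}(\mathbf{E}), -\lambda_{\min}(\mathbf{E})\}$ and $-\mathbf{E} \stackrel{d}{=} \mathbf{E}$, it suffices to show $\mathbb{P}(\lambda_{\max}(\mathbf{E}) \geq 2\sqrt{r}+t) \leq \exp(-t^2/4)$ and then take a union bound over the two signs. The union bound is where the factor $2$ in the statement comes from.

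\emph{Expectation.} Write $\lambda_{\max}(\mathbf{E}) = \sup_{\|\mathbf{u}\|_2=1} X_{\mathbf{u}}$ with the centered Gaussian process
\[
X_{\mathbf{u}} = \mathbf{u}^\top\mathbf{E}\mathbf{u} = \sum_i E_{ii}u_i^2 + 2\sum_{i<j}E_{ij}u_iu_j .
\]
The increments satisfy
\[
\mathbb{E}(X_{\mathbf{u}}-X_{\mathbf{v}})^2 = \sum_i (u_i^2-v_i^2)^2 + 4\sum_{i<j}(u_iu_j-v_iv_j)^2 \leq 2\|\mathbf{u}\mathbf{u}^\top-\mathbf{v}\mathbf{v}^\top\|_F^2 .
\]
For unit vectors, with $c=\mathbf{u}^\top\mathbf{v}$, we have $\|\mathbf{u}\mathbf{u}^\top-\mathbf{v}\mathbf{v}^\top\|_F^2 = 2(1-c^2) \leq 4(1-c) = 2\|\mathbf{u}-\mathbf{v}\|_2^2$. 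Hence $\mathbb{E}(X_{\mathbf{u}}-X_{\mathbf{v}})^2 \leq 4\|\mathbf{u}-\mathbf{v}\|_2^2$. This is exactly the increment variance of the comparison process $Y_{\mathbf{u}} = 2\mathbf{g}^\top\mathbf{u}$ with $\mathbf{g}\sim\mathcal{N}(0,\mathbf{I}_r)$. The Sudakov--Fernique inequality then gives
\[
\mathbb{E}\lambda_{\max}(\mathbf{E}) \leq \mathbb{E}\sup_{\mathbf{u}} Y_{\mathbf{u}} = 2\,\mathbb{E}\|\mathbf{g}\|_2 \leq 2\sqrt{r}.
\]
I expect this comparison step to be the main obstacle. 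The increment computation must be done carefully, because the diagonal and off-diagonal entries have the same variance here, unlike in the normalized GOE. The factor $2$ in the off-diagonal cross terms is exactly what makes the constant come out as $2$. If the Sudakov--Fernique route had difficulties, I would fall back on Slepian/Gordon as in \citet{davidson2001local}.

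\emph{Concentration.} View $\lambda_{\max}(\mathbf{E})$ as a function of the $r(r+1)/2$ independent standard entries $\mathbf{e}=(E_{ij})_{i\leq j}$. By Weyl's inequality,
\[
|\lambda_{\max}(\mathbf{E})-\lambda_{\max}(\mathbf{E}')| \leq \|\mathbf{E}-\mathbf{E}'\|_2 \leq \|\mathbf{E}-\mathbf{E}'\|_F \leq \sqrt{2}\,\|\mathbf{e}-\mathbf{e}'\|_2 ,
\]
since each off-diagonal entry appears twice in the Frobenius norm. So the map is $\sqrt{2}$-Lipschitz. The Gaussian concentration inequality (Tsirelson--Ibragimov--Sudakov) then yields
\[
\mathbb{P}\bigl(\lambda_{\max}(\mathbf{E}) \geq \mathbb{E}\lambda_{\max}(\mathbf{E}) + t\bigr) \leq \exp\bigl(-t^2/(2\cdot 2)\bigr) = \exp(-t^2/4).
\]
Combining this with the expectation bound and applying the same argument to $\lambda_{\max}(-\mathbf{E})$ gives $\mathbb{P}(\|\mathbf{E}\|_2 \geq 2\sqrt{r}+t) \leq 2\exp(-t^2/4)$, as claimed.
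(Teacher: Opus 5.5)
Your proof is correct: the increment bound $\mathbb{E}(X_{\mathbf{u}}-X_{\mathbf{v}})^2\le 2\|\mathbf{u}\mathbf{u}^\top-\mathbf{v}\mathbf{v}^\top\|_F^2\le 4\|\mathbf{u}-\mathbf{v}\|_2^2$, the $\sqrt{2}$-Lipschitz constant, and the union bound over $\pm\mathbf{E}$ all hold under the paper's normalization, where the diagonal also has unit variance; this only shrinks the increments relative to the GOE. The paper gives no argument of its own and imports the bound from Davidson--Szarek, and your route (Gaussian comparison for $\mathbb{E}\lambda_{\max}(\mathbf{E})\le 2\sqrt{r}$, then Gaussian concentration of a Lipschitz function) is the standard proof behind that citation.
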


We will also need the following concentration bound for converting the bounds of Thm.\;\ref{thm:gap_thm}.

\begin{lem}[\citealp{rudelson2010non}, Eq.~(2.3)]\label{thm:rv_gaussian}
Let $\mathbf{A} \in \mathbb{R}^{N \times n}$ be a matrix with i.i.d.\ $\mathcal{N}(0,1)$ entries. Then, for every $t \geq 0$,
$$
\mathbb{P}\!\left( s_{\max}(\mathbf{A}) \;\leq\; \sqrt{N} + \sqrt{n} + t \right) \;\geq\; 1 - 2 e^{-t^2/2}.
$$
\end{lem}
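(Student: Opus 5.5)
The plan is to prove the bound in two separate steps. First, bound the expectation $\mathbb{E}\, s_{\max}(\mathbf{A}) \le \sqrt{N} + \sqrt{n}$ using a Gaussian comparison inequality. Second, show that $s_{\max}(\mathbf{A})$ concentrates around its mean with Gaussian tails by Lipschitz concentration. Since $s_{\max}(\mathbf{A}) = \max_{\mathbf{u}\in S^{N-1},\,\mathbf{v}\in S^{n-1}} \mathbf{u}^\top \mathbf{A}\mathbf{v}$, the tail bound will follow from $\mathbb{P}(s_{\max}(\mathbf{A}) > \mathbb{E}\,s_{\max}(\mathbf{A}) + t) \le e^{-t^2/2}$. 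This is actually stronger than the stated $1-2e^{-t^2/2}$; the factor $2$ just leaves slack for the two-sided version.

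\textbf{Step 1: the expectation.} I would compare two centered Gaussian processes indexed by $(\mathbf{u},\mathbf{v}) \in S^{N-1}\times S^{n-1}$:
\[
X_{\mathbf{u},\mathbf{v}} = \mathbf{u}^\top \mathbf{A}\mathbf{v}, \qquad Y_{\mathbf{u},\mathbf{v}} = \mathbf{g}^\top \mathbf{u} + \mathbf{h}^\top \mathbf{v},
\]
where $\mathbf{g}\sim\mathcal{N}(0,\mathbf{I}_N)$ and $\mathbf{h}\sim\mathcal{N}(0,\mathbf{I}_n)$ are independent. A direct computation gives
\[
\mathbb{E}(X_{\mathbf{u},\mathbf{v}}-X_{\mathbf{u}',\mathbf{v}'})^2 = 2 - 2\langle \mathbf{u},\mathbf{u}'\rangle\langle \mathbf{v},\mathbf{v}'\rangle
\]
and
\[
\mathbb{E}(Y_{\mathbf{u},\mathbf{v}}-Y_{\mathbf{u}',\mathbf{v}'})^2 = 4 - 2\langle \mathbf{u},\mathbf{u}'\rangle - 2\langle \mathbf{v},\mathbf{v}'\rangle.
\]
Their difference is $2(1-\langle \mathbf{u},\mathbf{u}'\rangle)(1-\langle \mathbf{v},\mathbf{v}'\rangle) \ge 0$, so the increments of $X$ are dominated by those of $Y$.

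The Sudakov–Fernique inequality (Slepian-type comparison of expected suprema) then gives $\mathbb{E}\sup X \le \mathbb{E}\sup Y$. The right-hand side is
\[
\mathbb{E}\sup Y = \mathbb{E}\|\mathbf{g}\|_2 + \mathbb{E}\|\mathbf{h}\|_2 \le \sqrt{N}+\sqrt{n},
\]
where the last inequality is Jensen's inequality. I expect this step to be the main obstacle. One must check the increment inequality uniformly over the product of spheres, and either invoke Sudakov–Fernique for the continuous index set (via a separability argument) or restrict to finite nets and pass to the limit. I would do the latter, since the supremum over a dense countable subset equals the supremum over the whole set by continuity.

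\textbf{Step 2: concentration.} The map $\mathbf{A}\mapsto s_{\max}(\mathbf{A}) = \|\mathbf{A}\|_2$ is $1$-Lipschitz with respect to the Frobenius norm, because
\[
\bigl|\|\mathbf{A}\|_2 - \|\mathbf{B}\|_2\bigr| \le \|\mathbf{A}-\mathbf{B}\|_2 \le \|\mathbf{A}-\mathbf{B}\|_F .
\]
The Frobenius norm is the Euclidean norm on the $Nn$ i.i.d.\ standard Gaussian entries. Gaussian concentration for Lipschitz functions (Tsirelson–Ibragimov–Sudakov) therefore yields
\[
\mathbb{P}\bigl(s_{\max}(\mathbf{A}) \ge \mathbb{E}\,s_{\max}(\mathbf{A}) + t\bigr) \le e^{-t^2/2}.
\]
Combining this with Step 1 gives
\[
\mathbb{P}\bigl(s_{\max}(\mathbf{A}) \le \sqrt{N}+\sqrt{n}+t\bigr) \ge 1 - e^{-t^2/2} \ge 1 - 2e^{-t^2/2},
\]
which is the claim.
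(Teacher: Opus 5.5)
Your proof is correct, and it follows the same route as the source. The paper gives no proof of its own; it imports Eq.~(2.3) of Rudelson--Vershynin, which is obtained exactly as you do: Gordon's bound $\mathbb{E}\,s_{\max}(\mathbf{A})\le\sqrt{N}+\sqrt{n}$ via Sudakov--Fernique comparison with $\mathbf{g}^\top\mathbf{u}+\mathbf{h}^\top\mathbf{v}$, combined with Gaussian concentration for the $1$-Lipschitz map $\mathbf{A}\mapsto\|\mathbf{A}\|_2$. Your one-sided tail $e^{-t^2/2}$ is slightly sharper than stated, since the factor $2$ in the source accounts for the two-sided statement that also bounds $s_{\min}(\mathbf{A})\ge\sqrt{N}-\sqrt{n}-t$.
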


The following result gives a bound for the projection error $\norm{ (\mathbf{I} - \mathbf{V}_i \mathbf{V}_i^\top) \mathbf{U}_k }_2$, where  $\mathbf{U}_k \in \mathbb{R}^{p \times k}$ gives the principal $k$ eigenvectors. It is an adaptation of~\citep[Thm 1.3,][]{hardt2014noisy} to our noise calibration.
We will need this result in the overall error bound given in Thm.~\ref{thm:error_two_phases}.

\begin{thm} \label{thm:ref_bound}
Let $\mathbf{U}_k \in \mathbb{R}^{p \times k}$ be the matrix of leading $k$ eigenvectors of $\mathbf{X}\mathbf{X}^\top$, and let $\mathbf{V}_T$ be the output of the noisy power method (Algorithm~\ref{alg:DP_power}) with block size $k$, a public number of iterations $T$ and i.i.d.\ noise entries $(\mathbf{Z}_t)_{ij} \sim \mathcal{N}(0, C^4 \sigma^2)$, where $C$ is the public bound on $\|\mathbf{x}_i\|_2$.
\begin{enumerate}
\item[(i)] The algorithm is $\sqrt{T}/\sigma$-GDP. In particular, if $\sigma = \varepsilon^{-1}\sqrt{4 \, T \log(1/\delta)}$ for some $\delta \in (0,1)$ and $0 < \varepsilon \leq 2\log(1/\delta)$, the algorithm satisfies $(\varepsilon,\delta)$-DP.
\item[(ii)] Fix a constant $\tau \geq 1$, let $\nu \in (0,1)$, and let
$$
\bar\zeta := \frac{10\,\tau\, C^2\,\sigma\sqrt{p}\,\bigl(2 + \sqrt{2\log(4T/\nu)}\bigr)}
     {\lambda_k - \lambda_{k+1}}
\cdot \frac{\sqrt{k}}{\sqrt{k} - \sqrt{k-1}}
\;=\; O\!\left(\frac{C^2\,\sigma\sqrt{p}\,\bigl(1 + \sqrt{\log(T/\nu)}\bigr)}
     {\lambda_k - \lambda_{k+1}}
\cdot k \right).
$$
If $T \geq L$ for $L = O\!\left(\tfrac{\lambda_k}{\lambda_k - \lambda_{k+1}}\log(\tau p / \bar\zeta) \right)$ as in Theorem~\ref{thm:gap_thm}, then, w.p. at least $1 - \nu - p_0$ with $p_0 = \tau^{-\Omega(1)} + e^{-\Omega(p)}$,
\begin{equation*}
\bigl\| (\mathbf{I} - \mathbf{V}_T \mathbf{V}_T^\top)\mathbf{U}_k \bigr\|_2
\;\leq\; \min\{1, \bar\zeta\}.
\end{equation*}
\end{enumerate}
\begin{proof}
(i) Each iteration is a Gaussian mechanism with sensitivity $C^2$ and noise standard deviation $C^2\sigma$ (Section~\ref{sec:comm_eff_pca}), so Lemma~\ref{lem:gauss_dp} gives $\widetilde{\Delta} = \sqrt{T}/\sigma$, and the $(\varepsilon,\delta)$-claim follows from Corollary~\ref{cor:calibration} with $\Delta = C^2$.

(ii) We apply Theorem~\ref{thm:gap_thm} with block size $b = k$, so that $\tau^{-\Omega(b+1-k)} = \tau^{-\Omega(1)}$ and the two conditions read
\begin{equation}\label{eq:hp-conditions}
5\|\mathbf{G}_t\|_2 \;\le\; \zeta\,(\lambda_k - \lambda_{k+1}),
\qquad
5\|\mathbf{U}_k^\top \mathbf{G}_t\|_2 \;\le\; (\lambda_k - \lambda_{k+1})
\,\frac{\sqrt{k}-\sqrt{k-1}}{\tau\sqrt{p}}.
\end{equation}

At each iteration, $\mathbf{G}_t$ is a $p \times k$ matrix with i.i.d. $\mathcal{N}(0, C^4 \sigma^2)$ entries. By the (constant-free) Gaussian concentration bound for the largest singular value~\citep[Eq.\,(2.3)]{rudelson2010non}, applied after factoring out the scale $C^2\sigma$, for any $s \ge 0$,
$$
\mathbb{P}\!\left(\|\mathbf{G}_t\|_2
\;\le\; C^2\sigma\bigl(\sqrt{p} + \sqrt{k} + s\bigr)\right)
\;\ge\; 1 - 2e^{-s^2/2}.
$$
Since $\mathbf{U}_k^\top \mathbf{G}_t \in \mathbb{R}^{k \times k}$ is again a Gaussian matrix with i.i.d. $\mathcal{N}(0, C^4 \sigma^2)$ entries (by rotational invariance of the Gaussian and the orthonormality of $\mathbf{U}_k$), the same bound gives
$$
\mathbb{P}\!\left(\|\mathbf{U}_k^\top \mathbf{G}_t\|_2
\;\le\; C^2\sigma\bigl(2\sqrt{k} + s\bigr)\right)
\;\ge\; 1 - 2e^{-s^2/2}.
$$
Setting $s = \sqrt{2\log(4T/\nu)}$ and taking a union bound over the $T$
iterations and the two events at each iteration, we obtain that
simultaneously for all $t \le T$, with probability at least $1 - \nu$,
\begin{equation}\label{eq:gauss-norms}
\|\mathbf{G}_t\|_2 \;\le\; C^2\sigma\sqrt{p}\,(2 + s),
\qquad
\|\mathbf{U}_k^\top \mathbf{G}_t\|_2 \;\le\; C^2\sigma\sqrt{k}\,(2 + s),
\end{equation}
where we used $\sqrt{p} + \sqrt{k} + s \leq \sqrt{p}(2+s)$ and $2\sqrt{k} + s \leq \sqrt{k}(2+s)$ (since $1 \leq k \le p$). We work on this event and on the success event of the initialization in Theorem~\ref{thm:gap_thm}; a union bound gives the stated probability.
Let
$$
\zeta' := \frac{5C^2\sigma\sqrt{p}\,(2+s)}{\lambda_k - \lambda_{k+1}},
\qquad \text{so that} \qquad \bar\zeta = 2\tau\,\zeta'\, \frac{\sqrt{k}}{\sqrt{k}-\sqrt{k-1}} \geq 2\zeta'.
$$
By~\eqref{eq:gauss-norms}, the first condition of~\eqref{eq:hp-conditions} holds with $\zeta = \zeta'$. Since $\|(\mathbf{I} - \mathbf{V}_T\mathbf{V}_T^\top)\mathbf{U}_k\|_2 \leq 1$ always holds, it suffices to consider the case $\bar\zeta < 1$, which implies $\zeta' < 1/2$. In this case also the second condition of~\eqref{eq:hp-conditions} holds: otherwise, by~\eqref{eq:gauss-norms},
$$
(\lambda_k - \lambda_{k+1})\frac{\sqrt{k}-\sqrt{k-1}}{\tau\sqrt{p}} < 5\|\mathbf{U}_k^\top \mathbf{G}_t\|_2 \leq 5C^2\sigma\sqrt{k}\,(2+s),
$$
i.e., $\tau \zeta' \sqrt{k}/(\sqrt{k}-\sqrt{k-1}) > 1$ and hence $\bar\zeta > 2$, a contradiction. Theorem~\ref{thm:gap_thm} with $\zeta = \zeta'$ then gives $\|(\mathbf{I} - \mathbf{V}_T\mathbf{V}_T^\top)\mathbf{U}_k\|_2 \leq \zeta' \leq \bar\zeta$ for all $T \geq L$, where $L = O\big(\tfrac{\lambda_k}{\lambda_k - \lambda_{k+1}}\log\tfrac{\tau p}{\zeta'}\big) = O\big(\tfrac{\lambda_k}{\lambda_k - \lambda_{k+1}}\log\tfrac{\tau p}{\bar\zeta}\big)$ since $\zeta' \geq \bar\zeta/(4\tau k)$ and $k \leq p$. Finally, $\frac{\sqrt{k}}{\sqrt{k}-\sqrt{k-1}} = \sqrt{k}(\sqrt{k}+\sqrt{k-1}) \leq 2k$ gives the $O(\cdot)$-form of $\bar\zeta$.
\end{proof}
\end{thm}

Let $\mathbf{K} = \mathbf{X}\mathbf{X}^\top$ and let $\mathbf{Q} \in \mathbb{R}^{p \times k}$ have orthonormal columns with
$
\norm{ (\mathbf{I} - \mathbf{Q} \mathbf{Q}^\top) \mathbf{U}_k }_2 \leq \zeta
$
(as given by Theorem~\ref{thm:ref_bound}).
This directly translates to the projection error~\citep[cf.][Eq.~(3)]{hardt2014noisy}
\begin{equation} \label{eq:proj_error_A}
\norm{\mathbf{K} - \mathbf{Q} \mathbf{Q}^\top \mathbf{K}}_2 \leq \lambda_{k+1} + \zeta \lambda_1,
\end{equation}
since, writing $\mathbf{K} = \mathbf{U}_k \boldsymbol{\Lambda}_k \mathbf{U}_k^\top + \mathbf{U}_\perp \boldsymbol{\Lambda}_\perp \mathbf{U}_\perp^\top$, the term $(\mathbf{I} - \mathbf{Q} \mathbf{Q}^\top)\mathbf{U}_k \boldsymbol{\Lambda}_k \mathbf{U}_k^\top$ has spectral norm at most $\zeta\lambda_1$ and $(\mathbf{I} - \mathbf{Q} \mathbf{Q}^\top)\mathbf{U}_\perp \boldsymbol{\Lambda}_\perp \mathbf{U}_\perp^\top$ at most $\lambda_{k+1}$.

This can be further translated to a bound for the error $\norm{\mathbf{K} - \mathbf{Q} \mathbf{Q}^\top \mathbf{K} \mathbf{Q} \mathbf{Q}^\top}_2$ using the following auxiliary lemma.

\begin{lem} \label{lem:triangle_PAP}
Let $\mathbf{A} \in \mathbb{R}^{m \times m}$ be symmetric and let $\mathbf{Q} \in \mathbb{R}^{m \times r}$ have orthonormal columns, i.e., $\mathbf{Q}^\top \mathbf{Q} = \mathbf{I}_r$. Then the following inequality holds:
$$
\norm{\mathbf{A} - \mathbf{Q}\mathbf{Q}^\top \mathbf{A} \mathbf{Q}\mathbf{Q}^\top}_2 \leq \sqrt{2} \, \norm{\mathbf{A} - \mathbf{Q}\mathbf{Q}^\top \mathbf{A}}_2.
$$
\begin{proof}
Define the orthogonal projector $\mathbf{P} := \mathbf{Q}\mathbf{Q}^\top$. Then
$$
\mathbf{A} - \mathbf{P} \mathbf{A} \mathbf{P} = (\mathbf{I} - \mathbf{P}) \mathbf{A} + \mathbf{P} \mathbf{A} (\mathbf{I} - \mathbf{P}).
$$
The two terms on the right-hand side have orthogonal column spaces, so
$$
\norm{\mathbf{A} - \mathbf{P} \mathbf{A} \mathbf{P}}_2 \leq \sqrt{ \norm{(\mathbf{I} - \mathbf{P}) \mathbf{A}}_2^2 + \norm{\mathbf{P} \mathbf{A} (\mathbf{I} - \mathbf{P})}_2^2 }.
$$
By symmetry of $\mathbf{A}$, $\norm{\mathbf{P} \mathbf{A} (\mathbf{I} - \mathbf{P})}_2 = \norm{(\mathbf{I} - \mathbf{P}) \mathbf{A} \mathbf{P}}_2 \leq \norm{(\mathbf{I} - \mathbf{P}) \mathbf{A}}_2$, so both terms are bounded by $\norm{\mathbf{A} - \mathbf{P} \mathbf{A}}_2$, and the result follows. 
\end{proof}
\end{lem}
With the bound~\eqref{eq:proj_error_A} and with Lemma~\ref{lem:triangle_PAP}, the error analysis for the two phases (computing the basis and the noisy projected matrix) becomes straightforward. 

\subsection{The Proof of Theorem~\ref{thm:error_two_phases}}

\begin{proof}
\emph{(i)} By Theorem~\ref{thm:ref_bound}(i) with $\sigma = \sigma_1/C^2$, the first stage (whose output $ \mathbf{V}_k$ is a post-processing of the noisy iterates) is $(\varepsilon_1,\delta_1)$-DP Given $\mathbf{V}_k$, the second stage is a single Gaussian mechanism with sensitivity $C^2$ (Section~\ref{sec:comm_eff_pca}) and noise standard deviation $\sigma_2$, hence $(\varepsilon_2,\delta_2)$-DP by Corollary~\ref{cor:calibration} with $T=1$ and $\Delta = C^2$. The claim follows from basic (adaptive) composition.

\emph{(ii)} Substituting $\widetilde{\mathbf{Y}}_k = \mathbf{V}_k^\top \mathbf{K}\mathbf{V}_k + \mathbf{Z}$,
the error decomposes as
\begin{equation*}
\mathrm{Err}_k \;=\; \mathbf{K} - \mathbf{V}_k\widetilde{\mathbf{Y}}_k\mathbf{V}_k^\top
\;=\; \bigl(\mathbf{K} - \mathbf{V}_k\mathbf{V}_k^\top \mathbf{K}\mathbf{V}_k\mathbf{V}_k^\top\bigr)
   \;-\; \mathbf{V}_k\mathbf{Z}\mathbf{V}_k^\top.
\end{equation*}
Since $\mathbf{V}_k$ has orthonormal columns, $\|\mathbf{V}_k\mathbf{Z}\mathbf{V}_k^\top\|_2 = \|\mathbf{Z}\|_2$,
and the triangle inequality gives
\begin{equation}\label{eq:two-term-decomposition}
\|\mathrm{Err}_k\|_2 \;\le\;
\|\mathbf{K} - \mathbf{V}_k\mathbf{V}_k^\top\mathbf{K}\mathbf{V}_k\mathbf{V}_k^\top\|_2
\;+\;
\|\mathbf{Z}\|_2.
\end{equation}
We bound the two terms separately.

By Lemma~\ref{lem:triangle_PAP} ($\mathbf{K}$ is symmetric),
$$
\|\mathbf{K} - \mathbf{V}_k\mathbf{V}_k^\top \mathbf{K}\mathbf{V}_k\mathbf{V}_k^\top\|_2
\;\le\; \sqrt{2}\,\|\mathbf{K} - \mathbf{V}_k\mathbf{V}_k^\top \mathbf{K}\|_2.
$$
Combining this with~\eqref{eq:proj_error_A} and Theorem~\ref{thm:ref_bound}(ii) (with $C^2\sigma = \sigma_1$), we obtain, with probability at least $1-\nu-p_0$,
\begin{equation}\label{eq:term-I}
\|\mathbf{K} - \mathbf{V}_k\mathbf{V}_k^\top \mathbf{K} \mathbf{V}_k\mathbf{V}_k^\top\|_2 \;\le\;
\sqrt{2}\bigl(\lambda_{k+1} + \lambda_1 \zeta\bigr), \qquad \zeta = \min\{1,\bar\zeta\},
\end{equation}
with $\bar\zeta$ as in Theorem~\ref{thm:ref_bound}.

Write $\mathbf{Z} = \sigma_2 \mathbf{G}$, where $\mathbf{G} \in \mathbb{R}^{k\times k}$ is a symmetric Gaussian matrix whose upper-triangular entries (including the diagonal) are i.i.d.\ $\mathcal{N}(0,1)$. By Lemma~\ref{thm:symmetric_concentration}, for any $t \ge 0$,
$$
\mathbb{P}\bigl(\|\mathbf{G}\|_2 \;\le\; 2\sqrt{k} + t\bigr) \;\ge\; 1 - 2e^{-t^2/4}.
$$
Setting $t = 2\sqrt{\log(2/\rho)}$, 
we obtain, with probability at least $1-\rho$,
\begin{equation}\label{eq:term-II}
\|\mathbf{Z}\|_2 \;\le\; \sigma_2\Bigl(2\sqrt{k} + 2\sqrt{\log(2/\rho)}\Bigr).
\end{equation}

Substituting~\eqref{eq:term-I} and~\eqref{eq:term-II} into~\eqref{eq:two-term-decomposition} and applying a union bound over the high-probability events yields the stated inequality with probability at least $1 - \nu - \rho - p_0$.
\end{proof}

\section{Proofs for Section~\ref{sec:convergence}}
\label{app:convergence_proofs}

Write $e_j:=\theta_j-\theta_*$. Since $\nabla\mathcal{L}(\theta)=\mathbf{C}_\lambda(\theta-\theta_*)$,
\eqref{eq:pdpgd} reads
$e_{j+1}=(\mathbf{I}-\eta\mathbf{P}^{-1}\mathbf{C}_\lambda)e_j-\eta\mathbf{P}^{-1}\mathcal{E}_j$.
For $\mathbf{G}\succ0$, Courant--Fischer applied to
$\mathbf{G}^{-1/2}\mathbf{C}_\lambda\mathbf{G}^{-1/2}$ \citep[see, e.g.,][Thm.~4.2.2 and Thm.~4.2.6]{horn2012matrix} shows that the extreme generalized
eigenvalues of $(\mathbf{C}_\lambda,\mathbf{G})$ are the extreme values of
$\mathbf{x}^\top\mathbf{C}_\lambda\mathbf{x}/\mathbf{x}^\top\mathbf{G}\mathbf{x}$ over
$\mathbf{x}\neq\mathbf{0}$.

\subsection{Auxiliary Lemma}

\begin{lem}\label{lem:woodbury}
If $\mathbf{V}\in\mathbb{R}^{p\times k}$ is orthonormal and
$\wt{\mathbf{Y}}+\lambda\mathbf{I}_k\succ0$, then $\wh{\mathbf{G}}\succ0$ and
\begin{equation}\label{eq:Ghatinva}
    \wh{\mathbf{G}}^{-1}
    =\mathbf{V}\bigl(\wt{\mathbf{Y}}+\lambda\mathbf{I}_k\bigr)^{-1}\mathbf{V}^\top
    +\tfrac1\lambda\bigl(\mathbf{I}-\mathbf{V}\mathbf{V}^\top\bigr).
\end{equation}
For the oracle preconditioner
$\mathbf{G}_\star:=\mathbf{U}_k\boldsymbol{\Lambda}_k\mathbf{U}_k^\top+\lambda\mathbf{I}$
($\mathbf{V}=\mathbf{U}_k$, $\mathbf{Z}=\mathbf{0}$), the generalized eigenvalues
of $(\mathbf{C}_\lambda,\mathbf{G}_\star)$ are $1$ for $i\le k$ and $(\lambda_i+\lambda)/\lambda$
for $i>k$. They all lie in $[1,\kappa_k]$, and
$\tr(\mathbf{G}_\star^{-1})=\sum_{i\le k}(\lambda_i+\lambda)^{-1}+(p-k)/\lambda$.
\end{lem}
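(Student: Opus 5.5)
Since $\mathbf{V}$ has orthonormal columns, I would complete it to an orthogonal matrix $[\mathbf{V},\mathbf{V}_\perp]$, so that $\mathbf{I}=\mathbf{V}\mathbf{V}^\top+\mathbf{V}_\perp\mathbf{V}_\perp^\top$. Then I would write
\[
\wh{\mathbf{G}}=\mathbf{V}(\wt{\mathbf{Y}}+\lambda\mathbf{I}_k)\mathbf{V}^\top+\lambda\mathbf{V}_\perp\mathbf{V}_\perp^\top
=[\mathbf{V},\mathbf{V}_\perp]\,\mathrm{diag}\bigl(\wt{\mathbf{Y}}+\lambda\mathbf{I}_k,\ \lambda\mathbf{I}_{p-k}\bigr)\,[\mathbf{V},\mathbf{V}_\perp]^\top .
\]
This is an orthogonal similarity to a block-diagonal matrix. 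Both diagonal blocks are SPD, since $\wt{\mathbf{Y}}+\lambda\mathbf{I}_k\succ0$ by assumption and $\lambda>0$, so $\wh{\mathbf{G}}\succ0$. Inverting blockwise and conjugating back gives \eqref{eq:Ghatinva}, using $\mathbf{V}_\perp\mathbf{V}_\perp^\top=\mathbf{I}-\mathbf{V}\mathbf{V}^\top$. As a sanity check, one can multiply $\wh{\mathbf{G}}$ by the claimed inverse directly. This uses $\mathbf{V}^\top\mathbf{V}=\mathbf{I}_k$, so the cross terms vanish, and the product collapses to $\mathbf{V}\mathbf{V}^\top+(\mathbf{I}-\mathbf{V}\mathbf{V}^\top)=\mathbf{I}$.

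For the oracle, I would set $\mathbf{V}=\mathbf{U}_k$ and $\wt{\mathbf{Y}}=\mathbf{U}_k^\top\mathbf{C}\mathbf{U}_k=\boldsymbol{\Lambda}_k$. Then $\mathbf{G}_\star=\mathbf{U}\,\mathrm{diag}(\lambda_1+\lambda,\dots,\lambda_k+\lambda,\lambda,\dots,\lambda)\mathbf{U}^\top$, which is diagonal in the same basis as $\mathbf{C}_\lambda=\mathbf{U}(\boldsymbol{\Lambda}+\lambda\mathbf{I})\mathbf{U}^\top$. The two matrices therefore commute. It follows that $\mathbf{G}_\star^{-1/2}\mathbf{C}_\lambda\mathbf{G}_\star^{-1/2}=\mathbf{U}\mathbf{D}\mathbf{U}^\top$, where $\mathbf{D}$ is diagonal with entries $(\lambda_i+\lambda)/(\lambda_i+\lambda)=1$ for $i\le k$ and $(\lambda_i+\lambda)/\lambda$ for $i>k$. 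These are exactly the generalized eigenvalues.

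For the range, each entry with $i>k$ is at least $1$ because $\lambda_i\ge0$. It is also at most $1+\lambda_{k+1}/\lambda=\kappa_k$, since the eigenvalues are ordered. The trace formula follows by reading off the diagonal of $\mathbf{G}_\star^{-1}$ in the $\mathbf{U}$ basis.

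No step is a real obstacle. The only point needing care is that $\mathbf{U}_k^\top\mathbf{C}\mathbf{U}_k=\boldsymbol{\Lambda}_k$ holds exactly. That requires $\mathbf{U}_k$ to be a genuine set of eigenvectors of $\mathbf{C}$, with ties handled by fixing one orthonormal eigenbasis $\mathbf{U}$ throughout.
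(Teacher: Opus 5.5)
Your proposal is correct and follows essentially the same route as the paper. Both arguments complete $\mathbf{V}$ to an orthogonal $[\mathbf{V},\mathbf{V}_\perp]$, block-diagonalize $\wh{\mathbf{G}}$ and invert blockwise, and for the oracle read off the generalized eigenvalues and $\tr(\mathbf{G}_\star^{-1})$ from the common eigenbasis $\mathbf{U}$ of $\mathbf{C}_\lambda$ and $\mathbf{G}_\star$. Your direct-multiplication check and your remark about fixing one eigenbasis when eigenvalues tie are harmless additions that the paper leaves implicit.
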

\begin{proof}
Let $\mathbf{O}=[\mathbf{V},\mathbf{V}_\perp]$ be orthogonal. Then
$\mathbf{O}^\top\wh{\mathbf{G}}\mathbf{O}
=\mathrm{diag}\bigl(\wt{\mathbf{Y}}+\lambda\mathbf{I}_k,\,\lambda\mathbf{I}_{p-k}\bigr)\succ0$.
Inverting blockwise and using
$\mathbf{V}_\perp\mathbf{V}_\perp^\top=\mathbf{I}-\mathbf{V}\mathbf{V}^\top$ gives
\eqref{eq:Ghatinva}. For $\mathbf{V}=\mathbf{U}_k$, $\mathbf{Z}=\mathbf{0}$ we have
$\wt{\mathbf{Y}}=\boldsymbol{\Lambda}_k$, so in the basis $\mathbf{U}$ both
$\mathbf{C}_\lambda$ and $\mathbf{G}_\star$ are diagonal. The entries of $\mathbf{C}_\lambda$ are
$\lambda_i+\lambda$. The entries of $\mathbf{G}_\star$ are $\lambda_i+\lambda$ for
$i\le k$ and $\lambda$ for $i>k$. The generalized eigenvalues are the ratios:
$1$ for $i\le k$, and $1+\lambda_i/\lambda\in[1,\kappa_k]$ for $i>k$, since
$0\le\lambda_i\le\lambda_{k+1}$. The trace is the sum of the reciprocal diagonal
entries of $\mathbf{G}_\star$.
\end{proof}

\subsection{Proof of Lemma~\ref{lem:perturb}}
\begin{proof}
Let $\mathbf{M}:=\mathbf{G}_\star^{-1/2}\wh{\mathbf{G}}\mathbf{G}_\star^{-1/2}$.
By definition of $\gamma$,
$(1-\gamma)\mathbf{I}\preccurlyeq\mathbf{M}\preccurlyeq(1+\gamma)\mathbf{I}$, and
congruence with $\mathbf{G}_\star^{1/2}$ gives the sandwich. If $\gamma<1$, then
$\wh{\mathbf{G}}\succ0$, and for $\mathbf{x}\neq\mathbf{0}$,
\begin{equation*}
\frac{\mathbf{x}^\top\mathbf{C}_\lambda\mathbf{x}}{\mathbf{x}^\top\wh{\mathbf{G}}\mathbf{x}}
=\frac{\mathbf{x}^\top\mathbf{C}_\lambda\mathbf{x}}{\mathbf{x}^\top\mathbf{G}_\star\mathbf{x}}
\cdot\frac{\mathbf{x}^\top\mathbf{G}_\star\mathbf{x}}{\mathbf{x}^\top\wh{\mathbf{G}}\mathbf{x}}
\in\Bigl[\frac{1}{1+\gamma},\frac{\kappa_k}{1-\gamma}\Bigr]
\end{equation*}
by Lemma~\ref{lem:woodbury} and the sandwich. Inversion is order-reversing on SPD
matrices, so $\wh{\mathbf{G}}^{-1}\preccurlyeq(1-\gamma)^{-1}\mathbf{G}_\star^{-1}$.
Taking traces gives the trace bound.

For the bound on $\gamma$, note that $\lambda_{\min}(\mathbf{G}_\star)=\lambda$
(as $k<p$), so
$\gamma\le\norm{\wh{\mathbf{G}}-\mathbf{G}_\star}_2/\lambda$. Let
$\boldsymbol{\Pi}:=\mathbf{V}\mathbf{V}^\top$ and
$\boldsymbol{\Pi}_\star:=\mathbf{U}_k\mathbf{U}_k^\top$. Then
$\mathbf{G}_\star=\boldsymbol{\Pi}_\star\mathbf{C}\boldsymbol{\Pi}_\star+\lambda\mathbf{I}$
and
$\mathbf{V}\wt{\mathbf{Y}}\mathbf{V}^\top
=\boldsymbol{\Pi}\mathbf{C}\boldsymbol{\Pi}+\mathbf{V}\mathbf{Z}\mathbf{V}^\top$, so
\begin{equation*}
\wh{\mathbf{G}}-\mathbf{G}_\star
=(\boldsymbol{\Pi}-\boldsymbol{\Pi}_\star)\mathbf{C}\boldsymbol{\Pi}
+\boldsymbol{\Pi}_\star\mathbf{C}(\boldsymbol{\Pi}-\boldsymbol{\Pi}_\star)
+\mathbf{V}\mathbf{Z}\mathbf{V}^\top,
\end{equation*}
and hence
$\norm{\wh{\mathbf{G}}-\mathbf{G}_\star}_2
\le2\lambda_1\norm{\boldsymbol{\Pi}-\boldsymbol{\Pi}_\star}_2+\norm{\mathbf{Z}}_2$.
For orthogonal projectors of equal rank,
$\norm{\boldsymbol{\Pi}-\boldsymbol{\Pi}_\star}_2
=\norm{(\mathbf{I}-\boldsymbol{\Pi})\boldsymbol{\Pi}_\star}_2=\zeta$,
which is the sine of the largest principal angle
\citep[Thm.~I.5.5]{stewart1990matrix}. The tail bound is
Lemma~\ref{thm:symmetric_concentration} with $t=2\sqrt{\log(2/\rho)}$.
\end{proof}

\subsection{Proof of Theorem~\ref{thm:total}}
\begin{proof}
Condition on $\mathbf{P}$. Let $f_j:=\mathbf{C}_\lambda^{1/2}e_j$, so that
$\norm{f_j}_2^2=2(\mathcal{L}(\theta_j)-\mathcal{L}(\theta_*))$, and let
$\mathbf{S}:=\mathbf{C}_\lambda^{1/2}\mathbf{P}^{-1}\mathbf{C}_\lambda^{1/2}$.
The matrix $\mathbf{S}$ is similar to $\mathbf{P}^{-1}\mathbf{C}_\lambda$, so its spectrum
lies in $[a,b]$. Since $\eta b\le1$, this gives
$\mathbf{0}\preccurlyeq\mathbf{I}-\eta\mathbf{S}\preccurlyeq(1-\eta a)\mathbf{I}$.
The error evolves as
$f_{j+1}=(\mathbf{I}-\eta\mathbf{S})f_j-\eta\mathbf{C}_\lambda^{1/2}\mathbf{P}^{-1}\mathcal{E}_j$.
The $\mathcal{E}_j$ are independent, zero-mean and independent of $\mathbf{P}$.
With $\mathbf{N}:=\mathbf{C}_\lambda^{1/2}\mathbf{P}^{-2}\mathbf{C}_\lambda^{1/2}\succcurlyeq0$,
this gives
\begin{equation*}
\mathbb{E}\norm{f_T}_2^2=\norm{(\mathbf{I}-\eta\mathbf{S})^Tf_0}_2^2
+\eta^2\sigma_g^2\sum_{j=0}^{T-1}\tr\bigl((\mathbf{I}-\eta\mathbf{S})^{2j}\mathbf{N}\bigr).
\end{equation*}
The first term is at most $(1-\eta a)^{2T}\norm{f_0}_2^2$. For the second term,
note that $\eta\mathbf{S}\preccurlyeq\mathbf{I}$, so
$$
\sum_{j\ge0}(\mathbf{I}-\eta\mathbf{S})^{2j}
=\bigl(\eta\mathbf{S}(2\mathbf{I}-\eta\mathbf{S})\bigr)^{-1}
\preccurlyeq(\eta\mathbf{S})^{-1}.
$$
Since $\mathbf{A}\mapsto\tr(\mathbf{A}\mathbf{N})$ is monotone for a positive semidefinite $\mathbf{N}$, we have that
$$
\sum_{j=0}^{T-1}\tr\bigl((\mathbf{I}-\eta\mathbf{S})^{2j}\mathbf{N}\bigr) \leq
\eta^{-1}\tr(\mathbf{S}^{-1}\mathbf{N})
=\eta^{-1}\tr(\mathbf{C}_\lambda^{-1/2}\mathbf{P}^{-1}\mathbf{C}_\lambda^{1/2})
=\eta^{-1}\tr(\mathbf{P}^{-1}).
$$
Dividing by $2$ shows the claim.
\end{proof}

\subsection{Proof of Corollary~\ref{cor:compare}}
\begin{proof}
(i) Take $\eta=1/b$ in Theorem~\ref{thm:total} and write $\kappa:=b/a$.
For DP-GD, $\kappa=\kappa_0$ and
$\eta\tr(\mathbf{I})=p/(\lambda_1+\lambda)=N_{\mathrm{GD}}/\kappa_0$.
For $\wh{\mathbf{G}}$, Lemma~\ref{lem:perturb} gives $a=\frac{1}{1+\gamma}$ and
$b=\frac{\kappa_k}{1-\gamma}$, so $\kappa=\wh\kappa_k$ and
$\eta\tr(\wh{\mathbf{G}}^{-1})\le\tr(\mathbf{G}_\star^{-1})/\kappa_k=N_k/\wh\kappa_k$.
In both cases, using $\sigma_g^2=T\Delta_g^2/\mu_g^2$ and
$(1-1/\kappa)^{2T}\le e^{-2T/\kappa}$,
\begin{equation*}
\mathbb{E}\,\mathcal{L}(\theta_T)-\mathcal{L}(\theta_*)
\le e^{-2T/\kappa}R_0+\frac{T\Delta_g^2N}{2\mu_g^2\kappa}.
\end{equation*}
Let $\beta:=\log\frac{4\mu_g^2R_0}{\Delta_g^2N}\ge0$ and $T=\lceil\kappa\beta/2\rceil$.
Then $e^{-2T/\kappa}R_0\le e^{-\beta}R_0=\frac{\Delta_g^2N}{4\mu_g^2}$. Also
$T\le\kappa\beta/2+1$, so the second term is at most
$\frac{\Delta_g^2N}{4\mu_g^2}(\beta+2/\kappa)$. Since $\kappa\ge1$, the total is
at most $\Phi(N)$.

(ii) By Lemma~\ref{lem:woodbury},
$\tr(\mathbf{G}_\star^{-1})\le p/\lambda=(1+\lambda_p/\lambda)N_{\mathrm{GD}}$.
For $\gamma=\lambda_p=0$, we have $N_{\mathrm{GD}}=p/\lambda$ and
$N_k=\frac{p}{\lambda}-\sum_{i\le k}\bigl(\frac1\lambda-\frac{1}{\lambda_i+\lambda}\bigr)$.
\end{proof}

\section{Extension of the Convergence Analysis to Generalized Linear Models}
\label{app:glm}

The analysis of Section~\ref{sec:convergence} was stated for linear regression
because that is the case in which the contraction constant can be computed
exactly. We now show that the same statements hold for general GLMs under a
two-sided curvature condition, with $\kappa_k$ replaced by
$\kappa_k/c$. Lemma~\ref{lem:woodbury} and Lemma~\ref{lem:perturb} concern the
preconditioner alone and are unchanged. Theorem~\ref{thm:total} is replaced by
Theorem~\ref{thm:glm} and Corollary~\ref{cor:glm_noise}.

By the mean value theorem (Section~\ref{sec:convergence}), with
$t_j:=\theta_j-\theta_*$, the DP-FC iteration takes the form
\begin{equation}\label{eq:glm_fp}
\begin{aligned}
    t_{j+1} &= \mathbf{A}_j t_j + \mathbf{B}\mathcal{E}_j,
    \qquad \mathbf{B}=-\eta\wh{\mathbf{G}}^{-1},
    \\
    \mathbf{A}_j &= \mathbf{I}-\eta\,\wh{\mathbf{G}}^{-1}\bigl(\bar{\mathbf{H}}_j+\lambda\mathbf{I}\bigr), \\
    \bar{\mathbf{H}}_j &= \tfrac{1}{n}\sum_{(x,y)\in D}\ell''(\xi_x,y)\,xx^\top,
\end{aligned}
\end{equation}
where $\xi_x$ lies between $x^\top\theta_j$ and $x^\top\theta_*$. Note that
$\mathbf{A}_j$ now varies across iterations and successive $\mathbf{A}_j$ do not
commute. This is exactly why we bound the norm in the fixed metric
$\norm{\cdot}_{\wh{\mathbf{G}}}$, with the product bound
$\norm{\prod_{i<j}\mathbf{A}_i}_{\wh{\mathbf{G}}}\le q^{j}$.

\begin{defn}\label{def:curvature}
We say the loss satisfies the $(c_-,c_+)$-curvature condition if $\ell''(\xi_x,y)\in[c_-,c_+]$ for all $(x,y)\in D$ and all $\xi_x$
arising in \eqref{eq:glm_fp}, for some $0<c_-\le c_+$. After rescaling $\ell$ by
$1/c_+$ we may and do assume $c_+=1$ and write $c:=c_-/c_+\in(0,1]$, so that
\begin{equation}\label{eq:sandwich}
    c\,\mathbf{C} \;\preccurlyeq\; \bar{\mathbf{H}}_j \;\preccurlyeq\; \mathbf{C}
    \qquad \text{for all } j .
\end{equation}
\end{defn}

Condition \eqref{eq:sandwich} is the two-sided strengthening of the one-sided assumption
$0\preccurlyeq\bar{\mathbf{H}}_j\preccurlyeq\mathbf{C}$. It is satisfied for the logistic loss
$\ell''(t)=\varrho(t)(1-\varrho(t))$, with $\varrho$ the sigmoid, whenever the
logits along the path are bounded, $\abs{\xi_x}\le M$. In that case
$c = 4\varrho(M)(1-\varrho(M))$. Boundedness of the logits follows from
$\norm{x}_2\le C$ together with a bound on $\norm{\theta_j}_2$. Since the DP noise is unbounded, such a bound does not follow from the $\lambda$-regularization alone; it holds, e.g., for a projected variant of the iteration, and otherwise we take~\eqref{eq:sandwich} along the iterate path as an assumption. The linear regression case is $c=1$.

\begin{thm}\label{thm:glm}
Assume \eqref{eq:sandwich} and $\gamma<1$ (Lemma~\ref{lem:perturb}).
Then every eigenvalue $\mu$ of $\wh{\mathbf{G}}^{-1}(\bar{\mathbf{H}}_j+\lambda\mathbf{I})$
is real and satisfies
\begin{equation*}
    \frac{c}{1+\gamma} \;\le\; \mu \;\le\; \frac{\kappa_k}{1-\gamma}
    \qquad\text{for every } j.
\end{equation*}
Consequently, for $0<\eta\le(1-\gamma)/\kappa_k$,
\begin{equation*}
    \sup_j\;\norm{\mathbf{A}_j}_{\wh{\mathbf{G}}}
    \;\le\; q:=1-\frac{\eta c}{1+\gamma},
\end{equation*}
which equals $1-c/\wh\kappa_k$ for $\eta=(1-\gamma)/\kappa_k$.
Setting $c=1$ recovers the range of Lemma~\ref{lem:perturb}.
\end{thm}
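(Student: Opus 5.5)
The proof has two parts: locate the spectrum of $\wh{\mathbf{G}}^{-1}(\bar{\mathbf{H}}_j+\lambda\mathbf{I})$, then turn that spectral information into a contraction bound in the $\wh{\mathbf{G}}$-norm.

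\emph{Spectrum.} Write $\mathbf{H}:=\bar{\mathbf{H}}_j+\lambda\mathbf{I}$, which is SPD. Since $\gamma<1$, Lemma~\ref{lem:perturb} gives $\wh{\mathbf{G}}\succ0$. The matrix $\wh{\mathbf{G}}^{-1}\mathbf{H}$ is similar to $\wh{\mathbf{G}}^{-1/2}\mathbf{H}\wh{\mathbf{G}}^{-1/2}$, which is symmetric. Hence its eigenvalues are real, and they are exactly the generalized eigenvalues of $(\mathbf{H},\wh{\mathbf{G}})$. By Courant--Fischer, as noted at the start of Appendix~\ref{app:convergence_proofs}, these lie between the extreme values of the Rayleigh quotient $\mathbf{x}^\top\mathbf{H}\mathbf{x}/\mathbf{x}^\top\wh{\mathbf{G}}\mathbf{x}$.

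From \eqref{eq:sandwich} and $c\le1$ we get
\[
c\,\mathbf{C}_\lambda=c\mathbf{C}+c\lambda\mathbf{I}\preccurlyeq c\mathbf{C}+\lambda\mathbf{I}\preccurlyeq\mathbf{H}\preccurlyeq\mathbf{C}_\lambda .
\]
Therefore
\[
c\cdot\frac{\mathbf{x}^\top\mathbf{C}_\lambda\mathbf{x}}{\mathbf{x}^\top\wh{\mathbf{G}}\mathbf{x}}\;\le\;\frac{\mathbf{x}^\top\mathbf{H}\mathbf{x}}{\mathbf{x}^\top\wh{\mathbf{G}}\mathbf{x}}\;\le\;\frac{\mathbf{x}^\top\mathbf{C}_\lambda\mathbf{x}}{\mathbf{x}^\top\wh{\mathbf{G}}\mathbf{x}} .
\]
Lemma~\ref{lem:perturb} places the middle Rayleigh quotient of $(\mathbf{C}_\lambda,\wh{\mathbf{G}})$ in $[1/(1+\gamma),\kappa_k/(1-\gamma)]$. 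This gives $\mu\in[c/(1+\gamma),\kappa_k/(1-\gamma)]$, uniformly in $j$.

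\emph{Contraction.} The $\wh{\mathbf{G}}$-operator norm is $\norm{\mathbf{A}}_{\wh{\mathbf{G}}}=\norm{\wh{\mathbf{G}}^{1/2}\mathbf{A}\wh{\mathbf{G}}^{-1/2}}_2$. Conjugating $\mathbf{A}_j$ gives
\[
\wh{\mathbf{G}}^{1/2}\mathbf{A}_j\wh{\mathbf{G}}^{-1/2}=\mathbf{I}-\eta\,\wh{\mathbf{G}}^{-1/2}\mathbf{H}\wh{\mathbf{G}}^{-1/2}.
\]
This matrix is symmetric, so its spectral norm equals its largest eigenvalue in absolute value. Its eigenvalues are $1-\eta\mu$, with $\mu$ in the range just established. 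Taking $\eta\le(1-\gamma)/\kappa_k$ ensures $\eta\mu\le1$, so every $1-\eta\mu$ is nonnegative and at most $1-\eta c/(1+\gamma)=q$. Hence $\norm{\mathbf{A}_j}_{\wh{\mathbf{G}}}\le q$ for every $j$. Submultiplicativity of this fixed norm then gives the product bound $q^j$ even though the $\mathbf{A}_j$ do not commute. Substituting $\eta=(1-\gamma)/\kappa_k$ yields $q=1-c(1-\gamma)/(\kappa_k(1+\gamma))=1-c/\wh\kappa_k$. Setting $c=1$ makes the sandwich collapse to $\mathbf{H}=\mathbf{C}_\lambda$ and recovers the range of Lemma~\ref{lem:perturb}.

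\emph{Main obstacle.} The only step needing care is measuring the norm in the $\wh{\mathbf{G}}$-metric, where $\mathbf{A}_j$ becomes self-adjoint, rather than in the Euclidean metric, where $\mathbf{A}_j$ is not symmetric. I would check this symmetrization explicitly. Everything else reduces to the Rayleigh-quotient sandwich above.
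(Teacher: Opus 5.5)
Your proof is correct and follows the paper's argument. Both show that $\wh{\mathbf{G}}^{-1}(\bar{\mathbf{H}}_j+\lambda\mathbf{I})$ is similar to a symmetric matrix, sandwich the Rayleigh quotient using \eqref{eq:sandwich} and Lemma~\ref{lem:perturb}, and symmetrize with $\wh{\mathbf{G}}^{1/2}$ to get the contraction.

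The only difference is in the lower bound, and it is cosmetic. You use $c\,\mathbf{C}_\lambda\preccurlyeq c\mathbf{C}+\lambda\mathbf{I}$ to reuse the bound $1/(1+\gamma)$ of Lemma~\ref{lem:perturb} directly. The paper instead computes the generalized eigenvalues of $(c\mathbf{C}+\lambda\mathbf{I},\mathbf{G}_\star)$ in the eigenbasis and then applies $\wh{\mathbf{G}}\preccurlyeq(1+\gamma)\mathbf{G}_\star$. Both rest on the same inequality $c\lambda_i+\lambda\ge c(\lambda_i+\lambda)$.
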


\begin{cor}\label{cor:glm_noise}
Under the assumptions of Theorem~\ref{thm:glm}, with $\wh{\mathbf{G}}$ independent
of $(\mathcal{E}_j)$,
\begin{equation*}
\mathbb{E}\norm{t_T}_{\wh{\mathbf{G}}}^2
\le q^{2T}\norm{t_0}_{\wh{\mathbf{G}}}^2
+\frac{(1+\gamma)\,\eta\,\sigma_g^2}{c}\,\tr(\wh{\mathbf{G}}^{-1}),
\end{equation*}
and
$\mathbb{E}\,\mathcal{L}(\theta_T)-\mathcal{L}(\theta_*)
\le\frac{\kappa_k}{2(1-\gamma)}\,\mathbb{E}\norm{t_T}_{\wh{\mathbf{G}}}^2$.
\end{cor}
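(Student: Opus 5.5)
The plan is to avoid unrolling the recursion~\eqref{eq:glm_fp} into a product of the matrices $\mathbf{A}_i$. Those matrices depend on the iterates, hence on past noise, so the cross terms in an unrolled sum are not obviously zero-mean. Instead I would run a one-step recursion in the fixed $\wh{\mathbf{G}}$-norm and then sum a geometric series.

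\textbf{Step 1 (one-step recursion).} First I would make $\mathbf{A}_j$ a deterministic function of $\theta_j$ (given $\wh{\mathbf{G}}$). To do so, I replace the mean-value point by the integral form $\bar{\mathbf{H}}_j=\int_0^1\nabla^2\bigl(\mathcal{L}-\tfrac\lambda2\norm{\cdot}_2^2\bigr)(\theta_*+s t_j)\,\mathrm{d}s$. This matrix still satisfies~\eqref{eq:sandwich} under the curvature condition, since the condition holds along the segment. Then, conditional on $(\wh{\mathbf{G}},\theta_j)$, the noise $\mathcal{E}_j$ is independent and zero-mean. Expanding
\[
\norm{t_{j+1}}_{\wh{\mathbf{G}}}^2=\norm{\mathbf{A}_jt_j}_{\wh{\mathbf{G}}}^2+2\langle \mathbf{A}_jt_j,\mathbf{B}\mathcal{E}_j\rangle_{\wh{\mathbf{G}}}+\norm{\mathbf{B}\mathcal{E}_j}_{\wh{\mathbf{G}}}^2 ,
\]
the cross term has zero conditional mean. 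The last term has expectation $\eta^2\sigma_g^2\tr(\wh{\mathbf{G}}^{-1}\wh{\mathbf{G}}\wh{\mathbf{G}}^{-1})=\eta^2\sigma_g^2\tr(\wh{\mathbf{G}}^{-1})$. Theorem~\ref{thm:glm} gives $\norm{\mathbf{A}_j}_{\wh{\mathbf{G}}}\le q$. Hence
\[
\mathbb{E}\norm{t_{j+1}}_{\wh{\mathbf{G}}}^2\le q^2\,\mathbb{E}\norm{t_j}_{\wh{\mathbf{G}}}^2+\eta^2\sigma_g^2\tr(\wh{\mathbf{G}}^{-1}).
\]

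\textbf{Step 2 (summation).} Iterating this inequality gives the first term $q^{2T}\norm{t_0}_{\wh{\mathbf{G}}}^2$ plus the noise contribution $\eta^2\sigma_g^2\tr(\wh{\mathbf{G}}^{-1})\sum_{j<T}q^{2j}$. The geometric sum is bounded by $\eta^2\sigma_g^2\tr(\wh{\mathbf{G}}^{-1})/(1-q^2)$. Since $q\in[0,1)$, we have $1-q^2\ge 1-q=\eta c/(1+\gamma)$, which yields exactly the stated coefficient $(1+\gamma)\eta\sigma_g^2\tr(\wh{\mathbf{G}}^{-1})/c$. To see that $q\ge0$, note that $\eta c/(1+\gamma)\le (1-\gamma)c/((1+\gamma)\kappa_k)\le1$.

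\textbf{Step 3 (loss).} By Taylor's theorem with integral remainder, $\mathcal{L}(\theta_T)-\mathcal{L}(\theta_*)=t_T^\top\bigl(\int_0^1(1-s)(\mathbf{H}(s)+\lambda\mathbf{I})\,\mathrm{d}s\bigr)t_T$. Using $\ell''\le1$, the integrand satisfies $\mathbf{H}(s)+\lambda\mathbf{I}\preccurlyeq\mathbf{C}_\lambda$, so the loss gap is at most $\tfrac12\norm{t_T}_{\mathbf{C}_\lambda}^2$. By Lemma~\ref{lem:perturb} the generalized eigenvalues of $(\mathbf{C}_\lambda,\wh{\mathbf{G}})$ are at most $\kappa_k/(1-\gamma)$, i.e.\ $\mathbf{C}_\lambda\preccurlyeq\frac{\kappa_k}{1-\gamma}\wh{\mathbf{G}}$. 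Taking expectations finishes the proof.

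The main obstacle is Step 1. It requires justifying that $\mathbf{A}_j$ is measurable with respect to $(\wh{\mathbf{G}},\theta_j)$, so that the cross term vanishes. Working with the integral-averaged Hessian handles this; a generic mean-value point $\xi_x$ need not be a measurable function of $\theta_j$. Because Theorem~\ref{thm:glm} uses only the sandwich~\eqref{eq:sandwich}, its bound on $\norm{\mathbf{A}_j}_{\wh{\mathbf{G}}}$ applies unchanged to this choice.
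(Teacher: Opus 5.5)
Your proposal is correct and follows essentially the same route as the paper: a one-step recursion in the fixed $\wh{\mathbf{G}}$-norm with the cross term vanishing by independence, $\norm{\mathbf{A}_j}_{\wh{\mathbf{G}}}\le q$ from Theorem~\ref{thm:glm}, the geometric sum with $1-q^2\ge 1-q$, and $\nabla^2\mathcal{L}\preccurlyeq\mathbf{C}_\lambda\preccurlyeq\frac{\kappa_k}{1-\gamma}\wh{\mathbf{G}}$ for the loss bound. The integral-averaged Hessian and the check $q\ge 0$ are harmless extra care; the former is not actually needed, since the cross term only involves $\mathbf{A}_jt_j=t_j-\eta\wh{\mathbf{G}}^{-1}\nabla\mathcal{L}(\theta_j)$, which is already a function of $\theta_j$ whatever mean-value points are chosen.
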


\begin{proof}[Proof of Theorem~\ref{thm:glm}]
As in the proof of Lemma~\ref{lem:perturb}, $\wh{\mathbf{G}}\succ0$, and the matrix
$\wh{\mathbf{G}}^{-1}(\bar{\mathbf{H}}_j+\lambda\mathbf{I})$ is similar to the
symmetric matrix
$\wh{\mathbf{G}}^{-1/2}(\bar{\mathbf{H}}_j+\lambda\mathbf{I})\wh{\mathbf{G}}^{-1/2}$.
Its eigenvalues are therefore real and lie between the extreme generalized Rayleigh
quotients. By \eqref{eq:sandwich},
$c\,\mathbf{C}+\lambda\mathbf{I}\preccurlyeq\bar{\mathbf{H}}_j+\lambda\mathbf{I}\preccurlyeq\mathbf{C}_\lambda$,
so it suffices to bound the largest quotient for $\mathbf{C}_\lambda$ from above and the
smallest quotient for $c\mathbf{C}+\lambda\mathbf{I}$ from below.

By Lemma~\ref{lem:perturb},
$\max_v \frac{v^\top\mathbf{C}_\lambda v}{v^\top\wh{\mathbf{G}}v}\le\frac{\kappa_k}{1-\gamma}$.

Then, consider the oracle preconditioner. In the eigenbasis
$\mathbf{U}$, both $c\mathbf{C}+\lambda\mathbf{I}$ and $\mathbf{G}_\star$ are
diagonal, so the generalized eigenvalues are
\begin{equation*}
    \frac{c\lambda_i+\lambda}{\lambda_i+\lambda}\;\;(i\le k),
    \qquad
    \frac{c\lambda_i+\lambda}{\lambda}\;\;(i>k).
\end{equation*}
For $i\le k$, $c\lambda_i+\lambda\ge c(\lambda_i+\lambda)$ because $c\le1$, so the
ratio is at least $c$. For $i>k$, the ratio is at least $1\ge c$. Hence
$\min_v \frac{v^\top(c\mathbf{C}+\lambda\mathbf{I})v}{v^\top\mathbf{G}_\star v}\ge c$.
Applying $v^\top\wh{\mathbf{G}}v\le(1+\gamma)v^\top\mathbf{G}_\star v$ from
Lemma~\ref{lem:perturb} gives
$\min_v \frac{v^\top(c\mathbf{C}+\lambda\mathbf{I})v}{v^\top\wh{\mathbf{G}}v}\ge\frac{c}{1+\gamma}$.

The matrix
$\wt{\mathbf{A}}_j=\wh{\mathbf{G}}^{1/2}\mathbf{A}_j\wh{\mathbf{G}}^{-1/2}
=\mathbf{I}-\eta\,\wh{\mathbf{G}}^{-1/2}(\bar{\mathbf{H}}_j+\lambda\mathbf{I})\wh{\mathbf{G}}^{-1/2}$
is symmetric, so $\norm{\mathbf{A}_j}_{\wh{\mathbf{G}}}=\max_i\abs{1-\eta\mu_i}$.
For $\eta\le(1-\gamma)/\kappa_k$, each $1-\eta\mu_i$ is in
$\bigl[0,\,1-\frac{\eta c}{1+\gamma}\bigr]$.
\end{proof}

\begin{proof}[Proof of Corollary~\ref{cor:glm_noise}]
Condition on $\wh{\mathbf{G}}$. Since $\bar{\mathbf{H}}_j$ depends only on
$\theta_j$, the noise $\mathcal{E}_j$ is zero-mean and independent of
$(t_j,\mathbf{A}_j)$. Also
$\norm{\eta\wh{\mathbf{G}}^{-1}\mathcal{E}_j}_{\wh{\mathbf{G}}}^2
=\eta^2\mathcal{E}_j^\top\wh{\mathbf{G}}^{-1}\mathcal{E}_j$. Therefore, by
Theorem~\ref{thm:glm},
\begin{equation*}
\mathbb{E}\norm{t_{j+1}}_{\wh{\mathbf{G}}}^2
\le q^2\,\mathbb{E}\norm{t_j}_{\wh{\mathbf{G}}}^2+\eta^2\sigma_g^2\tr(\wh{\mathbf{G}}^{-1}).
\end{equation*}
Then, using
$1-q^2\ge1-q=\eta c/(1+\gamma)$ gives the first claim. For the second
claim, $\ell''\le1$ gives $\nabla^2\mathcal{L}\preccurlyeq\mathbf{C}_\lambda$, and
$\nabla\mathcal{L}(\theta_*)=0$. Thus
$\mathcal{L}(\theta)-\mathcal{L}(\theta_*)\le\frac12\norm{t}_{\mathbf{C}_\lambda}^2
\le\frac{\kappa_k}{2(1-\gamma)}\norm{t}_{\wh{\mathbf{G}}}^2$
by Lemma~\ref{lem:perturb}.
\end{proof}

\section{Further Details on Experiments} \label{sec:experiment_details}

\subsection{Additional Information on Datasets and Baseline Methods}

\textbf{Datasets.} In addition to well-known benchmark image datasets, we consider low-dimensional sensor dataset UCI-HAR and WISDM. 

\textbf{MNIST}~\citep{lecun1998gradient}. Handwritten digit images of size $28{\times}28$ ($p=784$), used for the rank-estimation illustration in Section~\ref{subsec:exp_dwork}.
\textbf{SVHN (Street View House Numbers)}~\citep{netzer2011svhn}. A real-world image dataset consisting of $32{\times}32$ RGB images of house numbers collected from Google Street View.

\textbf{FashionMNIST, CIFAR-10, CIFAR-100}. For the downstream machine learning experiments in Section~\ref{subsec:dpfc}, we additionally use FashionMNIST~\citep{xiao2017fashion}, with $512$-dimensional feature vectors extracted from a pretrained ResNet-$34$~\citep{he2016deep}, and CIFAR-10/100~\citep{krizhevsky2009learning}, with $768$-dimensional feature vectors extracted from a pretrained ViT-B/16~\citep{dosovitskiy2021image}.

\textbf{UCI Human Activity Recognition (HAR)}~\citep{anguita2013har}. A multivariate time-series dataset for human activity recognition. The raw signals are sampled into windows of $2.56$ second and from each window, a set of $p = 561$ time- and frequency-domain features (means, standard deviations, signal energy, entropy, autoregressive coefficients, spectral properties) are extracted, giving $10  299$ instances.

\textbf{WISDM (Wireless Sensor Data Mining)}~\citep{kwapisz2011activity}. A benchmark for human activity recognition. 
The raw data consists of time series of $(x,y,z)$ accelerometer measurements. We segment the raw signals into windows of 128 samples ($\approx$ 6.4 seconds) with 50\% overlap. 
For each window, we extract a set of time- and frequency-domain statistics including means, variances, medians, spectral energies and correlation coefficients across axes. 
This results in a feature representation of dimension $200$ per window. 
Across all participants, we obtain $17158$ windows, giving a data matrix of shape $17158 \times 200$.

\para{Baseline Methods.} Among prior methods with $O(pk)$ communication cost, we focus our quantitative comparison on the federated DP-PCA method of~\citet{grammenos2020federated}. The remaining $O(pk)$-communication baselines~\citep{guo2021fedpower,li2024federated} are dominated by our approach by construction: they report only {local} privacy guarantees and do not benefit from privacy amplification by summation. Furthermore,~\citet{guo2021fedpower} uses the centralized DP-PCA method of~\citet{dwork2014analyze} as its core building block, and we show below that we improve upon that method directly even when the latter is given the benefit of summation amplification. Similarly,~\citet{li2024federated} uses the DP power iteration as its building block but lacks our second phase for accurately approximating the projected covariance matrix.
We therefore benchmark against two strong baselines: the centralized $O(p^2)$-communication method of~\citet{dwork2014analyze} and the federated $O(pk)$-communication method of~\citet{grammenos2020federated}.

\subsection{Comparison to Locally DP Baseline (Method of~\citet{grammenos2020federated})}
\label{subsec:exp_grammenos}

We next consider the federated setting and compare against the distributed DP-PCA method of~\citet{grammenos2020federated}, which is built on top of the centralized noisy covariance method of~\citet{dwork2014analyze}. In this method, each client locally applies the DP-PCA mechanism to its user-wise covariance matrix, producing a pair $(\mathbf{U}_i, \mathbf{\Sigma}_i)$ with $\mathbf{U}_i \in \mathbb{R}^{p \times r}$ orthogonal and $\mathbf{\Sigma}_i \in \mathbb{R}^{r \times r}$ diagonal. The server then merges these pairs through the procedure described in Algorithm~2 of~\citet{grammenos2020federated}; with $N$ clients, this requires $\approx \log_2 N$ rounds and $\approx N$ pairwise merges.

Since each client carries out the DP randomization before computing its truncated SVD, the method does not benefit from summation of the client-wise contributions. In contrast, our method offloads noise addition to the clients in such a way that the aggregated noise corresponds to that of a single centrally calibrated Gaussian mechanism applied at the server. Consequently, the per-client noise variance scales as $\sigma^2 / N$, and the privacy--utility trade-off does not depend on the number of clients.
Figure~\ref{fig:MNIST2} verifies this empirically: the relative error of~\citet{grammenos2020federated} increases as the number of clients grows from $4$ to $16$. 

\begin{figure}[h!]
\centering
\includegraphics[width=0.45\textwidth]{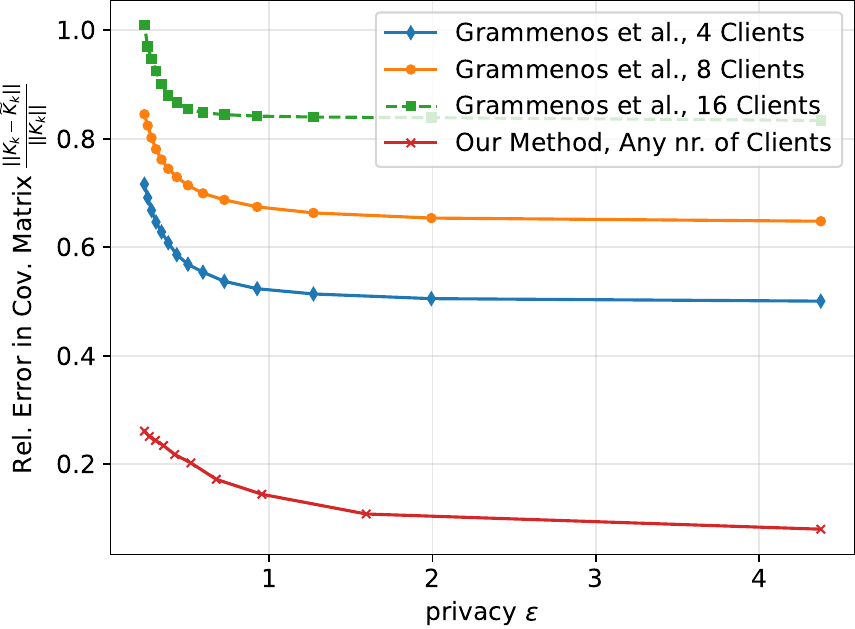}
\caption{Comparison against the federated DP-PCA method of~\citet{grammenos2020federated}. The privacy--utility trade-off of~\citet{grammenos2020federated} {degrades} as the number of clients grows. } 
\label{fig:MNIST2}
\end{figure}

\subsection{Qualitative Illustration: Private Covariance Heatmaps}
\label{subsec:exp_heatmaps}

To complement the quantitative results, Figure~\ref{fig:heat1} visualizes the WISDM covariance matrix recovered by our method and by the Analyze Gauss baseline~\citep{dwork2014analyze} at $(\varepsilon,\delta) = (0.16,10^{-5})$ and rank $k = 20$, alongside the true (non-private) covariance. Our method preserves the block-correlation structure of the original covariance matrix while the baseline produces a heatmap dominated by noise. A similar qualitative comparison for UCI HAR is provided in Figure~\ref{fig:heat2}.

\begin{figure}[h!]
\centering
\includegraphics[width=0.3\columnwidth]{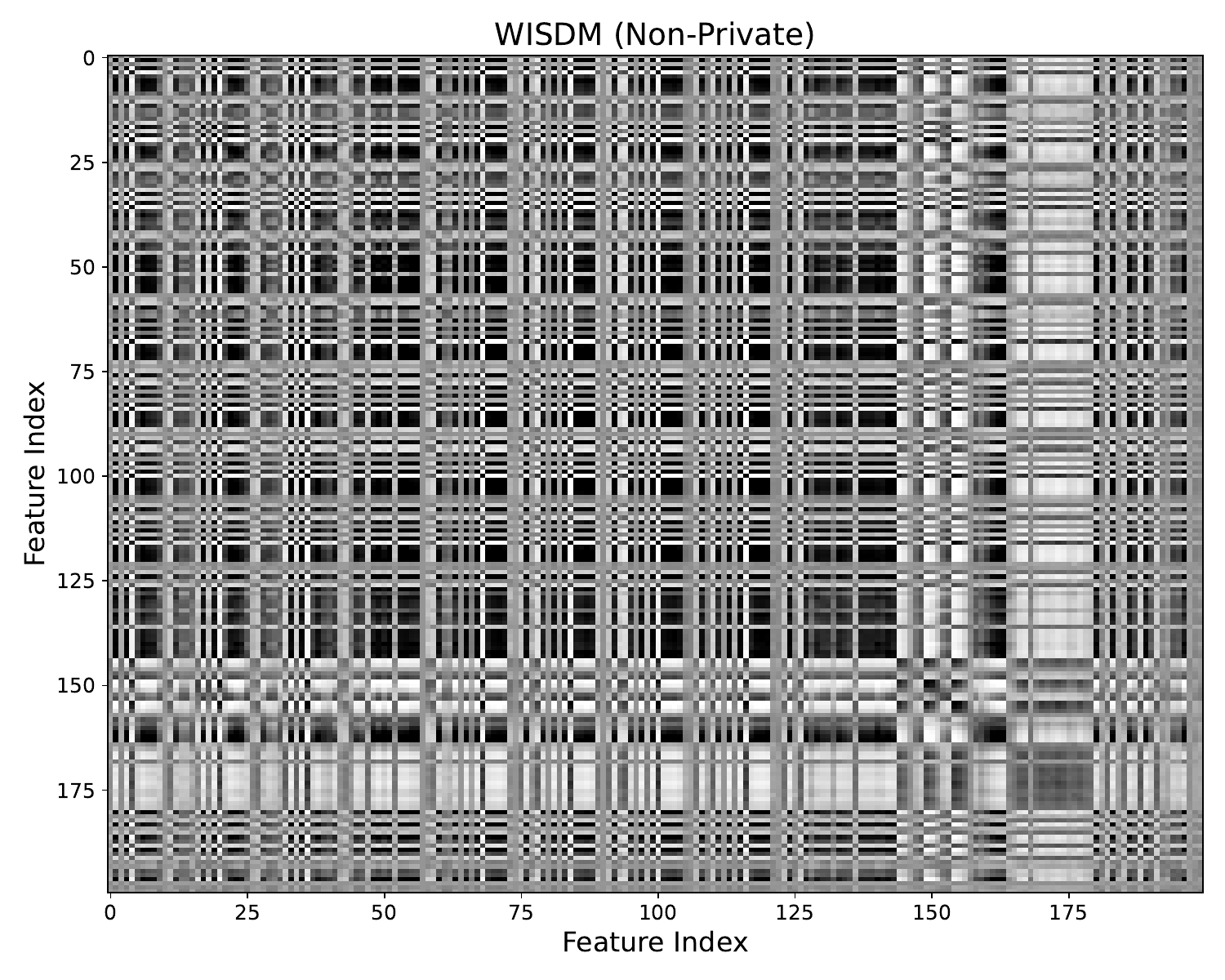}
\includegraphics[width=0.3\columnwidth]{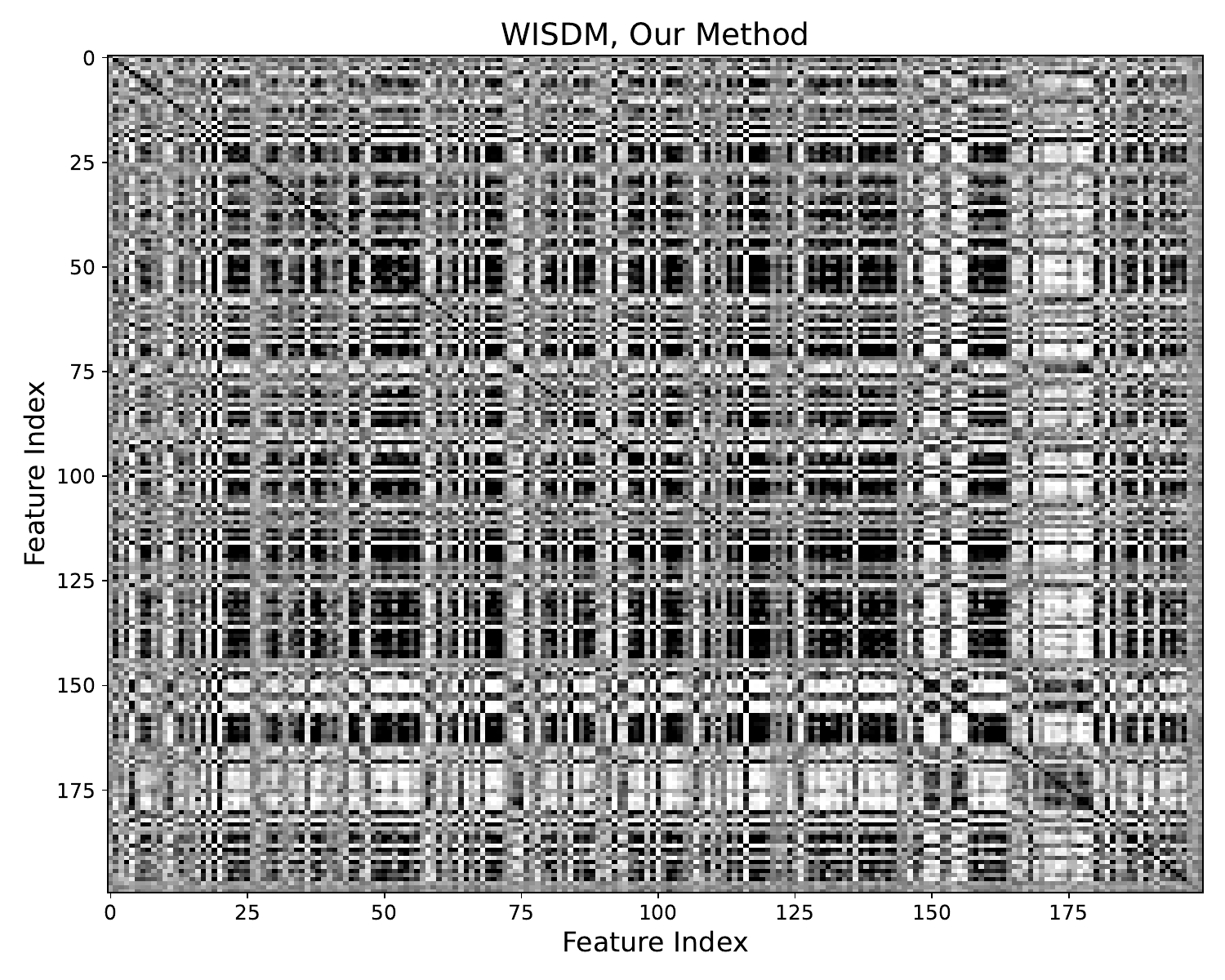}
\includegraphics[width=0.335\columnwidth]{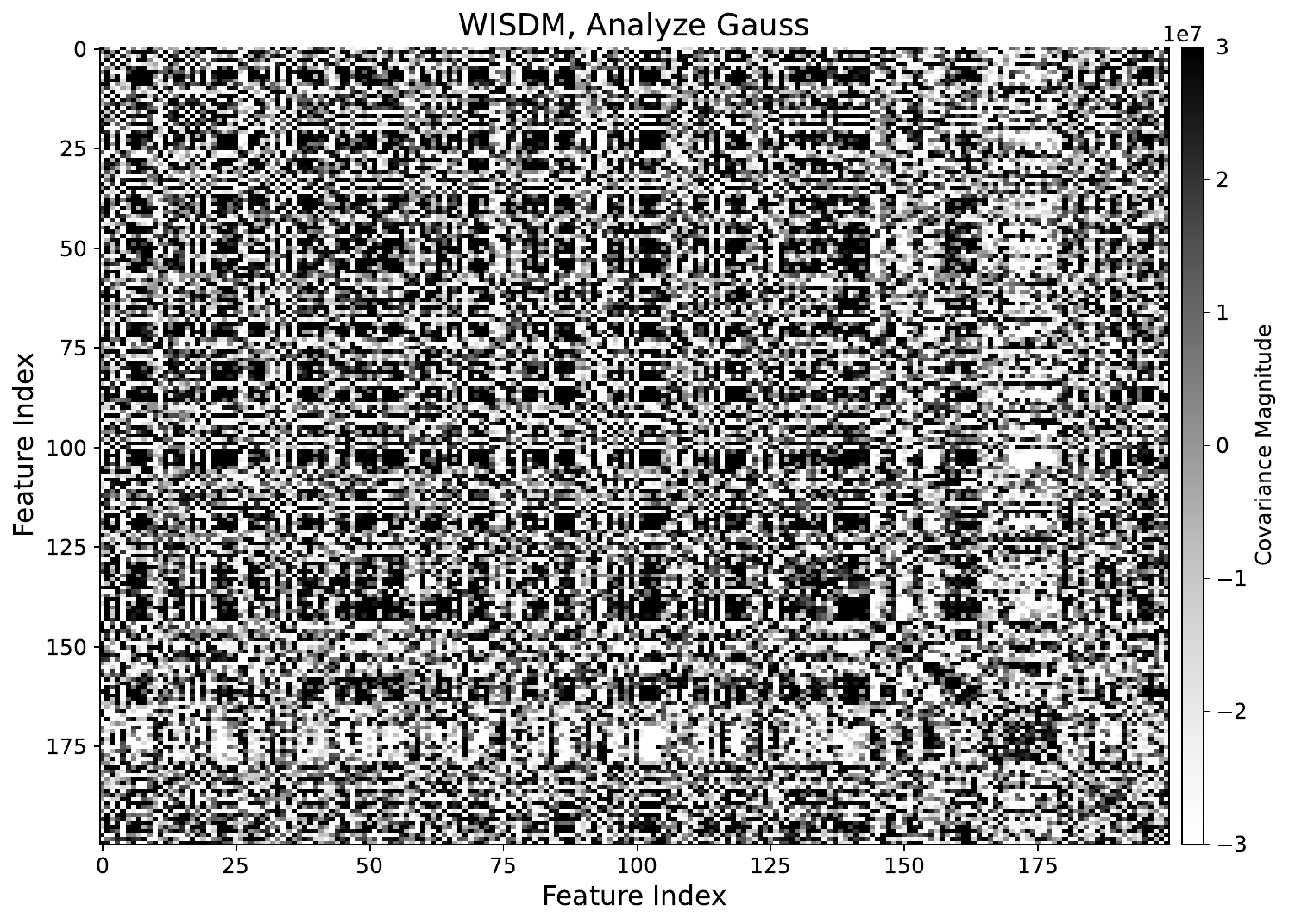}
\caption{Heatmaps of the WISDM covariance matrix when $\varepsilon = 0.16$ and $\delta=10^{-5}$. \textbf{Left:} the true (non-private) covariance. \textbf{Center:} the rank-$20$ approximation produced by our method. \textbf{Right:} the rank-$20$ approximation produced by the Analyze Gauss baseline~\citep{dwork2014analyze}.}
\label{fig:heat1}
\end{figure}

\begin{figure}[h!]
\centering
\includegraphics[width=0.3\columnwidth]{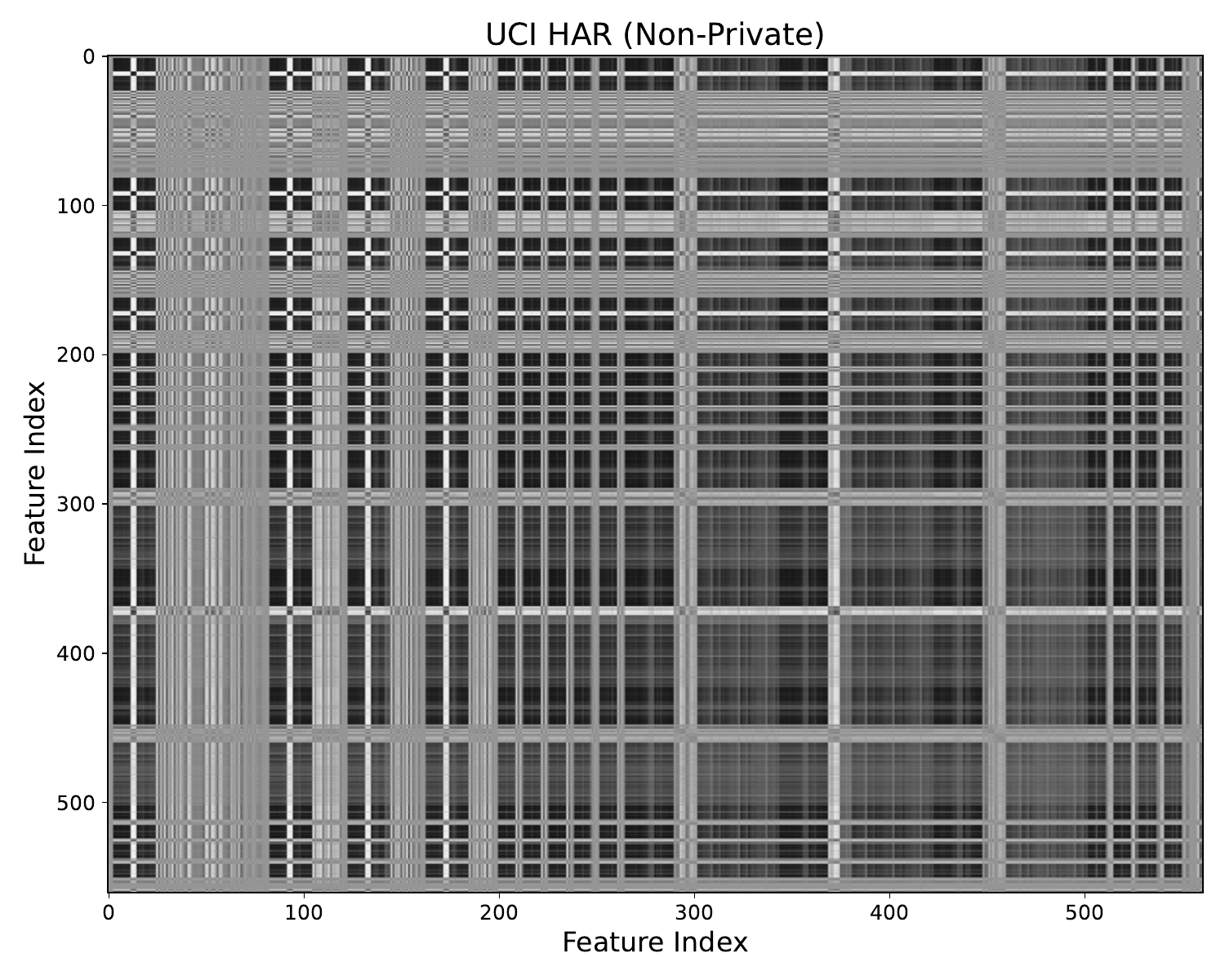}
\includegraphics[width=0.3\columnwidth]{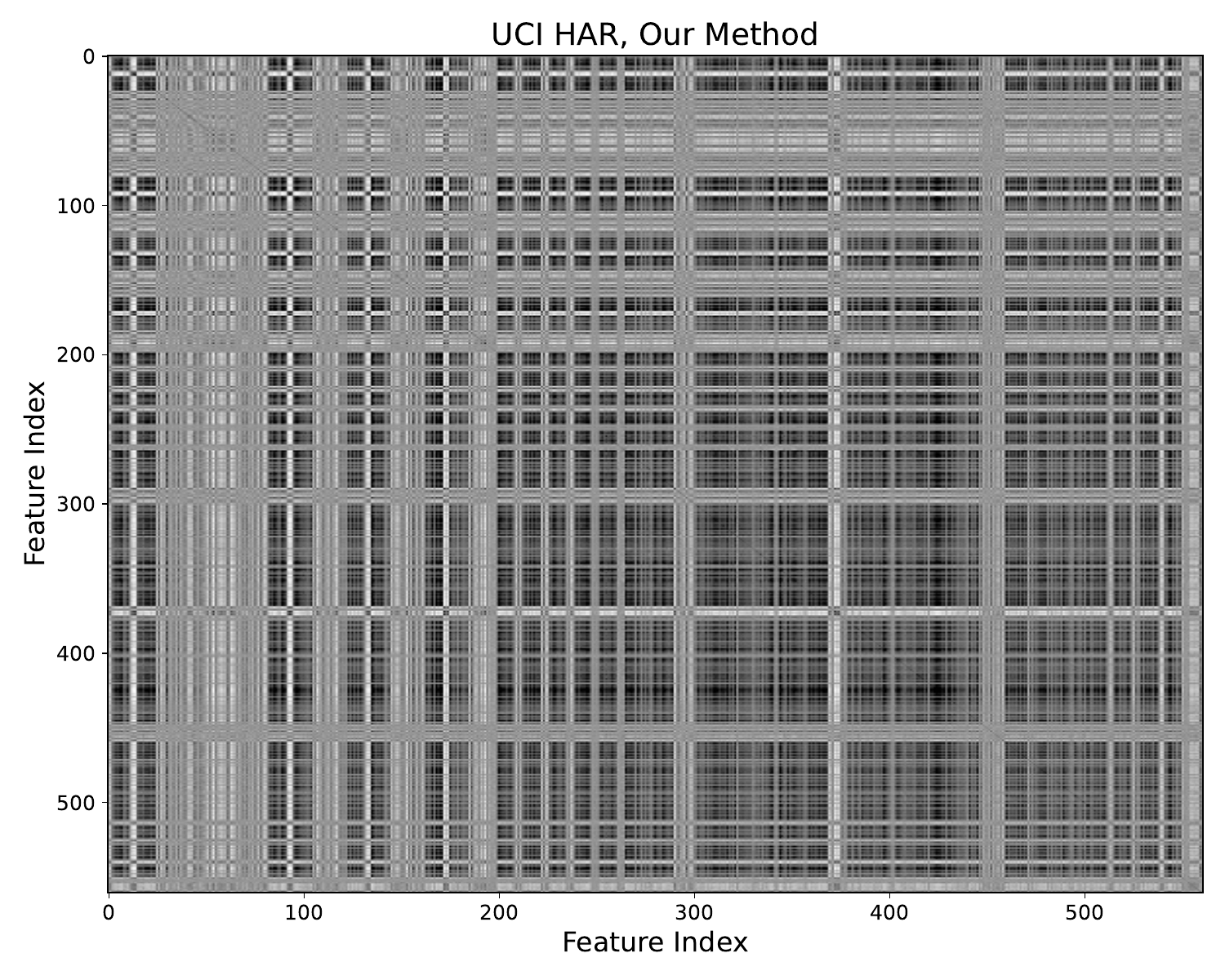}
\includegraphics[width=0.335\columnwidth]{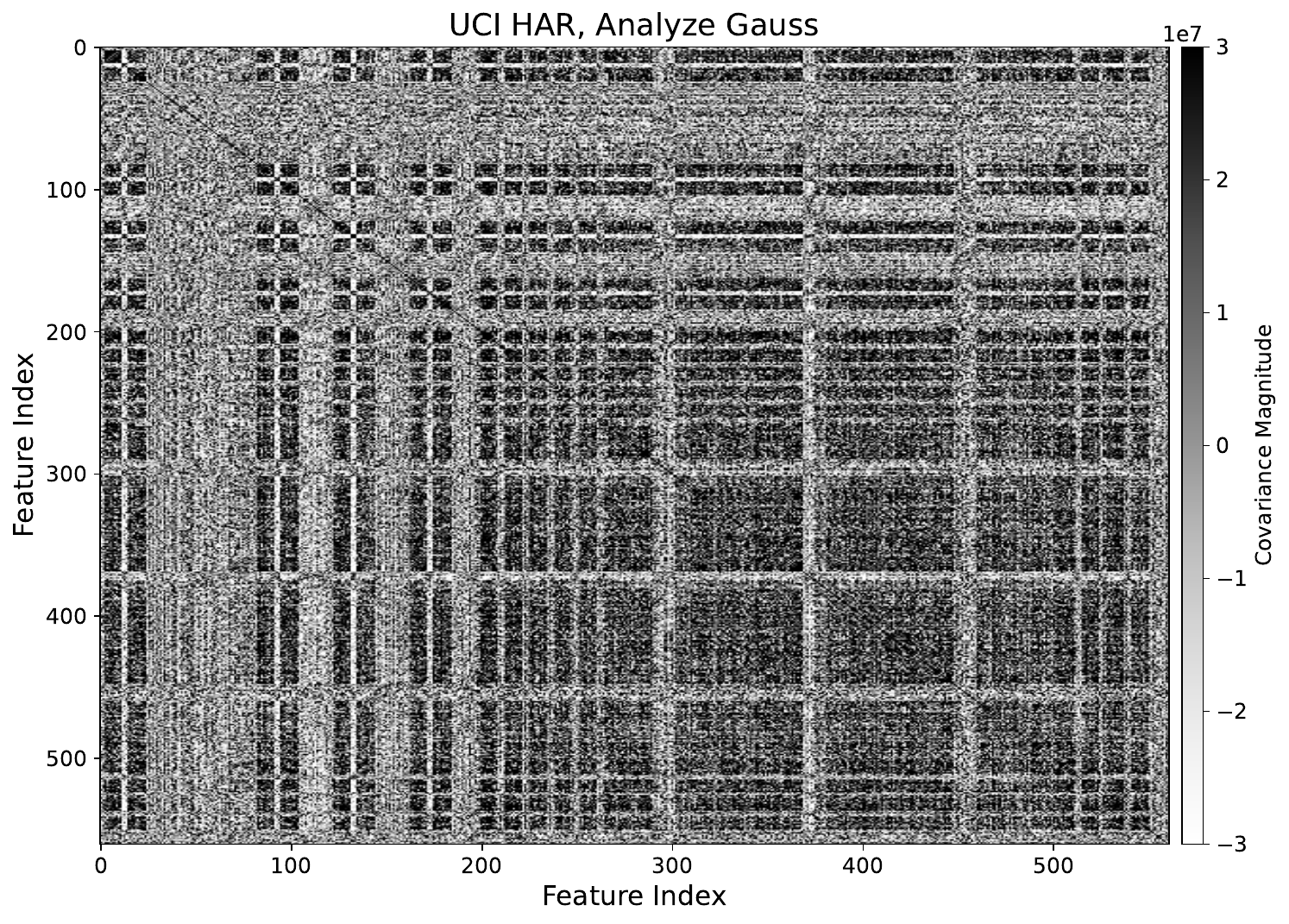}

\caption{The original heatmap of the covariance matrix for UCI HAR dataset features, heatmap obtained using our method with $\varepsilon=0.16$ and heatmap obtained using the baseline Analyze Gauss with $\varepsilon=0.16$ and rank 20.
}   
\label{fig:heat2}
\end{figure}

\clearpage

\subsection{Additional DP-PCA Comparison: SVHN}

Figure~\ref{fig:SVHN} shows the comparison of Analyze Gauss approximation and the DP low rank approximation of the covariance $XX^\top$ when applied to SVHN.


\begin{figure}[h!]
\centering
\includegraphics[width=0.5\columnwidth]{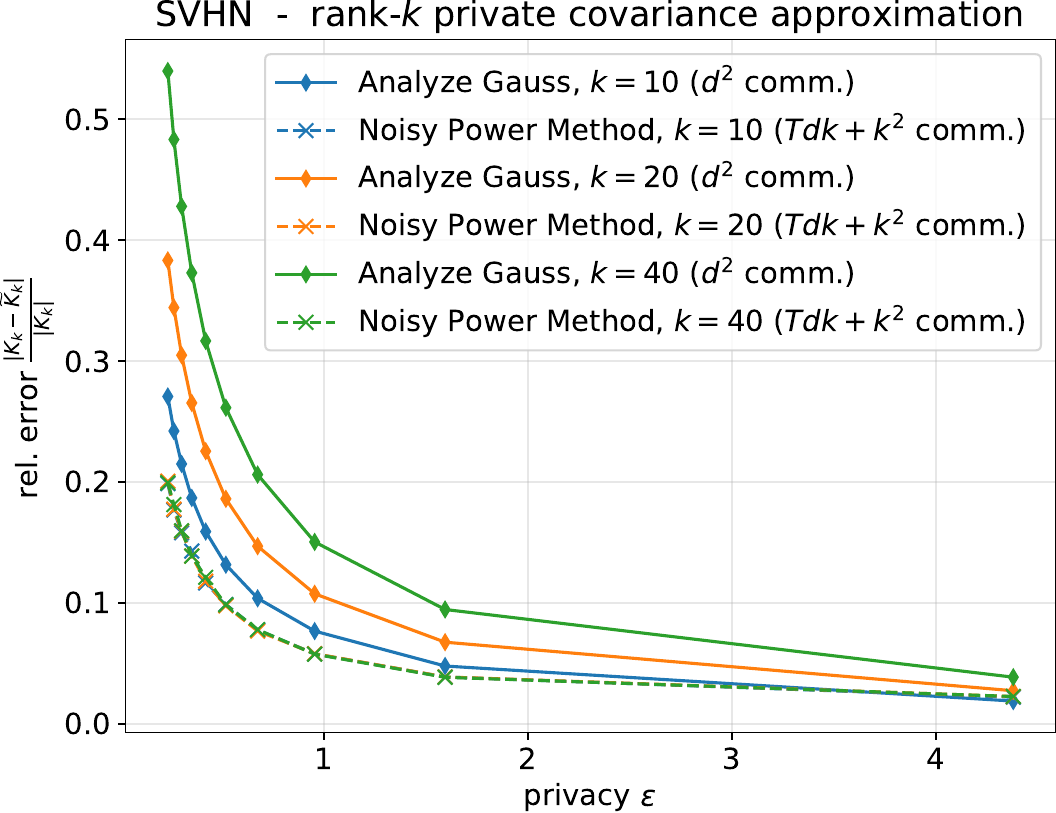} \vspace{3mm}
\caption{Comparison of our method against the central DP-PCA method of~\citet{dwork2014analyze} on SVHN ($p=3072$), with target subspace dimension $k \in \{10, 20, 40\}$. The vertical axis reports the relative Frobenius error of the rank-$k$ covariance approximation, the horizontal axis $\veps$ for $\delta=10^{-5}$. The results are means of 5 independent runs.}
\label{fig:SVHN}
\end{figure}

\clearpage

\section{Complete Experimental Results for Section~\ref{sec:experiments}} \label{sec:app_exp}

\subsection{Number of Communication Rounds vs. Final Accuracy}

 \begin{figure}[h!]
\centering
\includegraphics[width=0.995\columnwidth]{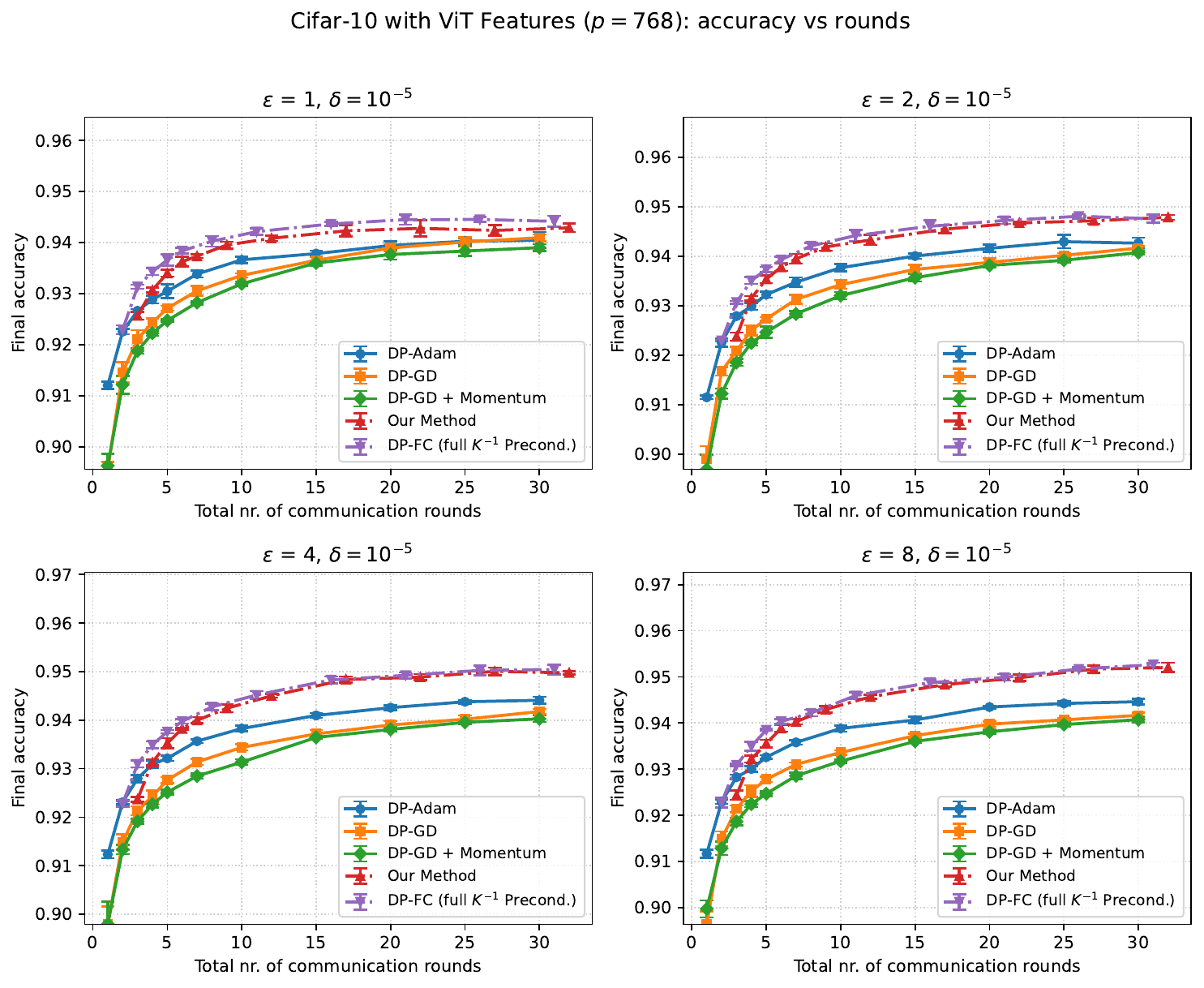}
\caption{CIFAR-10 total communication rounds vs. accuracy. }
\label{fig:cifar10_app_rounds}
\end{figure}

 \begin{figure}[h!]
\centering
\includegraphics[width=0.995\columnwidth]{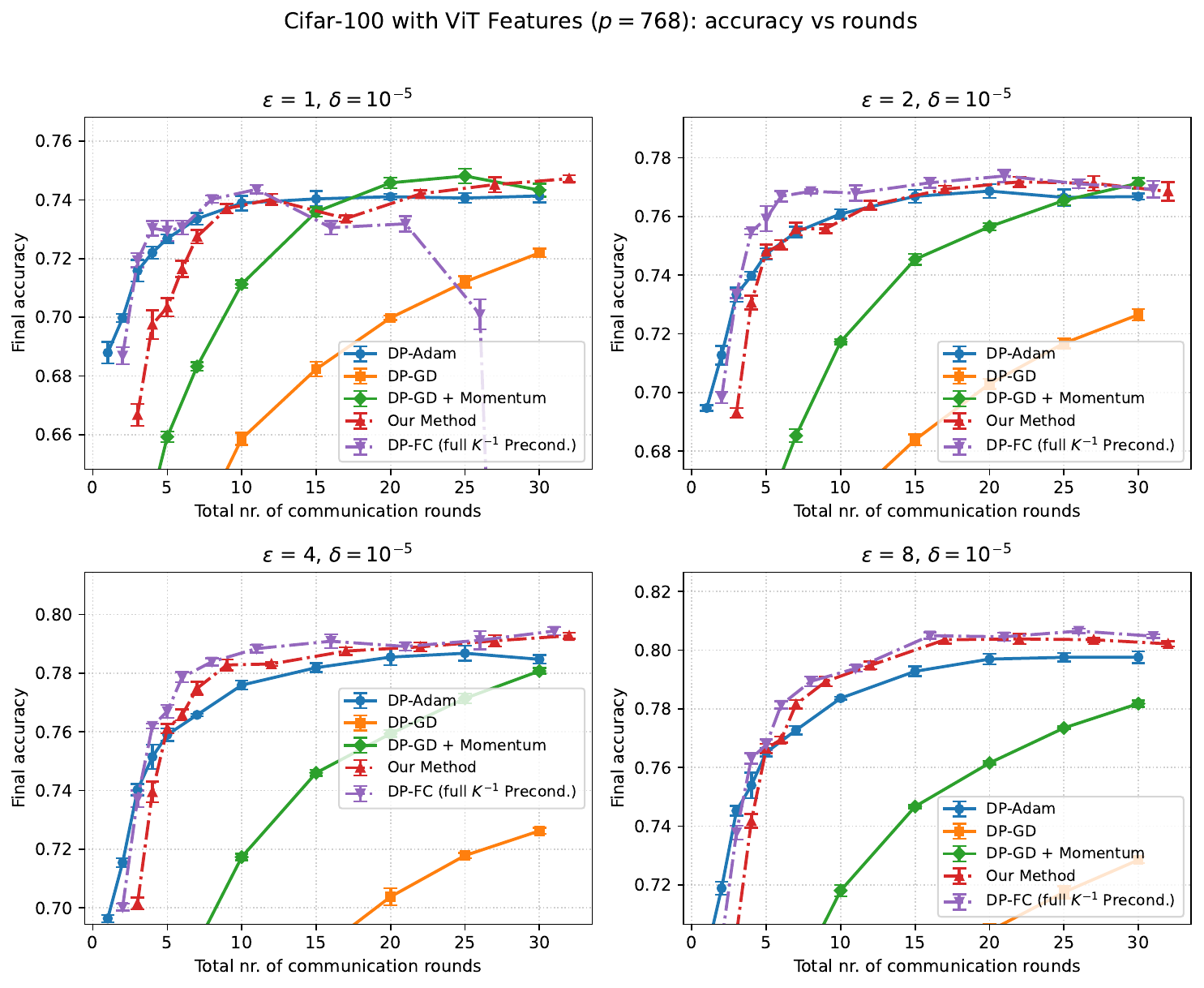}
\caption{CIFAR-100 total communication rounds vs. accuracy. }
\label{fig:cifar100_app_rounds}
\end{figure}

 \begin{figure}[h!]
\centering
\includegraphics[width=0.995\columnwidth]{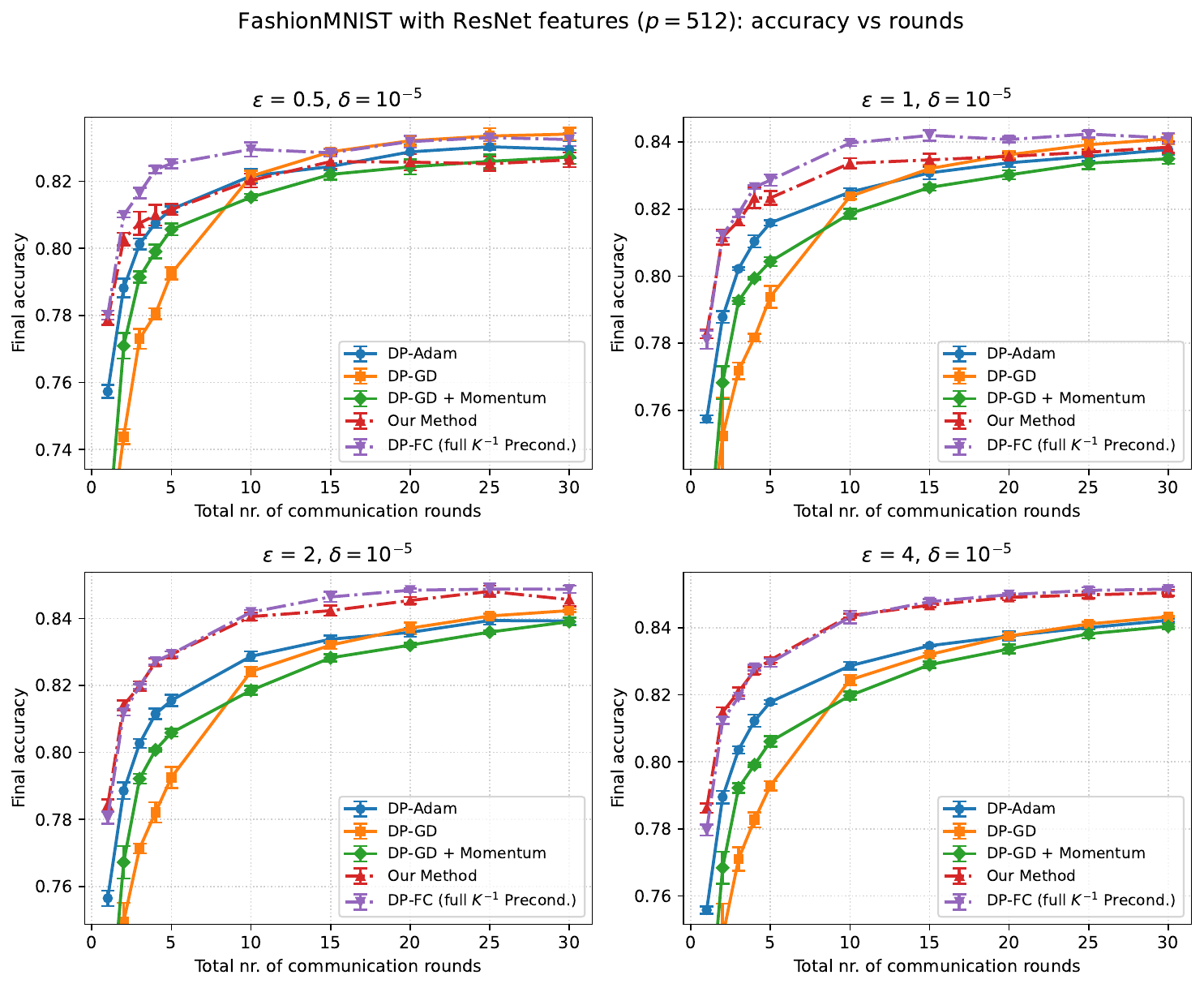}
\caption{FashionMNIST total communication rounds vs. accuracy. }
\label{fig:fmnist_app_rounds}
\end{figure}

\clearpage

\subsection{Total Transmitted Parameters vs. Final Accuracy}

 \begin{figure}[h!]
\centering
\includegraphics[width=0.995\columnwidth]{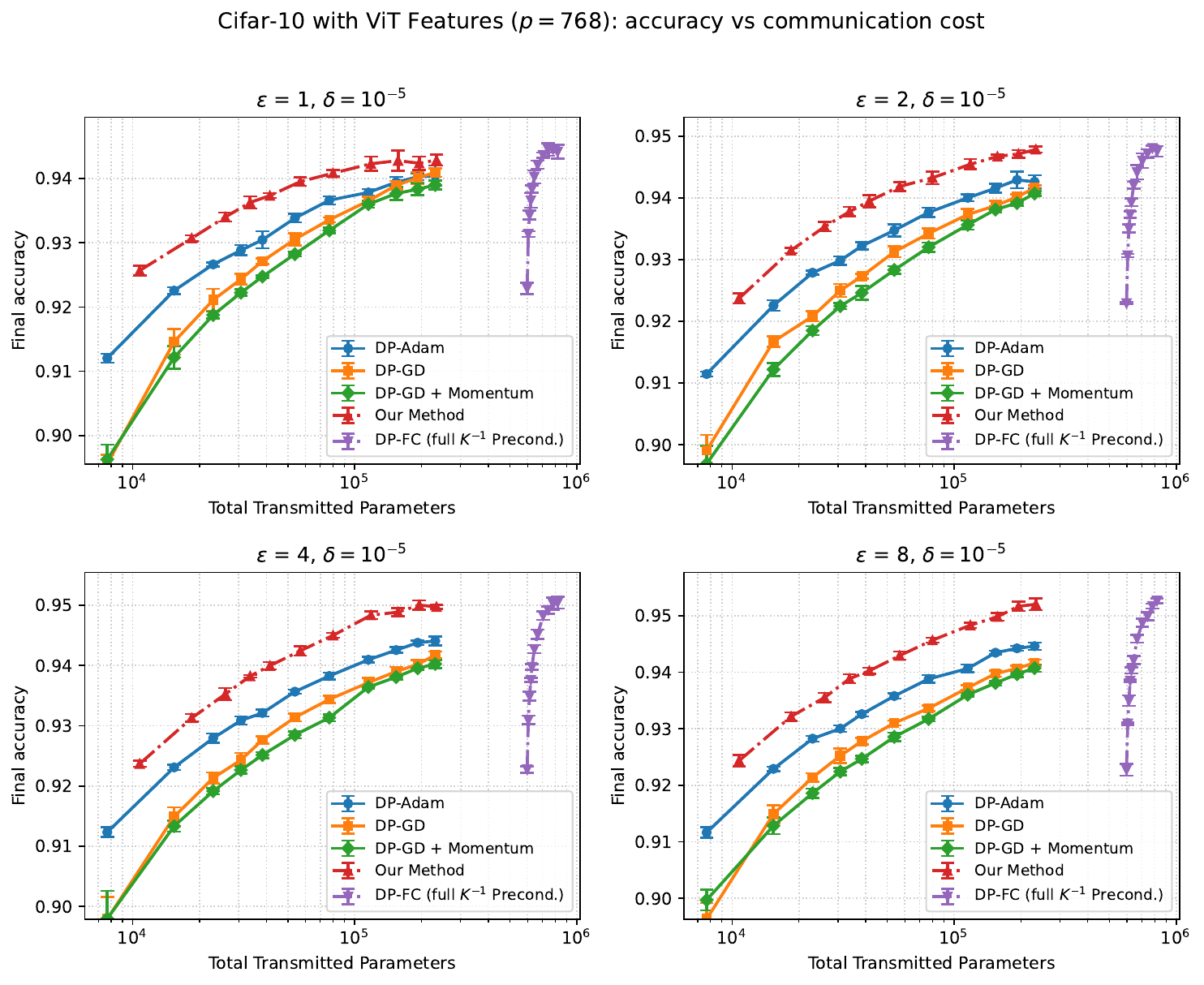}
\caption{CIFAR-10 total transmitted parameters vs. accuracy. }
\label{fig:cifar10_app_comm}
\end{figure}

 \begin{figure}[h!]
\centering
\includegraphics[width=0.995\columnwidth]{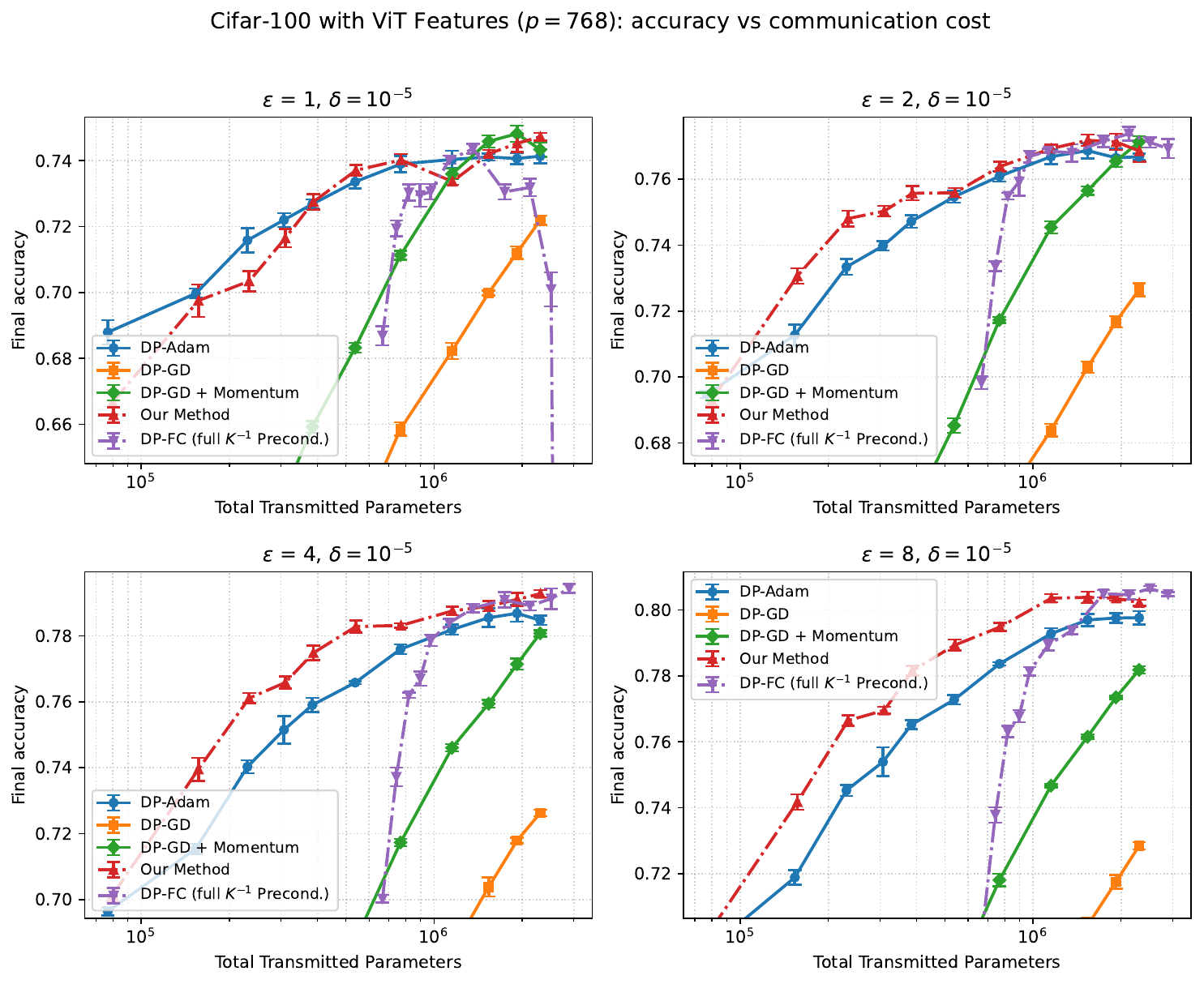}
\caption{CIFAR-100 total transmitted parameters vs. accuracy. }
\label{fig:cifar100_app_comm}
\end{figure}

 \begin{figure}[h!]
\centering
\includegraphics[width=0.995\columnwidth]{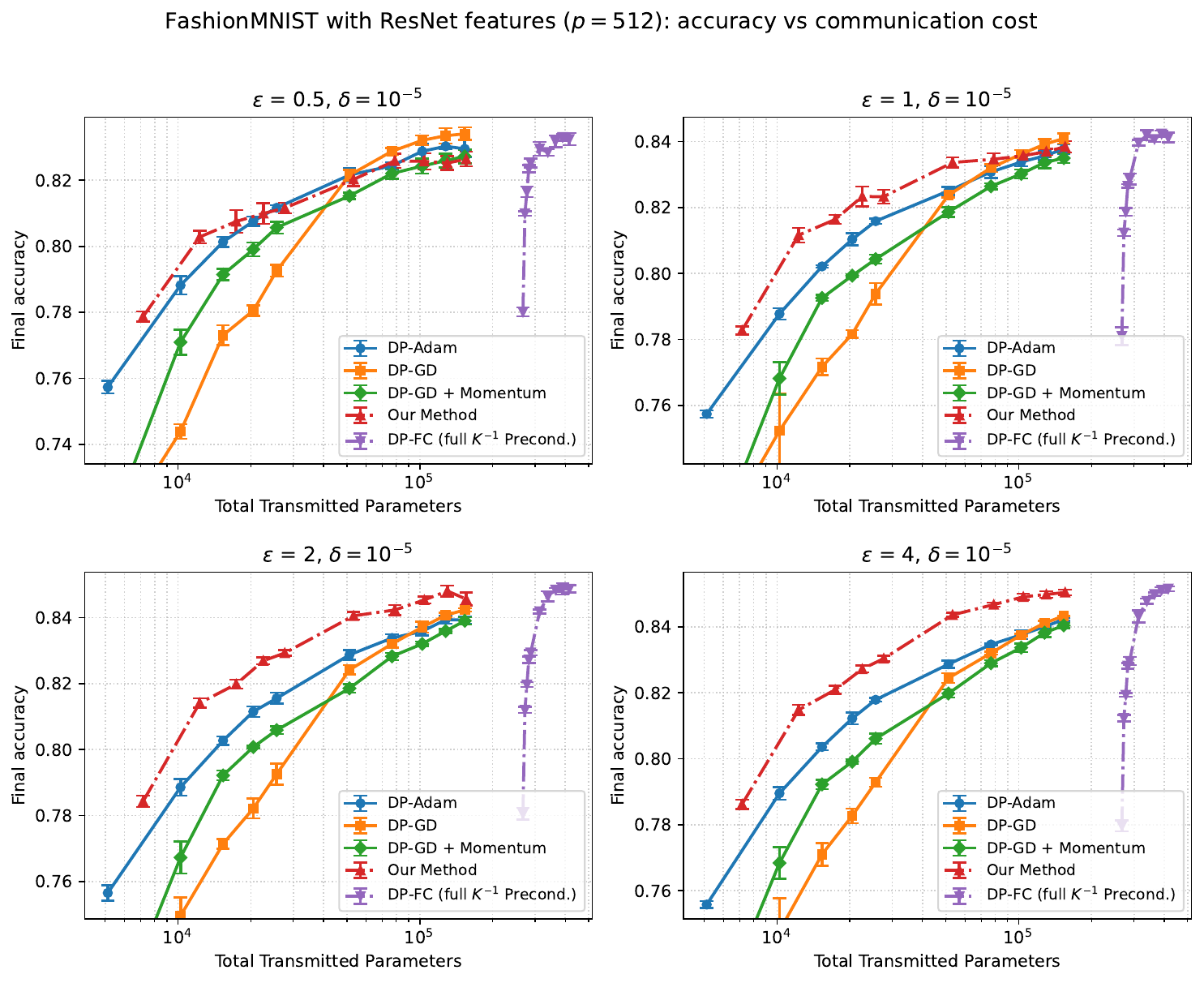}
\caption{FashionMNIST total transmitted parameters vs. accuracy. }
\label{fig:fmnist_app_comm}
\end{figure}

\clearpage

\section{Pseudocodes for Algorithms}

\begin{algorithm}[h!]
\caption{DP Power Method}
\label{alg:DP_power}
\begin{algorithmic}[1]
\REQUIRE Dataset $S=\{\mathbf{x}_i \in \mathbb{R}^{p}\}_{i=1}^n$, noise parameter $\sigma$, number of iterations $T$, block size $k$
\STATE \textbf{Initialize:} Choose $\mathbf{V}_0 \in \mathbb{R}^{p\times k}$ with orthonormal columns (e.g.\ $\mathbf{V}_0=\mathrm{QR}(\mathbf{G})$ for $G_{ij}\sim\mathcal{N}(0,1)$).
\STATE $C$: public bound on $\|\mathbf{x}_i\|_2$ (data clipped to norm $C$).
\STATE $\mathbf{M} \gets \sum_{i=1}^n \mathbf{x}_i \mathbf{x}_i^\top $
\FOR{$t=1,2,\ldots,T$}
  \STATE Sample $\mathbf{Z}_t \in \mathbb{R}^{p\times k}$ with entries i.i.d.\ $\mathcal{N}(0,1)$
  \STATE $\mathbf{V}'_t \gets \mathbf{M} \mathbf{V}_{t-1} + C^2 \sigma \mathbf{Z}_t $
  \STATE $[\mathbf{Q}_t,\mathbf{R}_t] \gets \mathrm{QR}(\mathbf{V}'_t)$ \quad \text{reduced QR}; $\mathbf{Q}_t^{\top}\mathbf{Q}_t = \mathbf{I}_k$
  \STATE $\mathbf{V}_t \gets \mathbf{Q}_t$
\ENDFOR
\RETURN $\mathbf{V}_T$
\end{algorithmic}
\end{algorithm}

\begin{algorithm}[h!]
\caption{Federated DP Power Method with Server-Side QR-Orthogonalization}
\label{alg:fed-dp-power}
\begin{algorithmic}[1]
\REQUIRE $N$ clients with local datasets $S=\{\mathbf{X}_i \in \mathbb{R}^{p\times n_i}\}_{i=1}^{N}$, privacy $(\varepsilon,\delta)$, iterations $T$, block size $k$
\STATE \textbf{Server initialization:} Choose $\mathbf{V}_0 \in \mathbb{R}^{p\times k}$ with orthonormal columns (e.g.\ $\mathbf{V}_0=\mathrm{QR}(\mathbf{G})$, $G_{ij}\!\sim\!\mathcal{N}(0,1)$).
\STATE Let $n=\sum_{i=1}^N n_i$. Let $C$ be a public bound on the 2-norm of the data samples (enforced by clipping).
\FOR{$t=1,2,\ldots,T$}
  \STATE \textbf{Server} broadcasts $\mathbf{V}_{t-1}$ to all clients
  \FOR{\textbf{each client} $m=1,\ldots,N$ \textbf{in parallel}}
    \STATE Compute local moment: \quad $$\mathbf{M}^{(m)} \gets \sum_{j=1}^{n_m} \mathbf{x}^{(m)}_j (\mathbf{x}^{(m)}_j)^{\top} $$
    \STATE Sample $\mathbf{Z}^{(m)}_t \in \mathbb{R}^{p\times k}$ with entries i.i.d.\ $\mathcal{N}(0,1)$
    \STATE Form local contribution: \quad $$\mathbf{Y}^{(m)}_t \gets \mathbf{M}^{(m)}\mathbf{V}_{t-1} + C^2\frac{\sigma}{\sqrt{N}} \mathbf{Z}^{(m)}_t$$
    \STATE Send $\mathbf{Y}^{(m)}_t$ to server \quad (optionally via secure aggregation)
  \ENDFOR
  \STATE \textbf{Server} aggregates: \quad $\mathbf{V}'_t \gets \sum_{m=1}^N \mathbf{Y}^{(m)}_t$
  \STATE \textbf{Server} QR step: \quad $[\mathbf{Q}_t,\mathbf{R}_t] \gets \mathrm{QR}(\mathbf{V}'_t),$
  (reduced QR; $\mathbf{Q}_t^\top\mathbf{Q}_t=\mathbf{I}_k$)
  \STATE \textbf{Server} sets $\mathbf{V}_t \gets \mathbf{Q}_t$
\ENDFOR
\RETURN $\mathbf{V}_T$
\end{algorithmic}
\end{algorithm}

\begin{algorithm}[h!]
\caption{Federated Private Estimation of the Covariance based on Gaussian mechanism~\citep{dwork2014analyze}}
\label{alg:fed-analyze-gauss}
\begin{algorithmic}[1]
\REQUIRE $N$ clients with local examples $\{\mathbf{x}^{(m)}_i \in \mathbb{R}^p\}_{i=1}^{n_m}$, orthonormal basis $\mathbf{V}_k \in \mathbb{R}^{p\times k}$, privacy $(\varepsilon,\delta)$, max 2-norm (or clipping bound) $C>0$
\STATE \textbf{Server:} set $n \gets \sum_{m=1}^N n_m$ and broadcast $\mathbf{V}_k$ to all clients
\FOR{\textbf{each client} $m=1,\ldots,N$ \textbf{in parallel}}
  \STATE Project:
  $$
     \mathbf{Y}^{(m)}_k \gets \mathbf{V}_k^{\top}
     \Big(\frac{1}{n} \sum_{i=1}^{n_m} \mathbf{x}^{(m)}_i \mathbf{x}^{(m)\top}_i \Big)
     \mathbf{V}_k
     \quad \in \mathbb{R}^{k\times k}
  $$
  \STATE Draw $\mathbf{Z}^{(m)} \in \mathbb{R}^{k\times k}$ symmetric with i.i.d.\ $\mathcal{N}(0,1)$ entries on the upper triangular part:
$$
Z^{(m)}_{ij} \stackrel{\text{i.i.d.}}{\sim} \mathcal{N}(0,1) \text{ for } i \le j, \qquad Z^{(m)}_{ji} = Z^{(m)}_{ij} \text{ for } i < j.
$$
  \STATE \textbf{Compute DP release:} 
  $$
  \widetilde{\mathbf{Y}}^{(m)}_k \gets \mathbf{Y}^{(m)}_k + \frac{C^2\sigma'}{n\sqrt{N}} \mathbf{Z}^{(m)}.
  $$
  \STATE Upload $\widetilde{\mathbf{Y}}^{(m)}_k$ to the server
\ENDFOR
\STATE \textbf{Server aggregation:} 
$
\widetilde{\mathbf{Y}}_k \gets \sum_{m=1}^N \widetilde{\mathbf{Y}}^{(m)}_k
$
\RETURN $(\mathbf{V}_k, \widetilde{\mathbf{Y}}_k)$ \quad \textit{(the rank-$k$ private approximation $\mathbf{V}_k\widetilde{\mathbf{Y}}_k\mathbf{V}_k^\top$ of the covariance is formed locally if needed)}
\end{algorithmic}
\end{algorithm}

\begin{algorithm}[h!]
\caption{Private Rank Selection via Random Projection \& Gaussian mechanism}
\label{alg:fed-init-randproj}
\begin{algorithmic}[1]
\REQUIRE $N$ clients with local vectors $\{\mathbf{x}^{(m)}_i \in \mathbb{R}^p\}_{i=1}^{n_m}$, privacy $(\varepsilon,\delta)$, projection dimension $r$, clipping bound $\bar C>0$, failure level $\rho \in (0,1)$
\STATE Server broadcasts a seed and each client forms the (same) projector:
$$
\mathbf{G} \in \mathbb{R}^{p\times r}, \quad \mathbf{G}_{ij} \sim \mathcal{N}(0,1/r)
$$
\FOR{\textbf{each client} $m=1,\ldots,N$ \textbf{in parallel}}
  \STATE Project and clip:
  $$
  \widetilde{\mathbf{x}}^{(m)}_i \gets\mathrm{clip}_{\bar C}(\mathbf{G}^\top \mathbf{x}^{(m)}_i) \in \mathbb{R}^r
  $$
  \STATE Compute the covariance:
  $$
  \mathbf{Y}^{(m)}_r \gets \sum_{i=1}^{n_m} 
  \widetilde{\mathbf{x}}^{(m)}_i \,\widetilde{\mathbf{x}}^{(m)\top}_i 
  \in \mathbb{R}^{r\times r}
  $$
 \STATE Draw client noise: $\mathbf{Z}^{(m)} \in \mathbb{R}^{r\times r}$ symmetric with i.i.d.\ $\mathcal{N}(0,1)$ entries on and above the diagonal:
$$Z^{(m)}_{ij} \stackrel{\text{i.i.d.}}{\sim} \mathcal{N}(0,1) \text{ for } i \le j, \qquad Z^{(m)}_{ji} = Z^{(m)}_{ij} \text{ for } i < j.
$$
  \STATE Compute DP release:
  $$
  \widetilde{\mathbf{Y}}^{(m)}_r \gets \mathbf{Y}^{(m)}_r + \frac{\sigma}{\sqrt{N}} \,\mathbf{Z}^{(m)},
  $$
  where $\sigma$ is adjusted by Lemma~\ref{lem:dp_initialization}.
  \STATE \textbf{Upload} $\widetilde{\mathbf{Y}}^{(m)}_r$.
\ENDFOR
\STATE \textbf{Aggregation at the server:}
$
\widetilde{\mathbf{Y}}_r \gets \sum_{m=1}^N \widetilde{\mathbf{Y}}^{(m)}_r
$
\STATE \textbf{Select the rank:}
Set
$$
\theta_{\mathrm{noise}} \ \gets\ (2\sqrt{r}+t)\,\sigma,
$$
where
$$
t \gets 2\sqrt{\log(2/\rho)}.
$$
Compute the spectrum:
$$
\lambda_1(\widetilde{\mathbf{Y}}_r)\ge \cdots \ge \lambda_r(\widetilde{\mathbf{Y}}_r).
$$
Define
$$
  \mathcal{I} \ \gets\ \Big\{\, i \in \{1,\ldots,r\}\ :\ \lambda_i(\widetilde{\mathbf{Y}}_r)\ >\ \theta_{\mathrm{noise}} \Big\},
$$
and set
$$
\widehat{k} \gets |\mathcal{I}|.
$$
\RETURN The estimated rank $\widehat{k}$
\end{algorithmic}
\end{algorithm}

\end{document}